\documentclass[11pt]{article}

\usepackage{enumerate}
\usepackage{optidef}
\usepackage[utf8]{inputenc}
\usepackage[english]{babel}

\usepackage[letterpaper,margin = 1in]{geometry}

\usepackage{graphicx}
\usepackage{subcaption}
\usepackage{xcolor}
\usepackage{tikz}
\usepackage{booktabs}
\usepackage{fancyhdr}
\usepackage{microtype}
\usepackage{indentfirst}
\usepackage{hyperref}
\usepackage{cleveref}
\usepackage{natbib}
\usepackage{listings}

\usepackage{amsmath}
\usepackage{amsthm}
\usepackage{amssymb}
\usepackage{amsfonts}
\usepackage{mathtools, thmtools, thm-restate}
\usepackage{algorithm}
\usepackage{algcompatible}
\usepackage{float}

\newtheorem{thm}{Theorem}
\newtheorem{Example}{Example}

\newtheorem{observation}{Observation}

\newtheorem{lemma}[thm]{Lemma}
\newtheorem{corollary}[thm]{Corollary}

\newtheorem{claim}[thm]{Claim}

\lstdefinestyle{defaultstyle}{
	backgroundcolor = \color{lightgray!40!white},
	basicstyle = \ttfamily\footnotesize,
	keywordstyle = \color{purple},
	numbers = left,
	numberstyle = \tiny\color{gray}
}
\title{Ski Rental with Distributional Predictions of Unknown Quality\thanks{Supported in part by NSF award 2228995.}}
\author{Qiming Cui\\Johns Hopkins University\\Baltimore, MD USA\\ \texttt{qcui6@jhu.edu} \and Michael Dinitz\\Johns Hopkins University\\Baltimore, MD USA\\ \texttt{mdinitz@cs.jhu.edu}}
\date{}

\begin{document}

\maketitle

\begin{abstract}
   We revisit the central online problem of \emph{ski rental} in the ``algorithms with predictions'' framework from the point of view of \emph{distributional} predictions.  Ski rental was one of the first problems to be studied with predictions, where a natural prediction is simply the number of ski days.  But it is both more natural and potentially more powerful to think of a prediction as a \emph{distribution} $\hat p$ over the ski days.  If the true number of ski days is drawn from some true (but unknown) distribution $p$, then we show as our main result that there is an algorithm with expected cost at most $OPT + O\left(\min \left(\max(\eta,1) \cdot  \sqrt{b},\ b \log b \right) \right)$, where $OPT$ is the expected cost of the optimal policy for the true distribution $p$, $b$ is the cost of buying, and $\eta$ is the Earth Mover's (Wasserstein-1) distance between $p$ and $\hat p$.  Note that when $\eta < o(\sqrt{b})$ this gives additive loss less than $b$ (the trivial bound), and when $\eta$ is arbitrarily large (corresponding to an extremely inaccurate prediction) we still do not pay more than $O(b \log b)$ additive loss.  An implication of these bounds is that our algorithm has \emph{consistency} $O(\sqrt{b})$ (additive loss when the prediction error is $0$) and \emph{robustness} $O(b \log b)$ (additive loss when the prediction error is arbitrarily large).  Moreover, we do not need to assume that we know (or have any bound on) the prediction error $\eta$, in contrast with previous work in robust optimization which assumes that we know this error.
    
    We complement this upper bound with a variety of lower bounds showing that it is essentially tight: not only can the consistency/robustness tradeoff not be improved, but our particular loss function cannot be meaningfully improved. 
\end{abstract}

\section{Introduction}
\label{sec:introduction}

The paradigm of algorithms with predictions has emerged as an important framework for enhancing decision-making under uncertainty, making it possible to combine the benefits of machine learning (ML) with traditional worst-case algorithmic guarantees. The goal is to use advice (presumably from an ML model) in a way that allows us to approach optimal decisions when this advice is accurate (known as a \emph{consistency} guarantee), but also allows us to fall back on worst-case guarantees when the advice is inaccurate (known as a \emph{robustness} guarantee).  This allows us to simultaneously get the benefits of ML (extremely good algorithms when the learning accurately reflects the input) and the benefits of traditional algorithms (worst-case performance) without suffering from the downsides of ML (poor performance when the input is dramatically different from the past) or the downsides of traditional algorithms (weak average-case performance due to emphasis on the worst-case).  

The ski rental problem stands as a cornerstone of online algorithms, epitomizing the fundamental rent-or-buy dilemma where an agent must balance per-use costs against irreversible investments without knowledge of the problem horizon. The total number of ski days $T$ is unknown to the skier, and every day the skier must either rent skis (for a cost of $1$) or buy skis (for a cost of $b > 1$).  If they buy skis, then they no longer have to rent, since they now own.  Clearly if the number of ski days $T$ is at least $b$ then buying at the beginning of the season is optimal, while if the $T < b$ then renting every day is optimal.  But without knowing $T$, what should we do?  A classical result of \citet{Karlin1986CompetitiveSC} is that the optimal deterministic strategy achieves a competitive ratio of 2 by renting until day~$b-1$ and then buying (i.e., this strategy never pays more than twice the optimal strategy knowing $T$).  

Due to its fundamental nature and importance, ski rental was one of the first settings to be looked at in the context of predictions: \citet{kumar2024improvingonlinealgorithmsml} showed how to use a prediction $\hat T$ of the skiing duration $T$ in a way that performs well when the prediction is accurate, but still has reasonable worst-case bounds if the prediction is inaccurate.
An important feature of \citet{kumar2024improvingonlinealgorithmsml}, as well as most (but not all) other work on algorithms with predictions, is that the prediction is fundamentally a \emph{point}: we are given a prediction of the number of days that we will be skiing.  But predictions from ML models are inherently stochastic, and moreover since we want to do well in settings where we expect ML to do well (i.e., when the future looks like the past), we might expect $T$ to also be drawn from a distribution (hopefully one that is related to the past empirical distribution).  So we might expect both our prediction and the truth to really be \emph{distributional}.\footnote{Clearly a point distribution is a distribution, and so distributional predictions and truths strictly generalize the traditional point case.} While one could of course sample or take the MLE of the distribution to turn it into a point, it is not hard to see that this can be extremely lossy.  Consider a bimodal distribution with one mode at time 2 and one mode at time $2b$.  Clearly collapsing this distribution to any point will result in a prediction that is quite inaccurate. 

Motivated by this thought process, we study ski rental with \emph{distributional predictions}: the true value $T$ is drawn from some distribution $p$, and we are given a prediction $\hat p$.  Our goal is, as always, to achieve good performance when $\hat p$ and $p$ are close (consistency) while still providing worst-case guarantees when $p$ and $\hat p$ are arbitrarily far apart.  We discuss our results in Section~\ref{sec:our results and contributions}, but at a high level, we are able to do exactly that: we design an algorithm which uses a distributional prediction in an essentially optimal way to achieve small additive loss when $p$ and $\hat p$ are close, while also being robust to incorrect predictions. 

We note that we are not the first to consider such a setting.  Most notably, ski rental with some type of distributional prediction was also studied by \citet{diakonikolas2021learning} and by \citet{besbes}.  We discuss these and other related work in Section~\ref{subsec:related work}, but at a high level, \citet{diakonikolas2021learning} focuses on the sample-complexity question of actually learning the true distribution, while \citet{besbes} looks at almost the same setting as us except from a robust optimization point of view: they assume that they know the distance between $p$ and $\hat p$ (or at least a good bound on it).  This enables them to get significantly stronger upper bounds, but is in contrast to essentially all other work in the algorithms with predictions framework, which does not typically assume such knowledge.  Moreover, they explicitly raise our setting as an open question: ``A natural question would be to understand the best achievable performance without knowledge of $\epsilon$ [distance between $p$ and $\hat p$]'' ~\cite{besbes}.

\section{Our results and contributions}
\label{sec:our results and contributions}

\subsection{Problem setup and definitions}
Before we can state our results, we need to introduce the problem setup and some basic definitions more formally.  We normalize so the rental cost is one unit cost per day and the buying cost is $b$. An adversary chooses a distribution $p$ over $[N]$ for some large finite value $N$, and the total number of ski days $T$ is drawn from $p$.  We are given as a prediction a distribution $\hat p$ over $[N]$, and our goal is to design an algorithm (or policy) which outputs a decision for every day $i$ whether to rent or buy (assuming that we are still skiing on day $i$).  Note that once the policy decides to buy, there are no longer any decisions for it to make.  A policy can be deterministic or randomized; our algorithms will be deterministic, but our lower bounds will apply to both randomized and deterministic policies.  Clearly any deterministic policy is of the form ``rent for $K$ days, then buy on day $K+1$", and we denote such a policy as $A_K$. To simplify notation, we also let $A_0$ denote the policy of ``buy at the beginning'' (i.e., after $0$ days) and we let $A_\infty$ denote the policy of renting forever (until the process ends). 

Given a prediction $\hat{p}$ with some unknown truth $p$, we design an algorithm $ALG(\hat{p})$. We use the notation $ALG_p(\hat{p})$ to refer to running algorithm $ALG$ obtained from $\hat{p}$ when the true distribution is $p$. If the true number of ski days is $T$ (possibly drawn from a distribution $p$), we abuse notation and let $ALG_T(\hat p)$ denote running the algorithm with true value $T$.  Since our algorithm naturally depends on $\hat{p}$, we use $ALG$ or $ALG(\hat{p})$ interchangeably. Let $c(\cdot)$ be the (expected) cost function, so $c(ALG_T)$ is the expected cost for $ALG$ with a point truth $T$ and $c(ALG_p) = \mathbb{E}_{T \sim p}[c(ALG_T)]$. Moreover, each policy has a corresponding expected cost:
\begin{equation*} \label{eq:cost} 
\begin{aligned}
c((A_K)_p)
&= \sum_{t \le K} t\,p_t \;+\; \sum_{t > K} (K+b)\,p_t,
\qquad &&\text{for } K \ge 0, \\
c((A_\infty)_p)
&= \sum_{t} t\,p_t.
\end{aligned}
\end{equation*}
The optimal policy for $p$ (which we denote by $OPT_p$) is the policy that minimizes this expected cost, i.e., $c(OPT_p) = \arg \min_{K \in \{0, \dots, N\} \cup \{\infty\}} c((A_K)_p)$. 

In order to claim that our algorithm does well when $\hat p$ is ``close'' to $p$, we need some definition of distance between distributions.  We will use Earth Mover's distance ($EMD$) (also known as the \emph{Wasserstein-1 metric}) to measure the error between the predicted distribution and the truth. This distance is formally defined as $EMD(P, Q) = \inf_{\gamma \in \Pi(P, Q)} \mathbb{E}_{(x, y) \sim \gamma}[|x - y|]$ where $\Pi(P, Q)$ denotes the set of joint distributions (or couplings) with marginals $P$ and $Q$.  Informally, this is equivalent to the minimum total mass movement necessary to turn distribution $P$ into $Q$.  Slightly more carefully, a \emph{transport plan} $\pi(x, y)$ specifies the amount of mass transported from point $x$ to point $y$ (with $\pi(x,y) \geq 0$ for all $x,y$), and satisfies the marginal constraints
$\sum_y \pi(x, y) = P(x)$ and $ \sum_x \pi(x, y) = Q(y)$.  The \emph{optimal transport plan} is the plan that minimizes $\sum_{x,y} \pi(x,y) |x-y|$, and this minimum cost is $EMD(P,Q)$.   
For brevity, we also use $EMD$ when the chosen distributions are clear from context.

\paragraph{Benchmark: expected optimal policy cost.}  Our aim is to design an algorithm $ALG$ with small additive loss compared to the optimal policy for $p$.  More carefully, the \emph{additive loss} of an algorithm $ALG$ on distribution $p$ is $\text{diff}_p \coloneqq c(ALG_p) - c(OPT_p)$, and we want to minimize this quantity.

Note that we choose our benchmark to be the expected cost of the optimal policy, rather than the optimum in hindsight.  If $p$ is a point distribution (i.e., the adversary just selects a fixed number of days $T$) then these are the same thing, but in general they can be quite different.  While it may not be clear a priori which is the better benchmark, since any algorithm is actually a policy, a ``fair'' comparison is to the best policy; this allows us to quantify the loss from an inaccurate prediction, rather than combining the loss from an inaccurate prediction with inherent loss due to stochasticity.  

Moreover, it is not hard to show that it is simply not possible to achieve good performance compared to the optimum in hindsight: our additive error must be $\Omega(b)$. Consider the following example: $p_{\frac{b}{2}}=\frac{1}{2}$ and $p_{2b}=\frac{1}{2}$. Then it is not hard to see that both $A_0$ and $A_{b/2}$ are the optimal policies for $p$, each of which has expected cost $b$. On the other hand, the expected optimum in hindsight has cost $\frac{1}{2} \cdot \frac{b}{2} + \frac{1}{2} \cdot b = \frac{3}{4}b$. So even if we assume that we know $p$ precisely, our additive loss compared to the optimum in hindsight is $\Omega(b)$.  By using the optimal policy as our baseline (a more fair comparison) we will be able to get additive loss that is sublinear in $b$.

\subsection{Our results} \label{sec:results}

\paragraph{Main upper bound.}  We begin with our main result.

\begin{restatable}{thm}{newtheorem} \label{thm:new theorem}
    There is an algorithm $ALG$ which takes as input a distributional prediction $\hat p$ and has
    \[c(ALG_p) - c(OPT_p) \leq O\left( \min\left( \sqrt{b}\cdot \max\left(EMD(\hat p, p), 1\right),\ b\log b\right) \right).\]
\end{restatable}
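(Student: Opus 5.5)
We would prove the theorem by first isolating the two terms of the minimum and then combining them without knowledge of $\eta = EMD(\hat p,p)$. The key structural fact is the Lipschitz property of policy costs in the true distribution: for a fixed threshold $K$, the per-instance cost $c((A_K)_T)=\min(T,K)+b\cdot\mathbf 1[T>K]$ is $1$-Lipschitz in $T$ except for a jump of size $b$ at $T=K$. Pure EMD-closeness therefore does not control $|c((A_K)_p)-c((A_K)_{\hat p})|$ beyond $\eta + b\cdot(\text{mass moved across }K)$, and an adversary can place a lot of mass right at the threshold. To defeat this we would \emph{smooth} the threshold. Rather than running $A_{\hat K}$ for $\hat K$ optimal for $\hat p$, we would compare against the average of $A_{K'}$ over a window of width $w$. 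An averaged threshold has cost that is Lipschitz in $T$ with constant $O(1+b/w)$, so by Kantorovich--Rubinstein duality its cost changes by at most $O((1+b/w)\eta)$ between $\hat p$ and $p$.

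Our algorithm would be deterministic, so we would derandomize this idea. From $\hat p$ we choose a single $K$ within a window of width $w=\sqrt b$ around $\hat K$ (or its smoothed optimum) that minimizes a \emph{smoothed} predicted cost. We then show that for every true $p$, $c((A_K)_p)\le c(OPT_p)+O(\sqrt b\,\max(\eta,1))$. The plan for this step is as follows. Define $\tilde c_p(K)$ as the average of $c((A_{K'})_p)$ over $K'\in[K,K+\sqrt b]$. Shifting the threshold by at most $\sqrt b$ costs at most an additive $\sqrt b$ relative to the window average; this needs a one-sided argument, namely that delaying buying by $d$ days costs at most $d$ extra. The smoothed cost is then $O(\sqrt b)$-Lipschitz with respect to moving mass, so $\tilde c_p$ and $\tilde c_{\hat p}$ differ uniformly by $O(\sqrt b\,\eta)$. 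Comparing $\arg\min \tilde c_{\hat p}$ with $OPT_p$ then gives the bound with $w=\sqrt b$, which balances the two terms $w$ and $(b/w)\eta$.

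For the $b\log b$ term we would hedge against the classical robust algorithm. Concretely, we cap the threshold: the final algorithm uses $\min(K,\,b)$, or simply follows $A_K$ but buys by day $b$. A capped threshold costs at most $2\min(T,b)$ on every $T$, hence at most $2\,c(OPT_p)$. That multiplicative bound translates only to an additive $O(b)$ loss if $c(OPT_p)\le b$, which always holds since $c((A_0)_p)=b$. So the cap actually gives an additive $O(b)$, and we must check that capping does not damage the consistency bound. Since $OPT_p$ may be taken to be either $A_\infty$ or a threshold at most $b$, we would argue that truncating at $b$ loses nothing relative to $OPT_p$ in the comparison above.

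The main obstacle is this combination step: making a single deterministic choice simultaneously $O(\sqrt b\max(\eta,1))$-consistent and robust without knowing $\eta$. If the naive cap fails (e.g.\ when $\hat K$ is small and $p$ is concentrated just above it), we would try first the fallback of running $A_{K}$ with $K$ chosen from a geometric grid of candidate thresholds, $\sqrt b,2\sqrt b,4\sqrt b,\dots$, selected via $\hat p$. The $\log b$ factor would arise from the $O(\log b)$ scales, each contributing $O(b)$ in the worst case. Making this rigorous is where we expect most of the work to lie.
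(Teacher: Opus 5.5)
\paragraph{Consistency half: sound, and a different route.} Your consistency argument works, and it is much shorter than the paper's transport-map case analysis for the base bound. Two details have to be pinned down. First, $\tilde K$ must minimize the window average $\tilde c_{\hat p}$ over \emph{all} thresholds, including $\infty$; restricting to a window around $\hat K$ breaks the comparison step. Second, you must run the right endpoint $A_{\tilde K+\sqrt b}$. With these, "delaying by $d$ days costs at most $d$ pointwise" and the $(1+\sqrt b)$-Lipschitz bound give
\[
c((A_{\tilde K+\sqrt b})_p)\le \tilde c_p(\tilde K)+\sqrt b\le \tilde c_{\hat p}(K_p)+(1+\sqrt b)\eta+\sqrt b\le c(OPT_p)+2\sqrt b+2(1+\sqrt b)\eta ,
\]
where $K_p$ is the optimal threshold for $p$.

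\paragraph{Robustness half: the gap.} This half is the real content of the theorem, and it fails.

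Capping at $b$ cannot work, and your justification for it is false: an optimal threshold need not be $\infty$ or at most $b$. Take $\hat Q_t=r^t$ with $r=1-2/b$ for $t\le K=\lceil\tfrac12 b\ln b\rceil$, then decay factor $1-\tfrac1{2b}$. Here $c((A_{t+1})_{\hat p})-c((A_t)_{\hat p})=\hat Q_t-b\hat p_{t+1}$ equals $-\hat Q_t$ for $t<K$ and $\tfrac12\hat Q_t$ for $t\ge K$. So the unique optimum is $A_K$ with $K=\Theta(b\log b)$. For $p=\hat p$ (so $\eta=0$), every threshold $B\le b$ loses $\tfrac b2(r^B-r^K)=\Omega(b)$.

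More fundamentally, Theorem~\ref{thm:new lower bound} shows that $O(\sqrt b)$ consistency forces $\Omega(b\log b)$ robustness. So no $O(b)$-robust modification can stay consistent. Separately, a threshold below $b$ is not pointwise at most $2\min(T,b)$; for example $A_0$ at $T=1$ costs $b$.

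The geometric-grid fallback is not yet an argument. No selection rule is given. Restricting to a doubling grid can by itself lose $\Omega(b)$ at $\eta=0$ (use the instance above with the switch point at $K\approx 1.9b$). And ``$O(b)$ per scale'' does not describe a single deterministic threshold, whose worst-case additive loss is essentially $\max(K^*,b)$. What you must show is that the final threshold is always $O(b\log b)$, via a modification costing only $O(\sqrt b\max(\eta,1))$.

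\paragraph{The missing idea.} Truncate based on the predicted tail, and exploit the decay that optimality forces on that tail. Let $U$ be the first day with $\hat Q_U\le 1/\sqrt b$, and run $A_{\min(\hat K,U)+\sqrt b}$.

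Consistency survives. Switching to the earlier threshold $U+\sqrt b$ costs at most
\[
b\sum_{t>U+\sqrt b}p_t\le b\bigl(\hat Q_U+\eta/\sqrt b\bigr)\le \sqrt b(1+\eta),
\]
because under the optimal coupling at most $\eta/\sqrt b$ mass crosses a gap of length $\sqrt b$.

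Robustness follows because $\min(\hat K,U)=O(b\log b)$. Optimality of $\hat K$ gives $\sum_{i=L}^{\hat K-1}\hat Q_i\le b(\hat Q_L-\hat Q_{\hat K})$ for all $L<\hat K$. This forces $\hat Q_{\hat K}\le(1-\tfrac1b)^{\hat K}$, so $\hat K<\tfrac12 b\ln b$ whenever $\hat Q_{\hat K}>1/\sqrt b$. It also forces the predicted tail to halve within $2b$ days while still before $\hat K$, so $O(\log b)$ halvings reach level $1/\sqrt b$.

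If you keep your smoothed $\tilde K$ as the base instead of $\hat K$, you must re-derive this decay from $\tilde c_{\hat p}(\tilde K)\le\tilde c_{\hat p}(L)$. That yields the shifted inequality $\sum_{i=L+\sqrt b}^{\tilde K-1}\hat Q_i\le b\,\hat Q_L$, after which the same halving argument goes through.
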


To interpret this bound, first consider the case where $p = \hat p$.  In this case the additive loss $c(ALG_p) - c(OPT_p)$ will be $O(\sqrt{b})$.  So our algorithm has \emph{consistency} of $O(\sqrt{b})$.  As the prediction gets increasingly inaccurate (i.e., $EMD(\hat p, p)$ grows), our algorithm has additive loss that stays at $O(\sqrt{b})$ until $EMD(\hat p, p) > 1$, and then grows linearly as $O(\sqrt{b} \cdot EMD(\hat p, p))$ until $EMD(\hat p, p)$ reaches $\Theta(\sqrt{b} \log b)$, after which the loss stays at $O(b \log b)$.  

The most obviously comparable previous work is \citet{besbes}.  In their setting they assume that $EMD(\hat p, p)$ is known to the algorithm, although of course $p$ itself is not known.  In this setting they achieve an improved upper bound of $O\left(\sqrt{b \cdot EMD(\hat p, p)} + EMD(\hat p, p)\right)$.  Moreover, in their setting robustness becomes essentially trivial: we can always get worst-case additive loss of $b$ by just checking whether the guarantee of the algorithm is larger than $b$, and if it is, switching the algorithm to just buy on the first day.  In our setting, though, robustness becomes a key challenge.  It is obviously important, since if we just run a non-robust algorithm blindly without knowing $EMD(\hat p, p)$ we might incur arbitrarily large additive loss.  And yet it is not obvious how to achieve robustness without harming performance in the low $EMD(\hat p, p)$ regime (consistency).

To prove Theorem~\ref{thm:new theorem}, we first design a non-robust base algorithm which has our desired loss except without robustness, i.e., it has loss $O\left( \min\left( \sqrt{b} \cdot \max\left(EMD(\hat p, p), 1\right)\right)\right)$.  Our algorithm is quite simple: we compute the optimal policy for the prediction $\hat p$, and then delay buying by an additional $\sqrt{b}$ days.  It turns out that, even though \citet{besbes} focused on the fundamentally different setting of known $EMD(\hat p, p)$ and do not provide any lemmas or theorems which directly apply to this algorithm, our bound can be derived from equation (E-28) in the online appendix of their paper.  Nevertheless, we believe that our analysis is somewhat more direct and intuitive since it directly uses properties of the optimal transport plan, so we include it for completeness (and since it was developed independently).  We give our full proof of this property in Appendix~\ref{appendix:analysis of the main algorithm}.

With the base algorithm in hand, the question is now how to achieve robustness.  We do this by adding a truncation condition based on the tail probability mass in $\hat p$.  Slightly more carefully, we compute the first day $U$ where the total probability mass in $\hat p$ on days larger than $U$ is at most $1/\sqrt{b}$.  If $U$ is smaller than the optimal policy threshold $\hat K$ for $\hat p$, then we switch to instead use $A_{U + \sqrt{b}}$ (buy on day $U + \sqrt{b}+1$) rather than use $A_{\hat K + \sqrt{b}}$ (buy on day $\hat K + \sqrt{b}+1$) as our base algorithm would.  

We need to show that this modification does not increase the loss when $EMD(\hat p, p)$ is small, and also need to show that it actually implies robustness of $O(b \log b)$.  For the former, this boils down to showing that if the truncation actually modifies the algorithm, it must be because the optimal buying time for $\hat p$ occurs when there is very little probability mass left, and thus this is a low-probability event that only affect the loss by a constant factor.  While this is only ``low probability'' with respect to the prediction $\hat p$ and not the truth $p$, we only need to apply this to the setting when $EMD(\hat p, p) \leq \sqrt{b} \log b$, and so we can translate between $p$ and $\hat p$ without too much extra loss.  

For the latter, if $\hat K<U$, by the fact that the predicted tail mass at time $\hat K$ is still non-negligible (because $U$ has not been reached yet), and arguing that the optimality of $A_{\hat{K}}$ forces the predicted tail mass to decrease quickly enough, we are able to show that $\hat K$ cannot be too large, which is itself enough to imply the desired robustness. If $U \leq \hat K$, the optimality of buying at $\hat K$ imposes a strong structural constraint on how the predicted tail can behave: intuitively, to keep $\hat K$ optimal, the predicted tail must lose some fraction of its remaining mass over a bounded time window. Thus, we are able to show that the time required for the predicted tail to drop to the level defining $U$ is at most $O(b\log b)$. The full proof of Theorem~\ref{thm:new theorem} can be found in Section~\ref{sec: new algorithm}.

\paragraph{Further robustness.}
Another interpretation of Theorem~\ref{thm:new theorem} is that we can get robustness of $O(b \log b)$ \emph{for free}.  We prove later that even if one does not care about robustness, no algorithm can have additive loss better than our $O\left( \sqrt{b} \cdot \max\left(EMD(\hat p, p), 1\right) \right)$ (with some caveats; see Theorem~\ref{thm:lower bound for the multiplicative sqrt{b}EMD}).  We manage to achieve this bound while also achieving robustness $O(b \log b)$.

But what if we want a worst-case guarantee below $O(b \log b)$? To obtain a stronger robustness guarantee, we introduce an additional robustification step that interpolates between Algorithm~\ref{alg:new algorithm} and the classical worst-case algorithm; this interpolation improves worst-case performance, but may come at the cost of degraded performance in the low-error regime. To do this, we follow the lead of \citet{kumar2024improvingonlinealgorithmsml} in the point prediction setting: we design an algorithm which takes an additional parameter $\lambda$ corresponding to how much we ``trust'' our prediction, and allows us to smoothly interpolate between the fully trusted setting ($\lambda = 0$) and the fully untrusted setting ($\lambda = 1$).

\begin{restatable}{thm}{newrobust}
    \label{thm: new robustification}
    There is an algorithm $ALG$ which takes as input a parameter $\lambda \in [0,1]$ and a distributional prediction $\hat p$, and has expected cost
    \[c(ALG_p) \leq \min \left\{ (1+\lambda)\left(c(OPT_p)+\mathcal{B}_{\hat{p}, p} \right) , \left(1+\frac{1}{\lambda}\right)c(OPT_p)\right\},\]
    where $\mathcal{B}_{\hat{p},p} = O\left( \min\left(   \sqrt{b} \max\left(EMD(\hat{p},p),1\right),  b \log b \right)\right)$.
\end{restatable}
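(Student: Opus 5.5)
We follow the approach of \citet{kumar2024improvingonlinealgorithmsml} for point predictions. Algorithm~\ref{alg:new algorithm} (the algorithm of Theorem~\ref{thm:new theorem}) is deterministic and buys on day $K'+1$ for some threshold $K'$ computed from $\hat p$. We replace this threshold by a threshold distribution that mixes the prediction-based threshold with the classical break-even threshold $b$.

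The natural first attempt is a deterministic interpolation. If $K' \ge b$, buy at $\lceil \lambda K' \rceil$ (or at $\lambda b$). Otherwise, buy at $\lceil K'/\lambda \rceil$ (or at $b/\lambda$). For a point truth $T$ this gives cost at most $(1+1/\lambda)\min(T,b)$. Hence $c(ALG_p) \le (1+1/\lambda)\,\mathbb{E}_p[\min(T,b)]$. However, $\mathbb{E}_p[\min(T,b)]$ is the expected optimum in hindsight, which lower bounds $c(OPT_p)$ rather than upper bounding it. This is actually the favorable direction for the robustness bound: since $c(OPT_p) \ge \mathbb{E}[\min(T,b)]$, we get $c(ALG_p)\le (1+1/\lambda)c(OPT_p)$. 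So robustness follows pointwise from the classical analysis and holds for every $p$.

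For the consistency bound we compare against the base algorithm rather than against the optimum in hindsight. Let $ALG^0$ denote the algorithm of Theorem~\ref{thm:new theorem}, with threshold $K'$. We show pointwise, for every $T$, that
\[
c(ALG_T) \le (1+\lambda)\,c(ALG^0_T).
\]
Taking the expectation over $T\sim p$ and applying Theorem~\ref{thm:new theorem} then gives $c(ALG_p)\le(1+\lambda)(c(OPT_p)+\mathcal{B}_{\hat p,p})$.

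We check the pointwise inequality in two cases, exactly as in the point-prediction analysis.

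\textbf{Case $K'\ge b$, buy at $\lceil\lambda b\rceil$.}
\begin{itemize}
\item If $T\le \lambda b$, both algorithms rent throughout and pay $T$.
\item If $\lambda b< T\le K'$, we pay about $\lambda b+b \le (1+\lambda)T$...
\end{itemize}
This fails when $T$ is only slightly above $\lambda b$. The threshold must therefore be chosen with care, and we use randomization to fix it. We take the threshold to be drawn from the distribution used by \citet{kumar2024improvingonlinealgorithmsml}, which has the two-sided guarantee $\min\{(1+\lambda)(\text{cost of following } K'),\ (1+1/\lambda)\min(T,b)\}$. Their randomized analysis is pointwise in $T$ and in the trusted threshold, namely it gives $c\le \frac{\lambda}{1-e^{-\lambda}}\cdot(\text{cost with } K')$. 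For the clean constants $(1+\lambda)$ and $(1+1/\lambda)$, the deterministic thresholds do work if we compare with $c(A_{K'})_T$ instead of $\min(T,b)$:
\begin{itemize}
\item Buying at $\lambda b$ when $K'\ge b$: for $T>\lambda b$, following $K'$ costs at least $\min(T,K'+b)\ge \lambda b$, and in fact at least $b$ whenever $T\ge b$. The remaining case $\lambda b<T<b$ is handled because $c(A_{K'})_T=T>\lambda b$ and we pay $(1+\lambda)b$...
\end{itemize}
This is again the sticky case.

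The main obstacle is therefore this boundary case. Our fix is to use \emph{fractional} thresholds $\lceil\lambda K'\rceil$ and $\lceil K'/\lambda\rceil$, scaled from $K'$ rather than from $b$. With these thresholds, the case analysis reduces to the classical inequalities $(K'/\lambda+b)\le(1+1/\lambda)\min(T,b)$ for $T>K'/\lambda$ when $K'<b$. For $K'\ge b$ and $T>\lambda K'$, we need $\lambda K'+b\le(1+\lambda)c(A_{K'})_T$. This holds when $T\ge K'$. For $T<K'$, we use $c(A_{K'})_T=T\ge\lambda K'$, together with an argument that $K'$ can be taken at most $b$ plus lower-order terms (otherwise we cap $K'$ at $b\log b$, which Theorem~\ref{thm:new theorem}'s structure permits). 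We expect this last step to require the most care, and we may need to absorb an extra additive term into $\mathcal{B}_{\hat p,p}$.
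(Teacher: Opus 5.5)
There is a genuine gap: the proposal never settles on a threshold rule that satisfies both bounds. Your framework matches the paper's. First, show pointwise in $T$ that the modified policy costs at most $(1+\lambda)\,c((A_{K^*})_T)$, then take expectations and apply Theorem~\ref{thm:new theorem}. Your reduction of robustness to a pointwise bound $(1+1/\lambda)\min(T,b)$, via $c(OPT_p)\ge\mathbb{E}_p[\min(T,b)]$, is also valid, and shorter than the paper's coefficient-by-coefficient comparison against $OPT_p$.

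The rules you try, however, fail. The switch ``buy at $\lambda b$ if $K^*\ge b$, at $b/\lambda$ otherwise'' breaks consistency, as you observe. The scaled thresholds break both bounds.
\begin{itemize}
\item If $K^*\ge b$ and you buy around day $\lambda K^*$, then for $T$ just above $\lambda K^*$ you pay about $\lambda K^*+b$ while $A_{K^*}$ pays about $\lambda K^*$. This violates the factor $1+\lambda$ whenever $K^*<b/\lambda^2$.
\item In the same rule, $K^*$ is not $b$ plus lower-order terms: it can be $\Theta(b\log b)$. On the instance of Section~\ref{sec: consistency vs robustness tradeoff}, $U\approx\tfrac14 b\log b<\hat K$. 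Against a point mass at a huge $T$ you then pay $\Theta(\lambda b\log b)$ while $c(OPT_p)=b$.
\item If $K^*<b$ and you rent until about $K^*/\lambda$, take $K^*=\sqrt b$ (this happens whenever $\hat K=0$) and $T$ just past $\sqrt b/\lambda$. The ratio to $\min(T,b)$ is about $1+\lambda\sqrt b$, which exceeds $1+1/\lambda$ for fixed $\lambda$ and large $b$.
\item In that rule with $K^*=b/2$, $T$ just below $K^*/\lambda$ costs about $b/(2\lambda)$, versus $(1+\lambda)\tfrac32 b$ for $A_{K^*}$, which fails for small $\lambda$.
\end{itemize}
These losses are of order $b$ or more. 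They cannot be absorbed into $\mathcal{B}_{\hat p,p}$, which is $O(\sqrt b)$ at zero error, and the robustness term has no additive slack at all.

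The missing idea is that $K^*$ is already a policy threshold, not a guess of $T$. It should be \emph{clipped} into the classical window rather than replaced:
\begin{itemize}
\item if $K^*<\lceil\lambda b\rceil$, postpone buying to day $\lceil\lambda b\rceil$;
\item if $K^*\ge\lceil b/\lambda\rceil$, buy earlier, on day $\lceil b/\lambda\rceil$;
\item otherwise run $A_{K^*}$.
\end{itemize}
For $\lambda=0$, just run Algorithm~\ref{alg:new algorithm}. Note the directions: a large $K^*$ is lowered only to $b/\lambda$, never to $\lambda b$, and a small $K^*$ is raised only to $\lambda b$, never to $b/\lambda$.

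Postponing changes the cost only when $T>K^*$. There $A_{K^*}$ already pays $K^*+b\ge b$, and the new policy pays less than $\lambda b+b$. Advancing changes the cost only when $T\ge\lceil b/\lambda\rceil$. There $A_{K^*}$ already pays at least $\lceil b/\lambda\rceil$, and the new policy pays $\lceil b/\lambda\rceil-1+b\le(1+\lambda)\lceil b/\lambda\rceil$, because $b\le\lambda\lceil b/\lambda\rceil$. This gives your pointwise factor $1+\lambda$.

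For robustness, every threshold $K\in[\lceil\lambda b\rceil-1,\lceil b/\lambda\rceil-1]$ satisfies $c((A_K)_T)\le(1+1/\lambda)\min(T,b)$ pointwise. For $T\le K$ the ratio is at most $\max(1,K/b)\le 1/\lambda$. For $T>K$ it is at most $(K+b)/\min(K+1,b)<1+1/\lambda$. So your $\min(T,b)$ argument finishes the proof in all three cases.
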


Note that when $\lambda = 0$ the minimum is achieved by the first term and exactly matches the bound of Theorem~\ref{thm:new theorem}, while if $\lambda = 1$ then the minimum is achieved by the second term and the bound can be rewritten as $c(ALG_p) / c(OPT_p) \leq 2$, generalizing and recovering the classical $2$-competitive algorithm.  Choosing $\lambda \in (0,1)$ allows us to interpolate smoothly between these extremes.

This algorithm, as well as the proof of Theorem~\ref{thm: new robustification} can be found in Appendix~\ref{sec:further robustification}. 
The main idea behind this further robustification step is to ``push'' our prediction-based algorithm closer to the traditional worst-case algorithm (renting until day $b-1$ before buying): while the algorithm from Theorem~\ref{thm:new theorem} suggests buying on day $D$, one can potentially delay the purchase to a later time $D^\prime <b$ if $D<b$, or make an early purchase on a preceding day $D^{\prime \prime} >b$ if $D>b$.

\paragraph{Lower bounds.}  

Despite the complexity of the upper bound in Theorem~\ref{thm:new theorem}, we provide a collection of lower bounds showing that it is tight in a variety of ways.  To begin, recall that \citet{kumar2024improvingonlinealgorithmsml} studied the ski rental problem in the setting where the true number of ski days is $T$ and we are given a prediction $\hat T$.  One of their first results is an algorithm which has cost $OPT + |T - \hat T|$, or in other words, $OPT$ plus the error of the prediction.  With that result in mind, a natural goal would be for us to generalize this to the distributional case and give an algorithm with expected cost at most $OPT_p + EMD(\hat p, p)$.  Unfortunately, as our first lower bound, we show that this is impossible (the proof of this theorem can be found in Appendix~\ref{appendix:proof_lowerbound_3}).  

\begin{restatable}{thm}{additivelowerbound}
    \label{thm:lower bound any sublinear in b}
    Let $\hat{p}$ be the distributional prediction and $p$ be the unknown true distribution. For any $0<\epsilon \leq 1$, there is no algorithm (deterministic or randomized) $ALG$ such that 
    \[c(ALG_p) - c(OPT_p) \leq O(EMD(\hat{p},p))+O(b^{1-\epsilon}).\]
\end{restatable}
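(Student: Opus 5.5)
The plan is to exploit the fact that the cost of a fixed policy is very discontinuous in $T$. For $A_K$, a day count $T \le K$ costs $T$, while $T = K+1$ costs $K+b$. So moving a mass $\delta$ across the threshold $K$ by a short distance raises the cost by about $\delta b$ but costs only about $\delta$ times that distance in $EMD$. Concretely, I will fix a single prediction $\hat p$ under which every threshold in a long range is (nearly) optimal. Then, for any algorithm, I will exhibit a truth $p$, obtained from $\hat p$ by pushing a moderate amount of mass a moderate distance to the right, such that the algorithm's loss is about $w = b^{1-\epsilon/2}$ while $EMD(\hat p,p)$ is only about $b^{1-\epsilon}$. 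Since $b^{1-\epsilon/2} = \omega(b^{1-\epsilon})$, no bound of the form $O(EMD)+O(b^{1-\epsilon})$ can hold. (If $\epsilon = 1$ the right-hand side is $O(EMD)+O(1)$, and the same construction applies.)

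\textbf{Step 1 (flat prediction).} Take $\hat p$ to be the discretized memoryless distribution $\hat p_t \propto (1-1/b)^{t}$, truncated at some $N \gg b\log b$. For this distribution, $c((A_K)_{\hat p})$ is, up to lower-order terms, the same value (about $b$) for every $K$ in $[0,\Theta(b)]$. The reason is that, conditioned on reaching day $K$, the remaining process looks the same. I will verify this by a short direct computation with the cost formula for $c((A_K)_p)$. I only need it up to additive $o(w)$ on the window range used below.

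\textbf{Step 2 (perturbation and its effect).} Partition $[0,b]$ into $b/w$ consecutive windows of length $w$. For window $W_j=[jw,(j+1)w)$, let $p^{(j)}$ move all $\hat p$-mass in $W_j$ to day $(j+1)w$. This mass is $\Theta(w/b)$ and moves distance at most $w$, so $EMD(\hat p,p^{(j)})=O(w^2/b)=O(b^{1-\epsilon})$. I then compare costs:
\begin{itemize}
\item For the policy $A_{jw-1}$, every moved unit was already charged $jw-1+b$ both before and after the move, so its cost is unchanged. Hence $c(OPT_{p^{(j)}}) \le c((A_{jw-1})_{\hat p}) \approx c(OPT_{\hat p})$.
\item For any $K$ in the second half of $W_j$, the mass originally in $[jw,K]$ previously cost its own day count $t \le K$ but now costs $K+b$. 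This is an increase of about $b\cdot\Theta(w/b)=\Theta(w)$.
\item For $K$ outside $W_j$, the move does not lower the cost, since mass only moves right.
\end{itemize}
For a randomized algorithm, the induced threshold $K$ is a random variable determined by $\hat p$. Its expected cost on $p^{(j)}$ is at least its cost on $\hat p$ plus $\Omega(w)\cdot\Pr[K \in \text{second half of } W_j]$. Its cost on $\hat p$ is itself at least $c(OPT_{\hat p})-o(w)$.

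\textbf{Step 3 (choosing the window) and the main obstacle.} The main obstacle is that the algorithm may put little probability on any single window, or may put its threshold outside $[0,b]$. To handle the first issue, I will use a half-window shifted family: use windows offset by $w/2$ as well, so that every $K\in[0,b]$ lies in the second half of some window. With $M = 2b/w$ windows in total, averaging shows some $j$ has $\Pr[K\in\text{second half of }W_j]\ge \Pr[K\le b]/M$. This only gives loss $\Omega(w\cdot w/b)$, which is too weak.

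So instead of a single window, I will move mass in all windows simultaneously: push the $\hat p$-mass of every window to that window's right endpoint. The total $EMD$ is then $\sum_j (w/b)\cdot w = O(w)$. That is comparable to the gain, so I will need to balance more carefully. To get a real gap, I will use moves of a smaller scale. In window $W_j$, move only mass $\delta$ (in total, across windows) a distance $d$, with $\delta d \ll \delta b$ and $d=b^{1-\epsilon}$ chosen so that the gain $\delta b$ beats both $EMD=\delta d$ and $b^{1-\epsilon}$. Taking $\delta$ constant gives gain $\Theta(b)$ against $EMD=\Theta(b^{1-\epsilon})$. Here I must check that $OPT$ does not drop by as much. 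I expect it does not, because the moved mass sits at window endpoints, which a single threshold policy cannot all exploit.

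For thresholds $K>b$, the same push-right construction on $[b, b\log b]$ works, since $\hat p$ still has density $\Omega(1/(b\cdot\mathrm{poly}\log b))$ there and all those thresholds remain near-optimal. I expect verifying that this multi-window perturbation keeps $c(OPT_p)$ within $o(b^{1-\epsilon/2})$ of $c(OPT_{\hat p})$, while raising every threshold's cost, to be the delicate part. If it fails, I will fall back to Yao's principle: a random choice of $j$ under which the expected loss is bounded below, giving a deterministic-algorithm argument that transfers to randomized algorithms.
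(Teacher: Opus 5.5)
Your construction has a genuine gap, and it fails even for deterministic algorithms. Every truth you build is obtained from $\hat p$ by moving mass to the right. For every threshold policy the realized cost ($T$ if $T\le K$, $K+b$ if $T>K$, and $T$ for $A_\infty$) is nondecreasing in $T$. So such moves never lower any policy's cost, and $c(OPT_p)\ge c(OPT_{\hat p})$, which for your memoryless $\hat p$ is $b-o(1)$ (before truncation, $c((A_K)_{\hat p})=b$ for every $K$).

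Since $A_0$ costs exactly $b$ on every input, the trivial algorithm ``buy on day $1$'' has additive loss $o(1)$ against every member of your family. Similarly, the cost of $A_\infty$ rises by exactly the mean shift, so its loss is at most $EMD(\hat p,p)$. Step 2 hides this in two places. For $K<w/2$, any window with $K$ in its second half contains only the mass on $[1,K]$ to the left of $K$, about $K/b$, not $\Theta(w/b)$. And near $b\log b$ the density of $\hat p$ is about $1/b^2$, not $1/(b\,\mathrm{polylog}\,b)$.

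The same monotonicity defeats Step 3. A unit of mass moved a distance $d$ crosses only about $d$ of the thresholds in $[0,b]$. So against $K$ uniform on $[0,b]$ it raises the expected cost by $O(d)$, and any rightward perturbation gives that randomized algorithm loss $O(EMD(\hat p,p))$, not $\delta b$. Yao over your windows only reproduces the $w^2/b$ you already computed.

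The missing idea is that the truths must be able to \emph{lower} the optimum. They must concentrate mass so that one specific buying time becomes strictly better, at a location the algorithm cannot infer from $\hat p$. This is what the paper does. Each $p^{(j)}$ has a spike of mass $\delta\approx b^{-\epsilon/2}$ at day $j\in[1,b^{1-\epsilon/2}+2]$ and the remaining mass at day $b^{1-\epsilon/2}+1+b$, so $A_j$ is optimal for $p^{(j)}$. A threshold $K\le b^{1-\epsilon/2}$ loses $b\delta-1=\Omega(b^{1-\epsilon/2})$ on $p^{(K+1)}$, and a larger threshold (or $A_\infty$) loses $\Omega(b^{1-\epsilon/2})$ on $p^{(1)}$. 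All pairwise distances are $O(b^{1-\epsilon})$. A uniformly random $j$ with Yao's principle handles randomized algorithms, because every $K$ is far from the spike for a constant fraction of $j$.

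Your flat prediction can also be rescued by pushing mass \emph{left}. Moving the $\hat p$-mass of $(w,2w]$ onto day $w$ costs $O(w^2/b)$ in $EMD$ and lowers $c(A_w)$ by about $w$. Meanwhile it changes the cost of every $A_K$ with $K<w$ or $K\ge 2w$ (including $A_0$ and $A_\infty$) by at most $O(w^2/b)$. Doing the same with $(3w,4w]$ penalizes every $K<3w$. So every threshold loses about $w$ on one of the two truths, which suffices even against randomized algorithms.
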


There is, however, a limited setting in which we can get around this lower bound: if $p$ is a point distribution and $\hat p$ is an arbitrary distribution, then simply by sampling a prediction $\hat T$ from $\hat p$ and using that in the algorithm of~\citet{kumar2024improvingonlinealgorithmsml} achieves additive loss of $O(EMD(\hat p, p))$.  In Appendix~\ref{appendix:point_truth} we give a slight improvement by designing a deterministic version of this algorithm.

Theorem~\ref{thm:lower bound any sublinear in b} implies that any non-trivial algorithm must have loss in which $EMD(\hat p, p)$ and $b$ are combined nonlinearly, as in our non-robust upper bound of $O\left(\sqrt{b} \cdot \max(EMD(\hat p, p), 1)\right)$.  A natural question is whether it is actually necessary to have something like $\max(EMD(\hat p, p), 1)$ in this bound.  We can show that it is, at least as long as we want the multiplier of $EMD(\hat p, p)$ to be $o(b)$ (for the full proof see Appendix~\ref{appendix:proofs of three lower bounds}).  

\begin{restatable}{thm}{lowerboundmax} \label{thm:lower bound when EMD is in 1/b additive loss is at least Omega(1)}
    Let $\hat{p}$ be a distributional prediction and $p$ be the unknown true distribution.  There is no algorithm $ALG$ (deterministic or randomized) with expected additive loss $c(ALG_p) - c(OPT_p) \leq o(b) \cdot EMD(\hat p, p)$.   
\end{restatable}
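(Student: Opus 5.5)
The plan is to find a prediction $\hat p$ together with two candidate truths $p^{(1)}, p^{(2)}$. Both truths are at tiny EMD from $\hat p$, say $\delta$ with $\delta \to 0$, or at least $\delta = o(1)$ as a function of $b$. Yet no single policy can be good for both: any algorithm, even a randomized one, must have additive loss $\Omega(1)$ on at least one of them. If we had a bound $c(ALG_p)-c(OPT_p)\le o(b)\cdot EMD(\hat p,p)$, the right-hand side would be $o(b)\cdot\delta$. Taking $\delta = 1/b$ makes this $o(1)$, which is a contradiction. The algorithm sees only $\hat p$, so it runs the same (possibly randomized) policy under both truths. A randomized policy is a distribution over the thresholds $A_K$, so its cost under each truth is a convex combination of $c((A_K)_{p^{(i)}})$.

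For the construction I would use a prediction on which two decisions tie. Let $\hat p$ put mass $1/2$ on day $b$, and let the rest of the mass make buy-now and keep-renting roughly equivalent. The simplest choice is $\hat p$ as a point mass at $T=b$, where $A_0$ (cost $b$) and $A_\infty$ (cost $b$) both match $OPT_{\hat p}=b$, while $A_K$ for $0<K<b$ costs $K+b$. Then perturb by moving a tiny amount of mass by one day.
\begin{itemize}
\item $p^{(1)}$: move mass $\delta$ from day $b$ to day $b-1$. Renting gives $b-\delta$ and buying at day $0$ gives $b$, so $OPT=b-\delta$, and $A_0$ loses $\delta$.
\item $p^{(2)}$: move mass $\delta$ from $b$ to $b+1$. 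Then $A_0$ is optimal with cost $b$, and renting costs $b+\delta$.
\end{itemize}
Both have EMD exactly $\delta$. Against this pair every policy loses only $\Theta(\delta)$, which is too weak. So a point mass is the wrong base, and I need a configuration where a one-day shift of $\delta$ mass swings the optimal policy by a cost gap much larger than $\delta$, ideally $\Omega(1)$ or $\Omega(\delta b)$.

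The better idea is to make the gap come from the $+b$ buying term. Take $\hat p$ with mass $1/b$ at day $K+1$ and the rest arranged so that $A_K$ and a much earlier or later policy are near-optimal. Shifting mass $\delta$ across the threshold $K$ then changes $A_K$'s cost by about $\delta b$ (the mass now pays $K+b$ instead of $K+1$), while its EMD is only $\delta$. Concretely, let $\hat p$ be the two-point distribution with mass $1-q$ at day $1$ and mass $q$ at day $M \gg b$, with $q$ chosen so that $A_1$ (cost $1-q+q(1+b)$) and $A_0$ (cost $b$) are both optimal or nearly so.
\begin{itemize}
\item $p^{(1)}$: shift mass $\delta$ from day $1$ to day $2$. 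Now $A_1$ pays an extra $\approx\delta b$, while $A_2$ pays only about $\delta$ more than before, so $A_2$ becomes the comparison.
\item $p^{(2)}$: leave $\hat p$ unchanged, so $A_2$ pays an extra $q$ over $A_1$.
\end{itemize}
Tune $q$ and $\delta$ so that every mixture over thresholds loses $\Omega(\min(\delta b, q))$ on one of the two truths. The minimax over policies is a small LP over the weights on $A_0,A_1,A_2,\dots$; I would solve it, expecting a loss of order $\delta b$ when $q \ge \delta b$. Then set $\delta$ so that the loss is $\Omega(\delta b)$ while the allowed bound is $o(b)\delta$, which gives the contradiction for every function in $o(b)$.

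The main obstacle is making the two-truth argument tight against randomized policies. The loss must be $\Omega(\delta b)$ on one of the two truths for \emph{every} distribution over thresholds, including policies that hedge by mixing $A_1$ and $A_2$. I would handle this by averaging the two truths, giving each weight $1/2$, and showing that every single threshold $A_K$ has average loss $\Omega(\delta b)$. By Yao's principle, or simply linearity, the same bound then holds for mixtures, so some truth forces loss $\Omega(\delta b)$ at EMD $\le\delta$. Choosing $\delta$ small and letting $b\to\infty$ rules out any $o(b)$ multiplier.
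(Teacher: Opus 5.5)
Your high-level plan matches the paper's. You take two truths at EMD $O(1/b)$ from $\hat p$, make every threshold policy lose $\Omega(1)$ on one of them, and handle randomization by averaging (Yao). The gap is in the concrete instance, which fails as written.

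You choose $q$ so that $A_0$ and $A_1$ (nearly) tie on $\hat p$, i.e.\ $1+qb\approx b$. Then $A_0$ is (near-)optimal for \emph{both} truths. Its cost is $b$ however the mass on days $\ge 1$ is arranged. Moving mass rightward (day $1\to 2$) can only raise the cost of every other policy, because the per-realization cost of $A_K$ ($t$ for $t\le K$, $K+b$ for $t>K$) is nondecreasing in $t$. Explicitly, with $1+qb=b$, under $p^{(1)}$ you get $c(A_0)=b$, $c(A_1)=b+\delta b$ and $c(A_2)=b+q+\delta$. So the benchmark there is $A_0$, not $A_2$, and the algorithm that always plays $A_0$ has zero loss on both truths. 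Your claimed $\Omega(\min(\delta b,q))$ minimax bound is therefore false. In general the minimax loss is at most $\max(g,0)$, where $g:=c((A_0)_{\hat p})-c((A_1)_{\hat p})$, and you set $g\approx 0$. The LP you deferred is exactly where this would have surfaced: you compared $A_1$ with $A_2$ but never examined the hedge $A_0$.

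The missing requirement is that no policy, in particular buying immediately, may be safe for both truths. Your family can be repaired by breaking the tie with margin $g\approx \delta b/2$, e.g.\ $\delta=1/b$ and $qb=b-\tfrac32$.
\begin{itemize}
\item On $\hat p$, $A_1$ is the unique optimum with cost $b-\tfrac12$, so $A_0$ loses $\tfrac12$ there.
\item On $p^{(1)}$, $c(A_1)=b+\tfrac12$ while $A_0$ is optimal, so $A_1$ loses $\tfrac12$.
\item Every $A_K$ with $K\ge 2$ (including $A_\infty$, since $M\gg b$) loses at least $q\approx 1$ on $\hat p$.
\end{itemize}
Every threshold then has average loss at least $\tfrac14$ over two truths whose EMDs to $\hat p$ are $0$ and $1/b$, which gives the contradiction.

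The paper gets the same effect differently. Its $p^1,p^2$ put $\tfrac12,\tfrac14$ on days $1,2$, put $\epsilon\approx\frac{1}{2b}$ on day $4$ (resp.\ on day $2$), and put the remaining $\approx\tfrac14$ on day $b+3$. Since most mass comes early, $A_0$ and $A_1$ are terrible under both truths. Policies with $K\le 3$ are $\Omega(1)$-bad on $p^1$, where $A_\infty$ is optimal and the $\epsilon$ mass pays $+b$. Policies with $K\ge 4$ are $\Omega(1)$-bad on $p^2$, where $A_2$ is optimal and the far quarter-mass pays extra rental days.
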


So when $\hat p$ is extremely close to $p$ (subconstant $EMD(\hat p, p)$) we need to allow some extra error, which is precisely what our bound of $\sqrt{b} \cdot \max(EMD(\hat p, p), 1)$ does.  But now that we have justified the use of $\max(EMD(\hat p, p), 1)$, a natural next question is whether the $\sqrt{b}$ term is tight.  Our next lower bound implies that it is, by proving a lower bound that precisely matches our upper bound under the restriction that the bound be of the form $f(b) \cdot \max(EMD(\hat{p},p),1)$.   

\begin{restatable}{thm}{matchinglowerbound}
    \label{thm:lower bound for the multiplicative sqrt{b}EMD}
    Let $\hat{p}$ be a distributional prediction and $p$ be the unknown true distribution. There is no algorithm $ALG$ (deterministic or randomized) with $c(ALG_p) - c(OPT_p) \leq o\left(\sqrt{b}\right) \cdot \max(EMD(\hat{p},p),1)$.
\end{restatable}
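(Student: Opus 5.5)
We will construct, for any algorithm and every large $b$, a prediction $\hat p$ together with two candidate truths $p^{(1)}, p^{(2)}$, each within EMD $O(1)$ of $\hat p$, such that the optimal policies for the two truths differ sharply. Since the algorithm only sees $\hat p$, it must commit to a single (possibly randomized) policy, and we will show this policy has additive loss $\Omega(\sqrt b)$ on at least one of the two truths. As both EMDs are $O(1)$, the right-hand side $o(\sqrt b)\cdot\max(EMD,1)$ equals $o(\sqrt b)$, which gives the contradiction.

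\emph{Construction.} Every policy only matters through its buying day $K$, and a randomized policy is a mixture of the $A_K$. Losses are linear in this mixture, so it suffices to lower bound $\max_i \big(c((A_K)_{p^{(i)}}) - c(OPT_{p^{(i)}})\big)$ averaged over any distribution on $K$. I will use the identity $c((A_K)_p) - c((A_{K'})_p)$ for $K<K'$, written via tail masses: day $t\in(K,K']$ costs rent only if $T\ge t$, versus $b$ paid once if $T>K$. Take $\hat p$ to place mass $q=1/\sqrt b$ on day $b$, say, and most of its remaining mass early (day $1$). Let $p^{(1)}$ move the mass $q$ at day $b$ to day $b+\sqrt b$, and let $p^{(2)}$ move it down to day $b-\sqrt b$. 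Each move costs EMD $q\sqrt b=1$.

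\emph{Losses.} Everything now reduces to a one-dimensional ski-rental instance conditioned on surviving past day $1$, which happens with probability about $q$, with horizon either $b\pm\sqrt b$. For $p^{(2)}$ the optimal policy rents through, at conditional cost $b-\sqrt b$. For $p^{(1)}$ it buys at day $2$, at conditional cost $\approx b$. Conditionally, any buying day $K$ loses $\min$ over these two instances of roughly $\max(\cdot)$ of order $\sqrt b$. Buying before day $b-\sqrt b$ loses $\approx\sqrt b$ in instance $2$, since it pays about $b$ instead of $b-\sqrt b$. Buying after that point loses at least $\Omega(\sqrt b)$ in instance $1$, since it pays $K+b$ with $K\ge b-\sqrt b$ instead of $b$; that loss is in fact $\Omega(b)$. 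Multiplying by the probability $q$ gives an unconditional loss of only $q\sqrt b=1$, which is too small.

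\emph{Main obstacle.} The main obstacle is exactly this scaling. The per-instance gap must be $\Theta(b\cdot(\text{mass}))$ while the EMD is $\Theta(\text{mass}\cdot\text{shift})$. I would fix it by instead making the surviving mass constant (say $1/2$) and shifting by only $1/\sqrt b$ in expectation: move only a $1/\sqrt b$ fraction of the late mass by distance $\sqrt b$, or all of it by distance $1$. The key is to place $\hat p$ at the indifference point where $A_0$ and $A_\infty$ (or $A_1$) tie. There a mass perturbation $\delta$ makes one option strictly better by $\Theta(\delta b)$, while the EMD cost is only $\delta\cdot d$. Choose late support so that moving mass $\delta=1/\sqrt b$ across the threshold $b$ by distance $d=\sqrt b$ gives EMD $1$ and gap $\Theta(\delta b)=\Theta(\sqrt b)$. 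Concretely, put mass $1/2$ at day $1$, and split the other $1/2$ so that $c(A_1)=c(A_\infty)$ under $\hat p$ with mass $1/\sqrt b$ sitting at day $b$. Then $p^{(1)}$ pushes that mass to $b+\sqrt b$ (favoring buying early) and $p^{(2)}$ pulls it to $b-\sqrt b$ (favoring renting). I would verify that the policy difference $c(A_1)-c(A_\infty)$ shifts by $\pm\Theta(\sqrt b\cdot\delta\cdot\sqrt b)$... The fine-tuning of this so that the $\Theta(\sqrt b)$ gap survives, and also survives intermediate $K$, is the step I expect to require the most care. Intermediate $K\in(1,b)$ should only be worse, since they pay rent and still buy for the late mass. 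If the arithmetic fails, I would use many ($\sqrt b$) late atoms each shifted slightly, so that aggregate tail mass crossing the threshold is $\Theta(1/\sqrt b)\cdot\sqrt b$.
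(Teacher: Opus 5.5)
Your outer framework (one prediction, two candidate truths each within EMD $\Theta(1)$ of it, every $A_K$ losing $\Omega(\sqrt b)$ on at least one truth, then averaging over $K$ for randomized algorithms) matches the paper's. The gap is in the construction, which is the whole content of the proof, and it breaks exactly at the arithmetic you left unverified.

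In your final instance the atom of mass $\delta=1/\sqrt b$ moves from day $b$ to day $b\pm\sqrt b$ and stays strictly after day $1$. So $A_1$ charges it $1+b$ in every position, while $A_\infty$'s charge changes by exactly $\delta\sqrt b=1$. Thus $c(A_\infty)-c(A_1)$ moves from the tie to $\pm 1$, not to $\pm\Theta(\sqrt b\cdot\delta\cdot\sqrt b)$. This is the same $q\sqrt b=1$ effect you diagnosed in your first attempt; making the surviving mass constant does not change it.

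If the rest of the mass sits near day $b+1$ (where the tie forces its mean), no other $A_K$ beats $\min(c(A_1),c(A_\infty))$ on either truth. So playing $A_1$ loses at most $1$ on both truths, and there is no lower bound. Your fallback with many slightly shifted late atoms, and the $A_0$/$A_\infty$ variant, fail for the same reason. Moving mass between two points on the same side of a policy's buying day changes that policy's cost relative to $A_\infty$ by at most the transport cost (for $A_0$ this is the centroid bound). Crossing $b$ is irrelevant. A factor-$b$ amplification happens only when mass crosses a \emph{buying day}, switching whether it pays the purchase price.

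The missing idea is to perturb mass in the early region, across the buying day of $A_1$, at a distance that also makes hedging expensive. The paper takes $p^1$ with mass $1/2$ at day $1$, $1/\sqrt b$ at day $\sqrt b$, and $1/2-1/\sqrt b$ at day $b-1+\sqrt b$. It obtains $p^2$ by moving the $1/\sqrt b$ atom to day $1$, at EMD $1-1/\sqrt b$.

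On $p^1$ that atom costs $A_1$ about $\delta b=\sqrt b$ more than $A_\infty$. This outweighs the roughly $\sqrt b/2$ extra rent $A_\infty$ pays on the late mass, so $A_\infty$ wins by about $\sqrt b/2$. On $p^2$ the atom pays $1$ under both policies, and $A_1$ wins by about $\sqrt b/2$.

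Every $K\le\sqrt b-1$ buys no later than day $\sqrt b$, so it costs at least $c(A_1)$ on $p^1$. Every $K\ge\sqrt b$ (including $\infty$) pays at least $\sqrt b-2$ more than $A_1$ per unit of the late mass on $p^2$. Hence every policy loses $\Omega(\sqrt b)$ on one of the two truths. The scale $\sqrt b$ comes from balancing the switching gain $\delta b$ against the hedging cost $d$ (rent on constant surviving mass), subject to $\delta d=\Theta(1)$.
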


The previous lower bounds are about the non-robust version of the algorithm.  For our final lower bound, we show that our main algorithm (with the robustness guarantee) is optimal from the perspective of robustness vs.\ consistency.  As mentioned, Theorem~\ref{thm:new theorem} implies that our algorithm has additive loss $O(\sqrt{b})$ when $EMD(\hat{p},p) = 0$ (consistency) and has additive loss $O(b \log b)$ when $EMD(\hat{p},p)$ is arbitrarily large (robustness).  We show that this is pareto-optimal: any algorithm with this consistency (or better) must have this robustness (or worse).  The proof of this theorem can be found in Section~\ref{sec: consistency vs robustness tradeoff}.

\begin{restatable}{thm}{NewLowerBound} \label{thm:new lower bound}
    For any deterministic or randomized algorithm $ALG$, if $c(ALG_p)-c(OPT_p)\le O\!\left(\sqrt b\right)$
    when $EMD(\hat p,p)=0$, then $c(ALG_p)-c(OPT_p)\ge \Omega(b\log b)$ for large enough prediction error $EMD(\hat p,p)$.
\end{restatable}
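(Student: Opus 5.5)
The plan is to build one explicit prediction $\hat p$ with two properties. First, any algorithm whose consistency is $O(\sqrt b)$ on $\hat p$ must, with constant probability, rent for $\Omega(b\log b)$ days. Second, we then take the truth to be a point mass far in the future; since $\hat p$ stays fixed, $EMD(\hat p,p)$ can be made as large as we like. Against that point truth the algorithm pays $\Omega(b\log b)$, while $OPT_p=b$.

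\textbf{Reduction to thresholds.} Before buying, the algorithm observes only that skiing is still going on. So any randomized algorithm is a distribution over the thresholds $K$, and we may write $ALG=\sum_K q_K A_K$. Let $S(t)=\Pr_{\hat p}[T>t]$ denote the tail of $\hat p$. From the cost formula for $A_K$, for $K<K'$ we get
\[c((A_K)_{\hat p})-c((A_{K'})_{\hat p}) \;=\; bS(K)-bS(K')-\sum_{K\le s<K'}S(s).\]
This is the identity used throughout the argument.

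\textbf{Choice of $\hat p$.} Fix a constant $c>1$, say $c=2$. Let $S(t)=e^{-t/(cb)}$ for $t\le M$, where $M=\tfrac{cb}{2}\ln b$, so that $S(M)=b^{-1/2}$. Put the remaining mass $b^{-1/2}$ at day $M$, making $\hat p$ a distribution on $[M]$.

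\begin{itemize}
\item For $K<M$, comparing with $K'=M$ gives
\[c(A_K)-c(A_M)= -\Bigl(bS(K)-b^{1/2}-\sum_{K\le s<M}S(s)\Bigr)\ge (c-1-o(1))\,bS(K)-O(\sqrt b),\]
using $\sum_{K\le s<M}S(s)\approx cb\,(S(K)-S(M))$. Hence $A_M$ (equivalently $A_\infty$) is within $O(\sqrt b)$ of $OPT_{\hat p}$.
\item Every $K$ with $S(K)\gg b^{-1/2}$ therefore incurs loss $\Omega(bS(K))$ relative to $OPT_{\hat p}$.
\end{itemize}

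The main technical care is in this step. The geometric sums must be computed exactly enough, including the discretization error of roughly $b(1-e^{-1/(cb)})\approx 1/c$ per term and the truncation at $M$. The goal is to confirm that $c(A_K)-c(OPT_{\hat p})\ge \alpha bS(K)-\beta\sqrt b$ for constants $\alpha,\beta>0$ and all $K$. If the truncation creates trouble, I would instead move the leftover mass to a huge day $N$ and compare with $A_\infty$.

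\textbf{Forcing late buying.} Now set $p=\hat p$. The consistency assumption gives
\[\sum_K q_K\bigl(\alpha bS(K)-\beta\sqrt b\bigr)\le C\sqrt b,\]
so $\mathbb E_q[S(K)]\le C'/\sqrt b$. By Markov's inequality, with probability at least $1/2$ we have $S(K)\le 2C'/\sqrt b$. Unpacking $S(K)=e^{-K/(cb)}$, this event means
\[K\ge cb\bigl(\tfrac12\ln b-\ln(2C')\bigr)=\Omega(b\log b).\]

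\textbf{Robustness failure.} Take the truth $p$ to be a point mass at a day $T^*\ge M+b$, or any larger day, so that $EMD(\hat p,p)\ge T^*-M$ is arbitrarily large. Then $OPT_p=b$. On the other hand, $ALG$ rents for at least $\Omega(b\log b)$ days with probability at least $1/2$, so $c(ALG_p)\ge\Omega(b\log b)$. Therefore
\[c(ALG_p)-c(OPT_p)\ge\Omega(b\log b),\]
which is the claimed lower bound.
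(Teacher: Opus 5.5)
\textbf{The construction has a sign error.} Your prediction decays too \emph{slowly}, and the displayed bound has a spurious leading minus sign. Specializing your (correct) identity to $K'=M$ gives
\[c(A_K)-c(A_M)=bS(K)-\sqrt b-\sum_{K\le s<M}S(s)=\Bigl(b-\frac{1}{1-e^{-1/(cb)}}\Bigr)\bigl(S(K)-S(M)\bigr)\approx-(c-1)\,b\,\bigl(S(K)-b^{-1/2}\bigr).\]
Equivalently, $c(A_{t+1})-c(A_t)=S(t)-b\hat p_{t+1}\approx(1-\tfrac1c)S(t)$, which is positive for $c>1$.

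With $c=2$ the per-day hazard is about $1/(2b)<1/b$, so every extra day of renting \emph{hurts}. Hence $c(A_K)$ is increasing in $K$, and $OPT_{\hat p}=A_0$ with cost $b$, while $c(A_M)\approx c(A_\infty)\approx 2b$. So ``$A_M$ is within $O(\sqrt b)$ of $OPT_{\hat p}$'' is false. Consistency on this $\hat p$ pushes the algorithm toward buying early, not late. Indeed $A_0$ has zero loss on $\hat p$ and additive loss below $b$ against every truth. So this $\hat p$ cannot yield an $\Omega(b\log b)$ bound, and your Markov and robustness steps have nothing to work with.

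\textbf{The fix.} Take $c<1$, e.g.\ $c=\tfrac12$. Then the hazard $\approx 2/b$ exceeds $1/b$, and renting one more day is always locally favorable. The exact formula above then gives $c(A_K)-c(A_M)\approx(1-c)\,b\,(S(K)-b^{-1/2})$, and $M=\tfrac{cb}{2}\ln b$ is still $\Theta(b\log b)$. Put the leftover mass $b^{-1/2}$ at day $M+1$; placing it at day $M$ contradicts $S(M)=b^{-1/2}$. Then $OPT_{\hat p}=A_\infty$ with $c(A_\infty)=c(A_M)-\sqrt b+b^{-1/2}$, so $c(A_K)-c(OPT_{\hat p})\ge\alpha bS(K)-\beta\sqrt b$ holds for all $K$. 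Your Markov step and the far point-mass truth then go through verbatim.

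\textbf{Comparison with the paper.} Once corrected, your proof is essentially the paper's. The paper uses $r=1-\tfrac2b$, which is your $c=\tfrac12$. It runs the fast decay down to tail $\le 1/b$ at $K=\lceil\tfrac12 b\log b\rceil$, then switches to a slow decay so that $A_K$ is exactly optimal. For randomized algorithms it uses a threshold $B_0$, where you use a somewhat cleaner Markov bound on $\mathbb E[S(K)]$.
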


\subsection{Related work}
\label{subsec:related work}

As mentioned earlier, since its introduction by the seminal paper of \citet{LykourisANDVassilvitskii}, the algorithms with predictions framework has seen an explosion of work.  This literature is too vast to summarize, so we instead just point the interested reader to the early survey of \citet{MV02} and the excellent online list of papers maintained by \cite{ALPSweb}.  The non-ski rental part of this literature that is most closely connected to this paper is the recent work on binary search with distributional predictions \citep{dinitz2024binarysearchdistributionalpredictions}, which studied a very similar problem in the context of binary search (how to utilize a distributional prediction as opposed to the previous literature that focused on point predictions).

The ski rental problem is one of the most fundamental online problems, modeling the fundamental ``rent-or-buy'' question that is at the heart of many online decision-making tasks.  It and its variants have been studied extensively, including in the context of algorithms with predictions~\citep{antoniadis2021learningaugmenteddynamicpowermanagement,besbes,BHATTACHARYA202239,diakonikolas2021learning,kumar2024improvingonlinealgorithmsml,shen2025algorithmscalibratedmachinelearning,shin2023optimalconsistencyrobustnesstradeofflearningaugmented,pmlr-v202-shin23c,wang2020onlinealgorithmsmultishopski}.  The papers most related to our work are \citet{kumar2024improvingonlinealgorithmsml}, \citet{diakonikolas2021learning}, and \citet{besbes}.  The state-of-the-art algorithm for the classical ski rental problem with point predictions is due to \citet{kumar2024improvingonlinealgorithmsml}, whose work is also a key reference here.  They provide a non-robust algorithm for point predictions, as well as a robust version, and our results are heavily inspired by theirs.

\citet{diakonikolas2021learning} and \citet{besbes} also extend to distributional predictions, as we do, but their papers have different focuses and results.  \citet{diakonikolas2021learning} focus on the \emph{sample-complexity} problem. Rather than assuming that we are given a distribution, they assume that we have \emph{sample access} to a distribution and study how many samples are needed before we can design algorithms with performance comparable to the optimal policy for that distribution.  By assuming the distribution belongs to a structured family (e.g., log-concave), they can get strong sample-complexity bounds.  But they are not concerned with what happens when we sample from a \emph{different} distribution, and do not provide any bounds that are a function of the distance between the prediction (the distribution they sample from) and the true distribution.

\citet{besbes} also considers a distributional prediction and considers both the low-sample case of~\citet{diakonikolas2021learning} and the full distributional information case that we consider.  However, in line with classical work on robust optimization, they assume that they know $EMD(\hat p, p)$. Their setting can be phrased as optimization under an uncertainty set: given a prediction $\hat p$ and a value $\eta$ with the promise that $EMD(\hat p, p) \leq \eta$, they want to design a policy that is competitive with the optimal policy for $p$.  This assumption that they know $EMD(\hat p, p)$ allows for very strong results.  For example, if $EMD(\hat p, p) = 0$ then they can simply run the optimal policy for $p$, since they know that their prediction is accurate.  And since they know $EMD(\hat p, p)$ they are not concerned with designing robust algorithms (as we do in Theorem~\ref{thm:new theorem} and Theorem~\ref{thm: new robustification}): if $EMD(\hat p, p)$ is large then they can run the classical worst-case algorithm from the beginning, and so robustness is trivial.

\section{Main algorithm}
\label{sec: new algorithm}

In this section we focus on Theorem~\ref{thm:new theorem}, which is our main upper bound: we give an algorithm that not only achieves an additive loss of $O(\sqrt{b} \cdot \max(EMD(\hat{p},p),1))$, but guarantees an additive loss bounded by $O(b \log b)$. 
Section~\ref{sec:results} provides the high-level ideas for the analysis. Here, we present the main algorithm (Algorithm~\ref{alg:new algorithm}), together with the formal theorem statement and proofs.
We assume without loss of generality that $\sqrt b$ is an integer; otherwise, we use $\lfloor\sqrt b\rfloor$ instead.

Recall the statement of Theorem~\ref{thm:new theorem}:

\newtheorem*

The algorithm that we will use to prove Theorem~\ref{thm:new theorem} is Algorithm~\ref{alg:new algorithm}, which we gave informally in Section~\ref{sec:results} so now present formally.  Note that $\hat K$ could be $\infty$ in this algorithm (if the optimal policy to $\hat p$ is to never buy), but $U$ is always finite.

\begin{algorithm}
\caption{Main algorithm}
\label{alg:new algorithm}
\begin{algorithmic}[1]
\STATE \textbf{Compute} the optimal policy $A_{\hat{K}}$ for the prediction $\hat{p}$.
\STATE \textbf{Compute} the threshold $U$ such that the remaining tail mass of $\hat{p}$ is at most $1/\sqrt{b}$, i.e.,
\[
U \;=\; \min\left\{ t : \sum_{\tau>t} \hat{p}_{\tau} \le \frac{1}{\sqrt{b}} \right\}.
\]
\STATE \textbf{Set} $K^* = \min\!\big(\hat{K} + \sqrt{b},\, U + \sqrt{b}\big)$.
\STATE \textbf{Run} $A_{K^*}$.
\end{algorithmic}
\end{algorithm}

The rest of this section is devoted to proving that Algorithm~\ref{alg:new algorithm} has the bound claimed by Theorem~\ref{thm:new theorem}.  This proof proceeds in two parts. 
We first bound the additive loss of Algorithm~\ref{alg:new algorithm} by 
$O\!\left(\sqrt{b}\cdot \max\!\left(EMD(\hat p,p),1\right)\right)$ (Theorem~\ref{thm: root b*EMD bound for new algorithm})
and then establish a second bound of $O(b\log b)$ (Theorem~\ref{thm: b logb bound for new algorithm}). 
The theorem follows by taking the minimum of the two bounds.

We first prove a useful lemma relating tails of distributions.

\begin{lemma}
\label{lem:EMD-tail}
Let $p$ and $\hat p$ be distributions over $\mathbb{Z}_{\ge 0}$. Then for any $a\ge 0$ and any $s>0$,
\[\sum_{t>a+s} p_t \;\le\; \sum_{t>a} \hat p_t \;+\; \frac{EMD(\hat p,p)}{s}.\]
\end{lemma}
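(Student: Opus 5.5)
We will prove the inequality with an optimal coupling and a Markov-type bound on transport distance. Let $\gamma$ be an optimal transport plan between $\hat p$ and $p$. Since the support $[N]$ is finite, the infimum in the definition of $EMD$ is attained. Let $(X,Y)\sim\gamma$ with $X\sim \hat p$ and $Y\sim p$, so that $\mathbb{E}_\gamma[|X-Y|]=EMD(\hat p,p)$.

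First we write the left-hand side as $\Pr[Y>a+s]$. We split this event according to whether $X>a$ or $X\le a$:
\[\Pr[Y>a+s]\le \Pr[X>a] + \Pr[Y>a+s,\ X\le a].\]
The first term equals $\sum_{t>a}\hat p_t$. On the second event we have $Y-X> s$, so $|X-Y|> s$. Hence the second term is at most $\Pr[|X-Y|\ge s]$.

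Next we apply Markov's inequality to the nonnegative variable $|X-Y|$:
\[\Pr[|X-Y|\ge s]\le \mathbb{E}_\gamma[|X-Y|]/s = EMD(\hat p,p)/s.\]
Combining the two displays gives the claim.

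The only point needing care is existence of an optimal coupling. If one prefers not to rely on it, take any coupling with cost at most $EMD+\delta$ and let $\delta\to0$. Since the inequality holds for every $\delta>0$, it holds in the limit. No other step presents an obstacle; the argument is a direct mass-movement bound. In words: any mass of $p$ lying beyond $a+s$ either came from $\hat p$'s tail beyond $a$, or was moved a distance greater than $s$, and such moved mass totals at most $EMD/s$.
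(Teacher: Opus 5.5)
Your proof is correct and follows the paper's argument. The paper also takes an optimal transport plan and splits the mass of $p$ beyond $a+s$ by whether it originates beyond $a$. It bounds the mass originating at or below $a$ by $EMD(\hat p,p)/s$ because that mass travels more than $s$, which is your Markov step in coupling notation. Your $\delta$-approximate coupling remark is a harmless extra that avoids relying on attainment of the infimum.
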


\begin{proof}
Let $\pi(x,y)$ be an optimal transport plan achieving $EMD(\hat p,p)$: formally, let $\pi: [N] \times [N] \rightarrow \mathbb{R}_{\geq 0}$ such that $\sum_y \pi(x,y)=\hat p_x$, $\sum_x \pi(x,y)=p_y$,
and $EMD(\hat p,p)=\sum_{x,y} \pi(x,y)\,|x-y|$.
Such a $\pi$ must exist by the definition of $EMD(\hat p, p)$.  

We decompose the tail mass of $p$ beyond $a+s$ according to the origin index $x$ under $\pi$:
\[\sum_{t>a+s} p_t = \sum_{\substack{y>a+s \\ x>a}} \pi(x,y) + \sum_{\substack{y>a+s \\ x\le a}} \pi(x,y).\]

The first term is at most $\sum_{t>a} \hat p_t$, since it only uses mass originating from indices $x>a$.

For the second term, observe that whenever $y>a+s$ and $x\le a$, we have $|x-y|>s$.
Therefore,
\[\sum_{\substack{y>a+s \\ x\le a}} \pi(x,y) \;\le\; \frac{1}{s} \sum_{\substack{y>a+s \\ x\le a}} \pi(x,y)\,|x-y| \;\le\; \frac{1}{s} \sum_{x,y} \pi(x,y)\,|x-y| \;\le\; \frac{EMD(\hat p,p)}{s}. \]

Combining the two bounds yields
\[\sum_{t>a+s} p_t \;\le\; \sum_{t>a} \hat p_t + \frac{EMD(\hat p,p)}{s}. \qedhere\]
\end{proof}

A crucial first step in proving the non-robust bound will be to bound the loss of the ``base algorithm'', i.e., the variant of Algorithm~\ref{alg:new algorithm} that skips step $2$ and just sets $K^* = \hat K + \sqrt{b}$.  

\begin{restatable}{thm}{maintheorem} \label{thm:main}
    The base algorithm has expected additive loss
    \[c(ALG_p) - c(OPT_p) \leq O\left(\sqrt{b} \cdot \max\left(EMD(\hat p, p), 1\right)\right)\]
\end{restatable}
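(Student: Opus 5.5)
The plan is a sandwich argument through the predicted cost $c((A_{\hat K})_{\hat p})$, with the $\sqrt b$ delay used exactly once in each direction to absorb the discontinuity of the buying cost. I first rewrite costs in terms of tail functions. Let $S_p(t)=\sum_{\tau>t}p_\tau$, and similarly $S_{\hat p}$. For any $K\ge 0$,
\[
c((A_K)_p)=\sum_{i=0}^{K-1}S_p(i)+b\,S_p(K),
\]
since on day $i+1$ we pay rent iff $T>i$, and we pay $b$ iff $T>K$. The first sum equals $\mathbb{E}_{T\sim p}[\min(T,K)]$. Let $K$ be the threshold of $OPT_p$, $\hat K$ the threshold of $OPT_{\hat p}$, $D=\hat K+\sqrt b$, and $\eta=EMD(\hat p,p)$. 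The cases $K=\infty$ or $\hat K=\infty$ are handled by treating $\infty$ as $N$: renting forever on support $[N]$ costs the same as $A_N$.

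Two facts are used throughout. The first is that $t\mapsto\min(t,M)$ is $1$-Lipschitz. Hence, for any $M$,
\[
\Bigl|\sum_{i<M}S_p(i)-\sum_{i<M}S_{\hat p}(i)\Bigr|\le \eta,
\]
by taking expectations under an optimal coupling. The second is Lemma~\ref{lem:EMD-tail} with $s=\sqrt b$. Because $EMD$ is symmetric, it gives both $S_p(a+\sqrt b)\le S_{\hat p}(a)+\eta/\sqrt b$ and $S_{\hat p}(a+\sqrt b)\le S_p(a)+\eta/\sqrt b$. The jump of size $b$ in the buying cost is the only non-Lipschitz part. The lemma pays for it at a price of $b\cdot\eta/\sqrt b=\sqrt b\,\eta$, provided we shift the threshold by $\sqrt b$.

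\textbf{Upper step.} I bound the algorithm's true cost by the predicted optimal cost:
\[
c((A_D)_p)=\sum_{i<D}S_p(i)+bS_p(D)\le \sum_{i<\hat K}S_p(i)+\sqrt b+b\,S_p(\hat K+\sqrt b).
\]
This holds because each of the extra $\sqrt b$ terms is at most $1$. The Lipschitz fact and the lemma then give
\[
c((A_D)_p)\le \sum_{i<\hat K}S_{\hat p}(i)+\eta+\sqrt b+bS_{\hat p}(\hat K)+\sqrt b\,\eta=c((A_{\hat K})_{\hat p})+O\bigl(\sqrt b(\eta+1)\bigr).
\]

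\textbf{Lower step.} I compare the predicted optimum to the true optimum. By optimality of $\hat K$ for $\hat p$, $c((A_{\hat K})_{\hat p})\le c((A_{K+\sqrt b})_{\hat p})$. The comparison point must be $K+\sqrt b$ and not $K-\sqrt b$, because the lemma only bounds a tail at a later point by a tail at an earlier point. Repeating the computation with the roles of $p$ and $\hat p$ swapped gives
\[
c((A_{K+\sqrt b})_{\hat p})\le \sum_{i<K}S_p(i)+\eta+\sqrt b+bS_p(K)+\sqrt b\,\eta=c(OPT_p)+O\bigl(\sqrt b(\eta+1)\bigr).
\]
Chaining the two steps yields $c(ALG_p)-c(OPT_p)\le O(\sqrt b(\eta+1))=O(\sqrt b\max(\eta,1))$, which is the claim.

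The main obstacle I expect is precisely the discontinuity of the buy cost. A naive transfer of $c(A_K)$ between $p$ and $\hat p$ can lose $b$ times the mass that crosses the threshold, and that is unbounded in terms of $\eta$. The resolution is to always move the threshold forward by $\sqrt b$ before invoking Lemma~\ref{lem:EMD-tail}, once for the algorithm and once for the optimum. This costs at most $\sqrt b$ in extra rent and $\sqrt b\,\eta$ in buying. The remaining point to check is the bookkeeping of the infinite-threshold cases, which reduce to finite thresholds since the support is $[N]$.
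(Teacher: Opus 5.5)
Your proof is correct, and it takes a genuinely different and much shorter route than the paper. The paper's appendix proof splits into cases according to whether $OPT_{\hat p}$ and $OPT_p$ are $A_\infty$ or finite thresholds. It introduces $\text{diff}_{\hat p}$ (the same two policies evaluated on $\hat p$) and bounds $\text{diff}_p-\text{diff}_{\hat p}$ by going through the optimal transport plan map by map:
``good'' maps are charged to their own EMD contribution;
``bad'' maps moving distance at least $\sqrt b$ carry at most $EMD/\sqrt b$ mass;
short bad maps are controlled by structural consequences of the optimality of $\hat K$, namely Claim~\ref{claim:if AK is opt policy then K to K+T cannot have too much mass} and the inequalities \eqref{eq:dagger}, \eqref{eq:ast}, \eqref{eq:chap}.

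You instead sandwich through $c((A_{\hat K})_{\hat p})$. You use only the tail form \eqref{eq:triangle}, the fact that $\mathbb{E}[\min(T,M)]$ is $1$-Lipschitz under $W_1$, and Lemma~\ref{lem:EMD-tail} applied once in each direction. Optimality of $\hat K$ is used exactly once, in $c((A_{\hat K})_{\hat p})\le c((A_{K+\sqrt b})_{\hat p})$.

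What your route buys:
\begin{itemize}
\item There is no case analysis. Identifying $A_\infty$ with $A_N$ handles the infinite thresholds uniformly, and any tie-breaking choice of $\hat K$ works.
\item You get an explicit bound $2(\sqrt b+\eta+\sqrt b\,\eta)$.
\item It explains the choice of $\sqrt b$. With a general integer shift $s\ge 1$ the same argument gives loss at most $2(s+\eta+b\eta/s)$. So $s=\sqrt b$ is the natural choice when $\eta$ is unknown, while $s\approx\sqrt{b\eta}$ essentially recovers the known-$\eta$ bound $O(\sqrt{b\eta}+\eta)$ of \citet{besbes}.
\end{itemize}

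The paper's map-by-map analysis mainly offers a finer picture of which mass movements actually cost the algorithm. For the theorem as stated, your argument is the cleaner one.
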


As mentioned, this follows directly from equation (E-28) in the online appendix of \citet{besbes} by setting $\delta=-\sqrt{b}$ in that equation.  However, we include an independently discovered proof of this theorem in Appendix~\ref{appendix:analysis of the main algorithm}, as we believe that it is more intuitive and direct.  

With this fact in hand, we can now prove our non-robust bound, i.e., that Algorithm~\ref{alg:new algorithm} has loss at most $O\!\left(\sqrt{b}\cdot \max\!\left(EMD(\hat p,p),1\right)\right)$. 

\begin{thm}
    \label{thm: root b*EMD bound for new algorithm}
    Let ALG be the corresponding output policy of Algorithm~\ref{alg:new algorithm}. Then we have
    \[c(ALG_p) - c(OPT_p) \leq O\left(\sqrt{b} \cdot \max\left(EMD(\hat p, p), 1\right)\right).\]
\end{thm}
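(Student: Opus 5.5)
We will reduce to Theorem~\ref{thm:main}. Write $\eta = EMD(\hat p,p)$. If $U \ge \hat K$, Algorithm~\ref{alg:new algorithm} coincides with the base algorithm, and Theorem~\ref{thm:main} gives the bound directly. So assume $U < \hat K$. Then the algorithm runs $A_{U+\sqrt b}$ where the base algorithm would run $A_{\hat K+\sqrt b}$. We bound the difference
\[
c((A_{U+\sqrt b})_p) - c((A_{\hat K+\sqrt b})_p)
\]
and add it to the base loss.

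\textbf{Comparing the two policies.} Set $K_1 = U+\sqrt b < K_2 = \hat K + \sqrt b$. The two policies agree on every $T\le K_1$.
\begin{itemize}
\item For $K_1 < T \le K_2$, the truncated policy pays $K_1+b$ while the base pays $T \ge K_1$. The difference is at most $b$.
\item For $T > K_2$, the truncated policy pays $K_1+b \le K_2+b$, so the difference is $\le 0$.
\end{itemize}
Hence
\[
c((A_{K_1})_p) - c((A_{K_2})_p) \le b \cdot \Pr_p[T > U+\sqrt b].
\]

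\textbf{Bounding the tail.} Apply Lemma~\ref{lem:EMD-tail} with $a=U$ and $s=\sqrt b$:
\[
\Pr_p[T>U+\sqrt b] \le \sum_{t>U}\hat p_t + \frac{\eta}{\sqrt b} \le \frac{1}{\sqrt b} + \frac{\eta}{\sqrt b}.
\]
So the extra cost is at most $b\cdot\frac{1+\eta}{\sqrt b} = \sqrt b\,(1+\eta) \le 2\sqrt b\max(\eta,1)$. Adding this to the base loss from Theorem~\ref{thm:main} yields
\[
c(ALG_p) - c(OPT_p) \le O(\sqrt b\max(\eta,1)).
\]

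\textbf{Expected obstacle.} There is essentially none. The only point to check is the pointwise cost comparison, including the case $\hat K=\infty$, where $K_2=\infty$ and the base policy pays $T$. In that case the truncated policy still pays at most $K_1+b$ when $T>K_1$, which is at most $T+b$. Since truncation only changes the outcome on the event $T>K_1$, the same bound of $b$ per affected realization holds.
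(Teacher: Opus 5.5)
Your proposal is correct and follows the paper's proof. You reduce to Theorem~\ref{thm:main}, bound the truncation loss by $b\cdot\sum_{t>U+\sqrt b}p_t$, and control that tail via Lemma~\ref{lem:EMD-tail} with $a=U$, $s=\sqrt b$ together with the definition of $U$. Your explicit pointwise case check, including $\hat K=\infty$, just spells out what the paper asserts in one line.
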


\begin{proof}
    Since $A_{\hat{K}}$ is the optimal policy for $\hat{p}$, we know from Theorem~\ref{thm:main} that 
    \[c((A_{\hat{K}+\sqrt b})_p) - c(OPT_p) \leq O\left(\sqrt{b} \cdot \max\left(EMD(\hat p, p), 1\right)\right).\]
    Therefore, to prove Theorem~\ref{thm: root b*EMD bound for new algorithm}, we only need to show that 
    \[c(ALG_p) - c((A_{\hat{K}+\sqrt b})_p)  \leq O\left(\sqrt{b} \cdot \max\left(EMD(\hat p, p), 1\right)\right).\]
    We call $c(ALG_p) - c((A_{\hat{K}+\sqrt b})_p)$ the expected \emph{truncation loss}. Note that $ALG$ and $A_{\hat{K}+\sqrt b}$ differ only if $ALG$ truncates earlier than $A_{\hat{K}+\sqrt b}$. This happens when $U+\sqrt{b} < \hat K + \sqrt{b}$ and $ALG$ buys on day $U+\sqrt b+1$ instead of $\hat K+\sqrt b+1$.
    In this case, the additional cost incurred by $ALG$ is at most $b$, and this occurs only when the realized rental length satisfies $t>U+\sqrt b$.
    Therefore, the expected truncation loss is bounded by
    \[c(ALG_p)-c\big((A_{\hat K+\sqrt b})_p\big) \leq b\cdot \sum_{t>U+\sqrt b} p_t .\]
    Applying Lemma~\ref{lem:EMD-tail} with $a=U$ and $s=\sqrt b$, we obtain
    \[\sum_{t>U+\sqrt b} p_t \;\le\; \sum_{t>U} \hat p_t + \frac{EMD(\hat p,p)}{\sqrt b}.\]
    By the definition of $U$, $\sum_{t>U} \hat p_t \le 1/\sqrt b$.
    Therefore,
    \[\sum_{t>U+\sqrt b} p_t \;\le\; \frac{1}{\sqrt b} + \frac{EMD(\hat p,p)}{\sqrt b}.\]
    Putting these together, we get that
    \[c(ALG_p)-c\big((A_{\hat K+\sqrt b})_p\big) \leq b \cdot \left( \frac{1}{\sqrt b} + \frac{EMD(\hat p,p)}{\sqrt b} \right) \leq O\left(\sqrt{b} \cdot \max\left(EMD(\hat p, p), 1\right)\right).\]
\end{proof}

Now we want to prove the robustness bound in Theorem~\ref{thm:new theorem}, i.e., that the additive loss of Algorithm~\ref{alg:new algorithm} is at most $O(b \log b)$.  First let us denote $\hat{Q}_K \coloneqq \sum_{t>K} \hat{p}_t, \ \ K=0,1,2,\dots$. Then a simple calculation implies that  
\[
c((A_K)_{\hat{p}})
= \sum_{t \leq K} t\hat{p}_t + \sum_{t>K} (K + b)\hat{p}_t
= \sum_{i=0}^{K-1} \hat{Q}_i + b \, \hat{Q}_K
\tag{$\triangle$}
\label{eq:triangle}
\]
For $L<\hat{K}$, we have $c((A_{\hat{K}})_{\hat{p}}) \leq c((A_{L})_{\hat{p}})$ since $A_{\hat{K}}$ is the optimal policy for $\hat{p}$. Writing this in terms of $\hat{Q}_i$ as above, \footnote{When $A_\infty$ is the optimal policy for $\hat{p}$, we have that for any finite $L$, $\sum_{i \geq L}\hat{Q}_i \leq b \hat{Q}_L$ for all $L$.} we get that 
\begin{align}
&\sum_{i=0}^{\hat K -1} \hat Q_i + b \hat Q_{\hat K} \leq \sum_{i=0}^{L-1} \hat Q_i + b \hat Q_L \notag \\
\implies &\sum_{i=L}^{\hat{K}-1} \hat{Q}_i \leq b \bigl(\hat{Q}_L - \hat{Q}_{\hat{K}}\bigr).
\tag{$\S$}\label{eq:S}
\end{align}

We now prove two useful lemmas.

\begin{lemma}
    \label{lm:tail mass after hat(K)-j}
    Let $r = \frac{b-1}{b}$. For every $j=0,1,\cdots,\hat{K}$, we have $\hat{Q}_{\hat{K}-j} \geq \frac{\hat{Q}_{\hat{K}}}{r^j}$.
\end{lemma}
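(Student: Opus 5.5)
We argue by induction on $j$, using the optimality inequality~\eqref{eq:S} with $L=\hat K - j$ and the fact that $\hat Q_i$ is non-increasing in $i$. The base case $j=0$ is trivial. For the inductive step, fix $j\ge 1$ with $\hat K - j \ge 0$ and set $L=\hat K-j$. Then~\eqref{eq:S} reads
\[\sum_{i=\hat K-j}^{\hat K-1} \hat Q_i \;\le\; b\bigl(\hat Q_{\hat K-j}-\hat Q_{\hat K}\bigr).\]

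On the left-hand side we split off the term $i=\hat K-j$, which equals $\hat Q_{\hat K-j}$. The remaining terms $i=\hat K-j+1,\dots,\hat K-1$ we bound below using the induction hypothesis. It is cleanest to use a strong induction: assume $\hat Q_{\hat K-m}\ge \hat Q_{\hat K}/r^m$ for all $m<j$, so each remaining term satisfies $\hat Q_i \ge \hat Q_{\hat K} r^{-(\hat K-i)}$. Rearranging gives
\[(b-1)\hat Q_{\hat K-j} \;\ge\; b\hat Q_{\hat K} + \sum_{m=1}^{j-1} \hat Q_{\hat K} r^{-m}.\]

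It remains to check that the right-hand side is at least $(b-1)\hat Q_{\hat K} r^{-j}$, i.e.
\[b + \sum_{m=1}^{j-1} r^{-m} \;\ge\; (b-1) r^{-j}.\]
Write $q=1/r=b/(b-1)$. Then $\sum_{m=1}^{j-1} q^m = q\,(q^{j-1}-1)/(q-1)$, and $q-1=1/(b-1)$, so this sum equals $(b-1)q(q^{j-1}-1)=b(q^{j-1}-1)$. Hence the left-hand side is $b\,q^{j-1}$. The right-hand side is $(b-1)q^j=(b-1)\cdot\frac{b}{b-1}q^{j-1}=b\,q^{j-1}$. So the inequality holds with equality, and dividing by $b-1>0$ yields $\hat Q_{\hat K-j}\ge \hat Q_{\hat K}/r^j$.

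The main obstacle is seeing that one needs strong induction, i.e.\ the full chain of earlier bounds rather than just monotonicity. Using only $\hat Q_i\ge \hat Q_{\hat K}$ would give the weaker bound $(b-1)\hat Q_{\hat K-j}\ge (b+j-1)\hat Q_{\hat K}$, which is linear rather than geometric in $j$.

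The other point to handle is the edge case $\hat K=\infty$. There the lemma is vacuous or interpreted via the footnote, so we only treat finite $\hat K$. For $j=1$ the sum over $m$ is empty and the argument gives $(b-1)\hat Q_{\hat K-1}\ge b\hat Q_{\hat K}$ directly.
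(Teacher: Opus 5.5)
Your proof is correct and is essentially the paper's argument: (strong) induction on $j$ using \eqref{eq:S} with $L=\hat K-j$, splitting off the $i=\hat K-j$ term, lower-bounding the remaining terms by the induction hypothesis, and summing the geometric series. The only difference is that you spell out the geometric-series computation, showing it closes with exact equality, where the paper just calls it a standard calculation.
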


\begin{proof}
    For simplicity, we write $q \coloneqq \hat{Q}_{\hat{K}}$. We prove the lemma by induction on $j$.
    
    When $j=0$, we have $\hat{Q}_{\hat{K}} = q \geq q$, as required.

    For the inductive step, assume that $\hat{Q}_{\hat{K}-j} \geq \frac{\hat{Q}_{\hat{K}}}{r^j}$ holds for $1,2,\cdots, j-1$. Apply $\eqref{eq:S}$ with $L = \hat{K} - j$. We have $\sum_{i=\hat{K}-j}^{\hat{K}-1} \hat{Q}_i \leq b \bigl(\hat{Q}_{\hat{K}-j} - q\bigr)$. Separating the term $\hat{Q}_{\hat{K}-j}$ from the left-hand side and rearranging terms give that
    \[(b-1)\hat{Q}_{\hat{K}-j} \geq  bq+ \sum_{m=1}^{j-1} \hat{Q}_{\hat{K}-m} \geq bq + \sum_{m=1}^{j-1} \frac{q}{r^m}\]
    where the last inequality above holds due to the induction hypothesis. Now a standard calculation for a geometric series and the fact that $r = \frac{b-1}{b}$ imply that 
    \[\hat{Q}_{\hat{K}-j} \geq \frac{b}{b-1} q + \frac{1}{b-1} \sum_{m=1}^{j-1} \frac{q}{r^m} = \frac{q}{r^j}. \qedhere\]
\end{proof}

\begin{lemma}
    \label{lm:distance jump from tail alpha to tail beta}
    Let $0<\beta<\alpha\le 1$. For any $\gamma\in[0,1]$, define
    $t(\gamma)\;:=\;\min\{\,t\in\mathbb{Z}_{\ge 0} : \hat Q_t \le \gamma\,\}$.
    Assume that $t(\alpha)<\hat K$ and $t(\beta)\le \hat K$. Then
    $t(\beta)\;\le\; t(\alpha)\;+\;\frac{b\,\alpha}{\beta}$.
\end{lemma}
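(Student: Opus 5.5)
The plan is to apply inequality \eqref{eq:S} with $L = t(\alpha)$ and compare both sides. The left-hand side counts tail masses that must still be above $\beta$, and the right-hand side is at most $b$ times a tail mass that is already at most $\alpha$.

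First, note that $t(\cdot)$ is well defined: $\hat Q_t \to 0$ (indeed $\hat Q_N = 0$), so every $\gamma \ge 0$ has a finite $t(\gamma)$. Since $\hat Q_t$ is nonincreasing in $t$ and $\beta < \alpha$, we have $t(\alpha) \le t(\beta)$. If $t(\beta) = t(\alpha)$ the claim is trivial, so assume $t(\alpha) < t(\beta) \le \hat K$. The hypothesis $t(\alpha) < \hat K$ is exactly what lets us use \eqref{eq:S} with $L = t(\alpha)$, because \eqref{eq:S} was derived for $L < \hat K$. It gives
\[
\sum_{i=t(\alpha)}^{\hat K - 1} \hat Q_i \;\le\; b\bigl(\hat Q_{t(\alpha)} - \hat Q_{\hat K}\bigr) \;\le\; b\,\hat Q_{t(\alpha)} \;\le\; b\,\alpha,
\]
where the last step uses the definition of $t(\alpha)$.

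Next, I lower-bound the left-hand side. By minimality of $t(\beta)$, every index $i < t(\beta)$ satisfies $\hat Q_i > \beta$. Since $t(\beta) \le \hat K$, the indices $i = t(\alpha), \dots, t(\beta)-1$ all lie in the summation range $[t(\alpha), \hat K - 1]$. All terms are nonnegative, so I drop the rest and get
\[
\bigl(t(\beta) - t(\alpha)\bigr)\,\beta \;<\; \sum_{i=t(\alpha)}^{t(\beta)-1} \hat Q_i \;\le\; \sum_{i=t(\alpha)}^{\hat K - 1} \hat Q_i \;\le\; b\,\alpha .
\]
Dividing by $\beta > 0$ yields $t(\beta) \le t(\alpha) + b\alpha/\beta$.

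There is no real obstacle here. The only points needing care are that the hypotheses $t(\alpha) < \hat K$ and $t(\beta) \le \hat K$ are exactly what is needed to invoke \eqref{eq:S} and to keep the indices $t(\alpha),\dots,t(\beta)-1$ inside its range. If $\hat K = \infty$, the footnote version of \eqref{eq:S}, namely $\sum_{i \ge L} \hat Q_i \le b\hat Q_L$, gives the same chain verbatim.
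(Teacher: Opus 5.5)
Your proof is correct and follows the paper's argument: apply \eqref{eq:S} with $L=t(\alpha)$ to bound the sum by $b\alpha$, then use minimality of $t(\beta)$ to lower-bound it by $(t(\beta)-t(\alpha))\beta$. Handling $t(\beta)=t(\alpha)$ as a separate case is a small improvement, since the paper's strict inequality implicitly assumes the range $t(\alpha),\dots,t(\beta)-1$ is nonempty.
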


\begin{proof}
    Since $t(\alpha)<\hat K$, taking $L=t(\alpha)$ in inequality~\eqref{eq:S} gives
    \[\sum_{i=t(\alpha)}^{\hat K-1} \hat Q_i \le b\,\hat Q_{t(\alpha)} \le b\,\alpha.\]
    Since $t(\beta)\le \hat K$, the indices $i=t(\alpha),\dots,t(\beta)-1$ are all at most $\hat K-1$.
    By the definition of $t(\beta)$, we have $\hat Q_i>\beta$ for each such $i$.
    Therefore,
    \[\sum_{i=t(\alpha)}^{\hat K-1} \hat Q_i \ge \sum_{i=t(\alpha)}^{t(\beta)-1} \hat Q_i > \bigl(t(\beta)-t(\alpha)\bigr)\beta. \]
    Combining the two inequalities above yields $\bigl(t(\beta)-t(\alpha)\bigr)\beta < b\,\alpha$,\footnote{When the optimal policy for $\hat{p}$ is $A_\infty$, we are able to follow the same route and get $t(\frac{\alpha}{2})<t(\alpha)+2b$, for every $\alpha \in (0,1]$.} which implies the lemma.
\end{proof}

Now we can prove the second part of Theorem~\ref{thm:new theorem}.

\begin{thm}
    \label{thm: b logb bound for new algorithm}
    Let ALG be the corresponding output policy of Algorithm~\ref{alg:new algorithm}. Then we have
    \[c(ALG_p) - c(OPT_p) \leq O(b \log b).\]
\end{thm}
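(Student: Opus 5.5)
The plan is to reduce the statement to showing that the threshold $K^*$ computed by Algorithm~\ref{alg:new algorithm} is $O(b\log b)$, and then to bound $\min(\hat K,U)$ using the two structural lemmas just proved.

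\textbf{Step 1 (reduction to bounding $K^*$).} For any finite $K$ and any point truth $T$, the policy $A_K$ pays at most $\min(T, K+b)$. Therefore
\[c((A_K)_T) - \min(T,b) \le K.\]
Every policy pays at least the hindsight optimum $\min(T,b)$ on each realization, so $c(OPT_p) \ge \mathbb{E}_{T\sim p}[\min(T,b)]$. Taking expectations gives, for every distribution $p$,
\[c((A_K)_p) \le c(OPT_p) + K.\]
Since $K^* = \min(\hat K,U)+\sqrt b$, it suffices to show $\min(\hat K, U) = O(b\log b)$. The $\sqrt b$ term is absorbed.

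\textbf{Step 2 (case $\hat K < U$).} Here $U$ has not been reached at $\hat K$, so $q := \hat Q_{\hat K} > 1/\sqrt b$. Apply Lemma~\ref{lm:tail mass after hat(K)-j} with $j=\hat K$. Since $\hat Q_0 \le 1$, we get $1 \ge q/r^{\hat K}$, that is, $r^{\hat K} \ge q > b^{-1/2}$. Taking logarithms and using $-\ln r = -\ln(1-1/b) \ge 1/b$ gives
\[\hat K \le \frac{\tfrac12 \ln b}{-\ln r} \le \tfrac12\, b\ln b.\]
This also covers $\hat K=\infty$ automatically, since then this case cannot occur ($U$ is finite).

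\textbf{Step 3 (case $U \le \hat K$, the main step).} Here I bound $U = t(1/\sqrt b)$ by repeated halving with Lemma~\ref{lm:distance jump from tail alpha to tail beta}. Take $\alpha_k = 2^{-k}$ for $k=0,1,\dots,m$, where $m=\lceil \log_2\sqrt b\rceil$, and replace the last level by $1/\sqrt b$, so that consecutive ratios are at most $2$. Note $t(1)=0$.

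For each consecutive pair $\alpha>\beta \ge 1/\sqrt b$, monotonicity gives $t(\beta)\le t(1/\sqrt b)=U\le \hat K$. If $t(\alpha)<\hat K$, the lemma gives $t(\beta)\le t(\alpha)+b\alpha/\beta \le t(\alpha)+2b$. If instead $t(\alpha)=\hat K$, then $\hat K = t(\alpha) \le t(\beta)\le \hat K$ forces $t(\beta)=t(\alpha)$, so the bound holds trivially.

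Summing over the $O(\log b)$ steps gives $U \le 2b\,(m+1) = O(b\log b)$. When $\hat K=\infty$, the footnoted variant $t(\alpha/2)<t(\alpha)+2b$ yields the same chain bound directly.

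\textbf{Main obstacle.} I expect the main obstacle to be this chaining in the second case: checking the hypotheses $t(\alpha)<\hat K$ and $t(\beta)\le\hat K$ at every level, including the boundary case $t(\alpha)=\hat K$ handled above. Combining the two cases gives $K^* = O(b\log b)$, and Step 1 then concludes the theorem.
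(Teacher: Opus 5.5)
Your Steps 2 and 3 are the paper's argument. When $\hat K<U$ you apply Lemma~\ref{lm:tail mass after hat(K)-j} with $j=\hat K$, and when $U\le\hat K$ you chain Lemma~\ref{lm:distance jump from tail alpha to tail beta} over dyadic levels down to $1/\sqrt b$. Your explicit treatment of the boundary case $t(\alpha)=\hat K$ is more careful than the paper. The paper only verifies $t(\alpha_j)\le\hat K$, even though the lemma requires $t(\alpha)<\hat K$.

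The real problem is Step 1, which is false as written. $A_K$ does not pay at most $\min(T,K+b)$: if $K<T<K+b$ it pays $K+b>T$. Consequently the inequality $c((A_K)_p)\le c(OPT_p)+K$ fails. For example, $K=0$ with $p$ a point mass at $T=1$ gives $c((A_0)_p)=b$ but $c(OPT_p)=1$. Values the algorithm can actually output fail too: $K=\sqrt b$ with $T=\sqrt b+1$ gives an excess of $b-1>K$.

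The correct per-realization bound against the hindsight optimum is $c((A_K)_T)-\min(T,b)\le\max(K,b-1)$. More simply, $A_K$ never pays more than $K+b$ and $c(OPT_p)\ge 0$, so the additive loss is at most $K^*+b$; this is how the paper concludes. Since $b=O(b\log b)$, this repair leaves your final bound intact.
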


\begin{proof}
    If $\hat K < U$, then by the definition of $U$ we have $\hat Q_{\hat K} > 1/\sqrt b$. Taking $j=\hat K$ in Lemma~\ref{lm:tail mass after hat(K)-j}, we obtain $1=\hat Q_0 \ge \frac{\hat Q_{\hat K}}{r^{\hat K}}$. Equivalently, $\hat Q_{\hat K} \le r^{\hat K} = \left(\frac{b-1}{b}\right)^{\hat K} \le e^{-\hat K/b}$.
    Combining this with $\hat Q_{\hat K} > 1/\sqrt b$ yields $\frac{1}{\sqrt b} < e^{-\hat K/b}$, which implies that $\hat K < \tfrac{1}{2} b \log b$. Hence, $K^* \le O(b\log b)$.

    If $U\le \hat K$, we show that $U\le O(b\log b)$.
    The idea is to apply Lemma~\ref{lm:distance jump from tail alpha to tail beta} at dyadic tail levels down to $\frac{1}{\sqrt{b}}$.
    Let $j^*:=\lceil \tfrac{1}{2}\log_2 b\rceil$.
    Then $2^{-(j^*-1)}>\frac{1}{\sqrt{b}} \ge 2^{-j^*}$.
    Define $\alpha:=2^{-(j^*-1)}$, so that $\alpha\in(\frac{1}{\sqrt{b}},\,\frac{2}{\sqrt{b}}]$.
    
    We first bound $t(\alpha)$.
    Starting from level $1$ and repeatedly halving, we consider the sequence $1,1/2,1/4,\dots,\alpha$.
    For each intermediate level $\alpha_j:=2^{-j}$ with $j\le j^*-1$, we have $\alpha_j>\frac{1}{\sqrt{b}}$, which implies $t(\alpha_j)\le U\le \hat K$.
    Therefore, we may iteratively apply Lemma~\ref{lm:distance jump from tail alpha to tail beta} with $\beta=\alpha_j/2$, yielding
    $t(\alpha)=t(2^{-(j^*-1)})\le 2b(j^*-1)$.
    
    We now consider the final step from level $\alpha$ down to $\frac{1}{\sqrt{b}}$.
    Applying Lemma~\ref{lm:distance jump from tail alpha to tail beta} with $\alpha=2^{-(j^*-1)}$ and $\beta=\frac{1}{\sqrt{b}}$, and using the fact that $t(\beta)=U\le \hat K$, we obtain
    $$U=t(1/\sqrt{b})\le t(\alpha)+ \frac{b\alpha}{1/\sqrt{b}}=t(\alpha)+b\alpha\sqrt b.$$
    Since $\alpha\le \frac{2}{\sqrt{b}}$, we have $b\alpha\sqrt b\le 2b$, and hence
    $U\le t(\alpha)+2b$.
    Combining with the bound on $t(\alpha)$ gives
    $U\le 2b(j^*-1)+2b=2bj^* = 2b\lceil \tfrac{1}{2}\log_2 b\rceil = O(b\log b)$.
    Thus, when $U\le \hat K$, we have $U\le O(b\log b)$.\footnote{When $A_\infty$ is the optimal policy for $\hat{p}$, we do not need to separate the final step because the lemma holds for all $\alpha$. In this case, just iterating the lemma starting from $\alpha=1$ and after $m$ halvings we have $t(2^{-m}) \leq 2bm$. The result holds by choosing $m$ with $2^{-m} \leq 
    \frac{1}{\sqrt{b}}$.}
    
    This implies that Algorithm~\ref{alg:new algorithm} always truncates by time $O(b\log b)$, and therefore incurs cost (and additive loss) at most $O(b\log b)$.
\end{proof}

\begin{proof}[Proof of Theorem~\ref{thm:new theorem}]
This is directly implied by taking the minimum of the two bounds in Theorems~\ref{thm: root b*EMD bound for new algorithm} and \ref{thm: b logb bound for new algorithm}.
\end{proof}

\section{Consistency vs. robustness tradeoff}
\label{sec: consistency vs robustness tradeoff}

In this section, we show that the robustness–consistency tradeoff achieved by our main algorithm is pareto-optimal (Theorem~\ref{thm:new lower bound}): we prove that when the prediction error is unknown, any algorithm with consistency $O(\sqrt{b})$ (the additive loss when the prediction error is $0$) must have robustness $\Omega(b \log b)$ (additive loss when the prediction error is arbitrarily large). Combined with the upper bound in Theorem~\ref{thm:new theorem}, this implies that the robustness–consistency tradeoff of Algorithm~\ref{alg:new algorithm} is tight.

\NewLowerBound*
\begin{proof}
    Let $r:=1-\tfrac{2}{b}$ and choose $K:=\lceil \tfrac{1}{2} b\log b\rceil$. We first construct a prediction $\hat p$.
    
    We use the same notation $\hat{Q}$ to represent the tail probability mass in $\hat{p}$ as we did in Section~\ref{sec: new algorithm}. Let $\hat Q_0=1$, and for $t=0,1,\dots,K-1$, define $\hat Q_{t+1}:=r\hat Q_t$, so that $\hat Q_t=r^t$ up to time $K$. For $t\ge K$, we switch to a slower decay and define $\hat Q_{t+1}:=(1-\tfrac{1}{2b})\hat Q_t$. We then define $\hat p$ by setting $\hat p_{t+1}:=\hat Q_t-\hat Q_{t+1}$. By construction, $\sum_t \hat p_t=1$.
    Since our setting assumes distributions with finite support $[N]$, one may replace $\hat{p}$ by a finite-support distribution obtained by aggregating all probability mass beyond $N$ and assigning it to $N$, for a sufficiently large $N$. This preserves $\sum_t \hat p_t=1$ and does not affect the analysis; for simplicity, we continue to denote the resulting distribution by $\hat{p}$.

    We now compute the optimal policy for $\hat p$. Equation~\eqref{eq:triangle} implies that $c((A_{t+1})_{\hat p})-c((A_t)_{\hat p})=\hat Q_t-b\hat p_{t+1}$.
    For $t<K$, we have $\hat Q_{t+1}=r\hat Q_t$, and hence $\hat p_{t+1}=\hat Q_t-\hat Q_{t+1}=(1-r)\hat Q_t=\tfrac{2}{b}\hat Q_t$. It follows that $c((A_{t+1})_{\hat p})-c((A_t)_{\hat p})=\hat Q_t-2\hat Q_t=-\hat Q_t<0$. Therefore, $c((A_{t+1})_{\hat p})<c((A_t)_{\hat p})$, and the cost strictly decreases up to time $K$.

    For $t\ge K$, we have $\hat p_{t+1}=\tfrac{1}{2b}\hat Q_t$, which implies that $c((A_{t+1})_{\hat p})-c((A_t)_{\hat p})=\hat Q_t-\tfrac{b}{2b}\hat Q_t=\tfrac{1}{2}\hat Q_t>0$. Thus, $c((A_{t+1})_{\hat p})>c((A_t)_{\hat p})$, and the cost strictly increases for all $t\ge K$. Since the cost increases for all $t\ge K$, the limit $A_{\infty}$ is strictly worse than $A_K$ as well. Consequently, $A_K$ is the optimal policy for $\hat p$.

    We first prove the lower bound for deterministic algorithms, and then extend it to randomized algorithms.

    Any deterministic algorithm corresponds to a policy $A_B$ for some fixed time $B\in\{0,1,2,\dots,\infty\}$. Assume that $A_B$ satisfies the consistency guarantee, i.e., assume that $c((A_B)_p)-c((A_K)_p)\leq O(\sqrt b)$ when $p = \hat p$.  We first show that this implies that $B = \Omega(b \log b)$.   
    
    Since $K=\Omega(b\log b)$, if $B\geq K$ then we immediately have $B=\Omega(b\log b)$. So we may assume that $B<K$.

    Recall that equation~\eqref{eq:triangle} says that $c((A_B)_p)=\sum_{i=0}^{B-1}\hat Q_i+b\hat Q_B$. For $i\leq K$, we have $\hat Q_i=r^i$, and hence $c((A_B)_p)=\frac{b}{2}(1-r^B)+br^B=\frac{b}{2}+\frac{b}{2}r^B$ and $c((A_K)_p)=\frac{b}{2}+\frac{b}{2}r^K$. Therefore, the consistency guarantee implies that $\frac{b}{2}(r^B-r^K)\leq C\sqrt b$ for some constant $C$, or equivalently, $r^B\leq r^K+\frac{2C}{\sqrt b}$. Since $r^K=(1-\frac{2}{b})^K\leq e^{-2K/b}\leq e^{-\log b}=1/b\leq C/\sqrt b$ for large $b$, we obtain $r^B\leq \frac{3C}{\sqrt b}$. Using the inequality $\log(1-x)\geq -x/(1-x)$ for $0<x<1$ with $x=2/b$, we have $\log r=\log(1-\frac{2}{b})\geq -\frac{2}{b-2}$. Thus $r^B=e^{B\log r}\geq e^{-2B/(b-2)}$. Combining the above bounds yields $e^{-2B/(b-2)}\leq \frac{3C}{\sqrt b}$, which implies $\frac{2B}{b-2}\geq \frac{1}{2}\log b-\log(3C)$. Therefore, $B\geq \frac{b-2}{4}\log b-\frac{b-2}{2}\log(3C)=\Omega(b\log b)$.

    So we have shown that any deterministic algorithm $A_B$ satisfying the consistency guarantee must satisfy $B=\Omega(b\log b)$. Suppose the true distribution is $p$ with $p_{b^{100}}=1$ and $p_t = 0$ for all $t \neq b^{100}$. In this case, the optimal policy is to pay $b$ to buy at the beginning, while $A_B$ will pay $\Omega(b \log b)$ to rent until time $B$.  So the additive loss is $\Omega(b\log b)$ as claimed.

    A randomized algorithm can be viewed as a convex combination of deterministic algorithms. Equivalently, it samples a threshold $B$ from some distribution and then runs the policy $A_B$. Define $\Delta(B):=c((A_B)_p)-c((A_K)_p)$. Thus, a randomized algorithm satisfying the consistency guarantee must satisfy $\mathbb{E}[\Delta(B)]\leq C\sqrt b$ for some constant $C$. Note that the function $B\mapsto \Delta(B)$ is strictly decreasing on $B\leq K$ since $r^B$ is strictly decreasing. Let $B_0$ be the largest value such that $\Delta(B_0)\geq 2C\sqrt b$. Then for all $B\leq B_0$, we have $\Delta(B)\geq 2C\sqrt b$. Since $\Delta(B)\geq 0$ for all $B$, it follows that $\Pr(B\leq B_0)\cdot 2C\sqrt b\leq \mathbb{E}[\Delta(B)]\leq C\sqrt b$, and hence $\Pr(B\leq B_0)\leq \frac{1}{2}$. Therefore, $\mathbb{E}[B]\geq \Pr(B>B_0)\cdot B_0\geq \frac{1}{2}B_0$. On the other hand, since $\Delta(B_0)\geq 2C\sqrt b$ and $\Delta(B_0+1)<2C\sqrt b$, following the same reasoning as in the deterministic case implies that $r^{B_0}=\Theta(\frac{1}{\sqrt b})$. Taking logarithms yields $B_0=\Theta(b\log b)$. Consequently, $\mathbb{E}[B]\geq \Omega(b\log b)$. Finally, letting the true distribution be $p$ with $p_{b^{100}}=1$ and $p_t = 0$ for all $t \neq b^{100}$ shows that the additive loss is $\Omega(b\log b)$. This completes the proof for randomized algorithms.
\end{proof}

\bibliographystyle{plainnat}
\bibliography{mybibliography}

@article{LykourisANDVassilvitskii,
author = {Lykouris, Thodoris and Vassilvitskii, Sergei},
title = {Competitive Caching with Machine Learned Advice},
year = {2021},
issue_date = {August 2021},
publisher = {Association for Computing Machinery},
address = {New York, NY, USA},
volume = {68},
number = {4},
issn = {0004-5411},
url = {https://doi.org/10.1145/3447579},
doi = {10.1145/3447579},
journal = {J. ACM},
month = jul,
articleno = {24},
numpages = {25}
}

@inproceedings{dinitz2024binarysearchdistributionalpredictions,
 author = {Dinitz, Michael and Im, Sungjin and Lavastida, Thomas and Moseley, Benjamin and Niaparast, Aidin and Vassilvitskii, Sergei},
 booktitle = {Advances in Neural Information Processing Systems},
 editor = {A. Globerson and L. Mackey and D. Belgrave and A. Fan and U. Paquet and J. Tomczak and C. Zhang},
 pages = {90456--90472},
 publisher = {Curran Associates, Inc.},
 title = {Binary Search with Distributional Predictions},
 url = {https://proceedings.neurips.cc/paper_files/paper/2024/file/a4b293979b8b521e9222d30c40246911-Paper-Conference.pdf},
 volume = {37},
 year = {2024}
}

@inproceedings{kumar2024improvingonlinealgorithmsml,
 author = {Purohit, Manish and Svitkina, Zoya and Kumar, Ravi},
 booktitle = {Advances in Neural Information Processing Systems},
 editor = {S. Bengio and H. Wallach and H. Larochelle and K. Grauman and N. Cesa-Bianchi and R. Garnett},
 pages = {},
 publisher = {Curran Associates, Inc.},
 title = {Improving Online Algorithms via ML Predictions},
 url = {https://proceedings.neurips.cc/paper_files/paper/2018/file/73a427badebe0e32caa2e1fc7530b7f3-Paper.pdf},
 volume = {31},
 year = {2018}
}

@techreport{centroidlowerbound,
  author = {Cohen, Scott and Guibas, Leonidas},
  title = {The Earth Mover''s Distance: Lower Bounds and Invariance under Translation},
  year = {1997},
  institution = {Stanford University Vision Laboratory},
  publisher = {Stanford University},
  address = {Stanford, CA, USA}
}

@inproceedings{wang2020onlinealgorithmsmultishopski,
 author = {Wang, Shufan and Li, Jian and Wang, Shiqiang},
 booktitle = {Advances in Neural Information Processing Systems},
 editor = {H. Larochelle and M. Ranzato and R. Hadsell and M.F. Balcan and H. Lin},
 pages = {8150--8160},
 publisher = {Curran Associates, Inc.},
 title = {Online Algorithms for Multi-shop Ski Rental with Machine Learned Advice},
 url = {https://proceedings.neurips.cc/paper_files/paper/2020/file/5cc4bb753030a3d804351b2dfec0d8b5-Paper.pdf},
 volume = {33},
 year = {2020}
}

@inproceedings{BHATTACHARYA202239,
author = {Bhattacharya, Arghya and Das, Rathish},
title = {Machine Learning Advised Ski Rental Problem with a Discount},
year = {2022},
isbn = {978-3-030-96730-7},
publisher = {Springer-Verlag},
address = {Berlin, Heidelberg},
url = {https://doi.org/10.1007/978-3-030-96731-4_18},
doi = {10.1007/978-3-030-96731-4_18},
booktitle = {WALCOM: Algorithms and Computation: 16th International Conference and Workshops, WALCOM 2022, Jember, Indonesia, March 24–26, 2022, Proceedings},
pages = {213–224},
numpages = {12},
location = {Jember, Indonesia}
}

@inproceedings{antoniadis2021learningaugmenteddynamicpowermanagement,
 author = {Antoniadis, Antonios and Coester, Christian and Elias, Marek and Polak, Adam and Simon, Bertrand},
 booktitle = {Advances in Neural Information Processing Systems},
 editor = {M. Ranzato and A. Beygelzimer and Y. Dauphin and P.S. Liang and J. Wortman Vaughan},
 pages = {16714--16726},
 publisher = {Curran Associates, Inc.},
 title = {Learning-Augmented Dynamic Power Management with Multiple States via New Ski Rental Bounds},
 url = {https://proceedings.neurips.cc/paper_files/paper/2021/file/8b8388180314a337c9aa3c5aa8e2f37a-Paper.pdf},
 volume = {34},
 year = {2021}
}

@misc{shen2025algorithmscalibratedmachinelearning,
      title={Algorithms with Calibrated Machine Learning Predictions}, 
      author={Judy Hanwen Shen and Ellen Vitercik and Anders Wikum},
      year={2025},
      eprint={2502.02861},
      archivePrefix={arXiv},
      primaryClass={stat.ML},
      url={https://arxiv.org/abs/2502.02861}, 
}

@inproceedings{diakonikolas2021learning,
  title     = {Learning Online Algorithms with Distributional Advice},
  author    = {Diakonikolas, Ilias and Kontonis, Vasilis and Tzamos, Christos and Vakilian, Ali and Zarifis, Nikos},
  booktitle = {Proceedings of the 38th International Conference on Machine Learning},
  pages     = {2687--2696},
  year      = {2021},
  volume    = {139},
  series    = {Proceedings of Machine Learning Research},
  publisher = {PMLR},
  url       = {https://proceedings.mlr.press/v139/diakonikolas21a.html}
}

@article{besbes,
  author       = {Omar Besbes and
                  Will Ma and
                  Omar Mouchtaki},
  title        = {Beyond {IID:} Data-Driven Decision Making in Heterogeneous Environments},
  journal      = {Manag. Sci.},
  volume       = {71},
  number       = {12},
  pages        = {10538--10555},
  year         = {2025},
  url          = {https://doi.org/10.1287/mnsc.2022.03448},
  doi          = {10.1287/MNSC.2022.03448},
  bibsource    = {dblp computer science bibliography, https://dblp.org}
}

@INPROCEEDINGS{Karlin1986CompetitiveSC,
  author={Karlin, Anna R. and Manasse, Mark S. and Rudolph, Larry and Sleator, Daniel D.},
  booktitle={27th Annual Symposium on Foundations of Computer Science (sfcs 1986)}, 
  title={Competitive snoopy caching}, 
  year={1986},
  volume={},
  number={},
  pages={244-254},
  doi={10.1109/SFCS.1986.14}}

@article{MV02,
author = {Mitzenmacher, Michael and Vassilvitskii, Sergei},
title = {Algorithms with predictions},
year = {2022},
issue_date = {July 2022},
publisher = {Association for Computing Machinery},
address = {New York, NY, USA},
volume = {65},
number = {7},
issn = {0001-0782},
url = {https://doi.org/10.1145/3528087},
doi = {10.1145/3528087},
journal = {Commun. ACM},
month = jun,
pages = {33–35},
numpages = {3}
}

@misc{ALPSweb,
  title        = {Algorithms with Predictions
},
  author       = {Lindermayr, Alexander and Megow, Nicole},
  year         = 2025,
  note         = {\url{https://algorithms-with-predictions.github.io/}}
}

@misc{shin2023optimalconsistencyrobustnesstradeofflearningaugmented,
      title={On Optimal Consistency-Robustness Trade-Off for Learning-Augmented Multi-Option Ski Rental}, 
      author={Yongho Shin and Changyeol Lee and Hyung-Chan An},
      year={2023},
      eprint={2312.02547},
      archivePrefix={arXiv},
      primaryClass={cs.DS},
      url={https://arxiv.org/abs/2312.02547}, 
}

@InProceedings{pmlr-v202-shin23c,
  title = 	 {Improved Learning-Augmented Algorithms for the Multi-Option Ski Rental Problem via Best-Possible Competitive Analysis},
  author =       {Shin, Yongho and Lee, Changyeol and Lee, Gukryeol and An, Hyung-Chan},
  booktitle = 	 {Proceedings of the 40th International Conference on Machine Learning},
  pages = 	 {31539--31561},
  year = 	 {2023},
  editor = 	 {Krause, Andreas and Brunskill, Emma and Cho, Kyunghyun and Engelhardt, Barbara and Sabato, Sivan and Scarlett, Jonathan},
  volume = 	 {202},
  series = 	 {Proceedings of Machine Learning Research},
  month = 	 {23--29 Jul},
  publisher =    {PMLR},
  url = 	 {https://proceedings.mlr.press/v202/shin23c.html},
}

\appendix

\section{Further robustification}
\label{sec:further robustification}

When $EMD(\hat p,p)$ is large (larger than $\sqrt{b} \log b$), Algorithm~\ref{alg:new algorithm} could incur a $\Theta(b \log b)$ cost, which is a robustness guarantee by the algorithm itself for free. However, it is still desirable to explicitly interpolate between this algorithm and the classical worst-case strategy to achieve a stronger robustness guarantee. In this section, we present a further robustification of Algorithm~\ref{alg:new algorithm} that preserves benefits of the prediction and the free robustness while providing additional explicit worst-case control.

Recall that the classical worst-case algorithm for the ski rental problem buys skis at the beginning of day $b$, achieving the optimal competitive ratio of $2$ and additive loss of $b$~\cite{Karlin1986CompetitiveSC}. A natural idea to robustify our algorithm is to ``push'' our algorithm closer to the traditional worst-case algorithm, making them as similar as possible. To do this, we introduce a hyperparameter $\lambda\in[0,1]$ that controls the degree of robustification. Algorithm~\ref{alg:new lambda-robustification} is a robustification of Algorithm~\ref{alg:new algorithm} that uses $\lambda$ to decide how much to ``push'' Algorithm~\ref{alg:new algorithm} towards the worst-case algorithm. As before, we assume without loss of generality that $\sqrt b$ is an integer; otherwise, we use $\lfloor\sqrt b\rfloor$ instead.

\begin{algorithm}
\caption{Further robustification of Algorithm~\ref{alg:new algorithm}}
\label{alg:new lambda-robustification}
\begin{algorithmic}[1]

\STATE Compute the optimal policy $A_{\hat{K}}$ for the prediction $\hat{p}$.
\STATE Compute $U$ such that $\sum_{t>U} \hat{p}_t \le \frac{1}{\sqrt{b}}$.
\STATE Set $K^* = \min(\hat{K} + \sqrt{b},\, U + \sqrt{b})$.

\IF{$K^* \le b$}
    \IF{$K^* < \lceil \lambda b \rceil$}
        \STATE Buy at the beginning of day $\lceil \lambda b \rceil$.
    \ELSE
        \STATE Buy at the beginning of day $K^* + 1$.
    \ENDIF
\ELSE
    \IF{$K^* \ge \lceil b/\lambda \rceil$}
        \STATE Buy at the beginning of day $\lceil b/\lambda \rceil$.
    \ELSE
        \STATE Buy at the beginning of day $K^* + 1$.
    \ENDIF
\ENDIF

\end{algorithmic}
\end{algorithm}

We have the following theorem.

\newrobust*

\begin{proof}[Proof of Theorem~\ref{thm: new robustification}]

If $\lambda=0$, we run Algorithm~\ref{alg:new algorithm} so that the first term obviously holds. The second term goes to $\infty$, thus Algorithm~\ref{alg:new algorithm} is the required algorithm. Now, let us consider the main case when $\lambda>0$ where we use Algorithm~\ref{alg:new lambda-robustification}.
In the following analysis, we break into cases depending on the output of Algorithm~\ref{alg:new lambda-robustification}. 

    \paragraph{Case 1.} If the output of Algorithm~\ref{alg:new lambda-robustification} is to buy at the beginning of day $\lceil \lambda b \rceil$. We know that in this case we have $K^* < \lceil \lambda b \rceil $. 

    First, let us show the second part of the bound (which leads to the robustness), i.e., we want to show that $c((A_{\lceil \lambda b \rceil - 1})_{p}) \leq (1+\frac{1}{\lambda})c(OPT_p)$. Here $LHS = \sum _{t \leq \lceil \lambda b \rceil - 1} tp_t + \sum_{t \geq \lceil \lambda b \rceil} (\lceil \lambda b \rceil - 1 + b)p_t$.
    
    If $OPT_p = A_\infty$, then $RHS=(1+\frac{1}{\lambda})\sum_t tp_t$. To show that $LHS \leq RHS$, we only need to show that $\sum_{t \geq \lceil \lambda b \rceil} (\lceil \lambda b \rceil - 1 + b)p_t  \leq (1+\frac{1}{\lambda})\sum_{t \geq \lceil \lambda b \rceil} tp_t$. This holds because
    \[\sum_{t \geq \lceil \lambda b \rceil} (\lceil \lambda b \rceil - 1 + b)p_t  \leq \sum_{t \geq \lceil \lambda b \rceil} (\lambda b + b)p_t = \frac{\lambda+1}{\lambda} \sum_{t \geq \lceil \lambda b \rceil} (\lambda b )p_t \leq (1+\frac{1}{\lambda})\sum_{t \geq \lceil \lambda b \rceil} tp_t.\]
    If $OPT_p = A_K$, then $RHS=(1+\frac{1}{\lambda}) \left[ \sum_{t \leq K}tp_t + \sum_{t>K}(K+b)p_t \right]$. We check that the coefficient of term $p_t$ in the $LHS$ is at most that in the $RHS$ for each $t$. 
    \begin{align*}
    \text{If } K \leq \lceil \lambda b \rceil - 1, \text{ then } 
    \begin{cases}
        t \leq (1+\frac{1}{\lambda})t & \text{for } t \leq K, \\
        t \leq K+b \leq (1+\frac{1}{\lambda})(K+b) & \text{for } K<t \leq \lceil \lambda b \rceil - 1, \\
        \lceil \lambda b \rceil - 1 +b \leq (1+\frac{1}{\lambda}) \lceil \lambda b \rceil \leq (1+\frac{1}{\lambda})(K+b) & \text{for } t \geq \lceil \lambda b \rceil.
    \end{cases}
    \end{align*}

    \begin{align*}
    \text{If } \lceil \lambda b \rceil - 1 < K, \text{ then } 
    \begin{cases}
        t \leq (1+\frac{1}{\lambda})t & \text{for } t \leq \lceil \lambda b \rceil - 1, \\
        \lceil \lambda b \rceil - 1 + b \leq (1+\frac{1}{\lambda}) \lceil \lambda b \rceil \leq (1+\frac{1}{\lambda}) t & \text{for } \lceil \lambda b \rceil \leq t \leq K, \\
        \lceil \lambda b \rceil - 1 + b \leq (1+\frac{1}{\lambda}) \lceil \lambda b \rceil \leq  (1+\frac{1}{\lambda})(K+b) & \text{for } t >K.
    \end{cases}
    \end{align*}

    Then, let us show the first part of the bound (which leads to the consistency). 

    We know from Theorem~\ref{thm:new theorem} that $c((A_{K^*})_p) \leq c(OPT_p) + \min \left(O(\sqrt{b} \max(EMD(\hat{p},p),1)), O(b \log b)\right)$. Thus,  we only need to show that $c((A_{\lceil \lambda b \rceil - 1})_{p}) \leq (1+\lambda)c((A_{K^*})_p)$. That is, we want to show
    \[\sum _{t \leq \lceil \lambda b \rceil - 1} tp_t + \sum_{t \geq \lceil \lambda b \rceil} (\lceil \lambda b \rceil - 1 + b)p_t \leq (1+\lambda)\left[ \sum_{t \leq K^*}tp_t + \sum_{t>K^*}(K^*+b)p_t \right].\]
    Note that $K^* \leq \lceil \lambda b \rceil - 1$, the above inequality holds since
    \begin{align*} 
    \begin{cases}
        t \leq (1+\lambda)t & \text{for } t \leq K^*, \\
        t \leq  \lceil \lambda b \rceil - 1< \lambda b  \leq (1+\lambda)(K^*+b) & \text{for } K^*<t \leq \lceil \lambda b \rceil - 1, \\
        \lceil \lambda b \rceil - 1 +b < \lambda b +b=(1+\lambda)b \leq (1+\lambda)(K^*+b) & \text{for } t \geq \lceil \lambda b \rceil.
    \end{cases}
    \end{align*}

    \paragraph{Case 2.} If the output of Algorithm~\ref{alg:new lambda-robustification} is to buy at the beginning of day $\lceil b/\lambda \rceil$. We know that in this case we have $K^* \geq \lceil b/\lambda \rceil$. 

    First, let us show the second part of the bound. We want to show that $c((A_{\lceil b/\lambda \rceil - 1})_p) \leq (1+\frac{1}{\lambda})c(OPT_p)$. Here, $LHS=\sum _{t \leq \lceil b/\lambda \rceil - 1} tp_t + \sum_{t \geq \lceil b/\lambda  \rceil} (\lceil b/\lambda  \rceil - 1 + b)p_t$.
    
    If $OPT_p = A_\infty$, then $RHS=(1+\frac{1}{\lambda})\sum_t tp_t$. To show that $LHS \leq RHS$, we only need to show that $\sum_{t \geq \lceil b/\lambda  \rceil} (\lceil b/\lambda  \rceil - 1 + b)p_t  \leq (1+\frac{1}{\lambda})\sum_{t \geq \lceil b/\lambda  \rceil} tp_t$. This holds because
    \[\sum_{t \geq \lceil b/\lambda  \rceil} (\lceil b/\lambda  \rceil - 1 + b)p_t  \leq \sum_{t \geq \lceil b/\lambda \rceil} (\lceil b/\lambda  \rceil + \frac{1}{\lambda} \lceil b/\lambda  \rceil)p_t = \frac{\lambda+1}{\lambda} \sum_{t \geq \lceil b/\lambda  \rceil} \lceil b/\lambda  \rceil p_t \leq (1+\frac{1}{\lambda})\sum_{t \geq \lceil b/\lambda  \rceil} tp_t.\]
    
    If $OPT_p = A_K$, then $RHS=(1+\frac{1}{\lambda}) \left[ \sum_{t \leq K}tp_t + \sum_{t>K}(K+b)p_t \right]$. We check that the coefficient of term $p_t$ in the $LHS$ is at most that in the $RHS$ for each $t$. 
    \begin{align*}
    \text{If } K \leq \lceil b/\lambda  \rceil - 1, \text{ then } 
    \begin{cases}
        t \leq (1+\frac{1}{\lambda})t & \text{for } t \leq K, \\
        t \leq \lceil b/\lambda  \rceil - 1 < b/\lambda \leq (1+\frac{1}{\lambda})(K+b) & \text{for } K<t \leq \lceil b/\lambda  \rceil - 1, \\
        \lceil b/\lambda \rceil - 1 +b \leq b/\lambda + b \leq (1+\frac{1}{\lambda})(K+b) & \text{for } t \geq \lceil b/\lambda  \rceil.
    \end{cases}
    \end{align*}

    \begin{align*}
    \text{If } \lceil b/\lambda  \rceil - 1 < K, \text{ then } 
    \begin{cases}
        t \leq (1+\frac{1}{\lambda})t & \text{for } t \leq \lceil b/\lambda \rceil - 1, \\
        \lceil b/\lambda  \rceil - 1 + b \leq  (1+\frac{1}{\lambda})(\lceil b/\lambda  \rceil - 1) \leq (1+\frac{1}{\lambda}) t & \text{for } \lceil b/\lambda  \rceil - 1 < t \leq K, \\
        \lceil b/\lambda  \rceil - 1 + b \leq b/\lambda + b \leq  (1+\frac{1}{\lambda})(K+b) & \text{for } t >K.
    \end{cases}
    \end{align*}

    Then, let us show the first part of the bound. 

    We know from Theorem~\ref{thm:new theorem} that $c((A_{K^*})_p) \leq c(OPT_p) + \min \left(O(\sqrt{b} \max(EMD(\hat{p},p),1)), O(b \log b)\right)$. Thus, we only need to show that $c((A_{\lceil b/\lambda \rceil - 1})_p) \leq (1+\lambda)c((A_{K^*})_p)$. That is, we want to show
    \[\sum _{t \leq \lceil b/\lambda  \rceil - 1} tp_t + \sum_{t \geq \lceil b/\lambda  \rceil} (\lceil b/\lambda  \rceil - 1 + b)p_t \leq (1+\lambda)\left[ \sum_{t \leq K^*}tp_t + \sum_{t>K^*}(K^*+b)p_t \right].\]
    Note that $\lceil b/\lambda  \rceil \leq K^* $, the above inequality holds since
    \begin{align*} 
    \begin{cases}
        t \leq (1+\lambda)t & \text{for } t \leq \lceil b/\lambda  \rceil - 1, \\
        \lceil b/\lambda  \rceil - 1 + b \leq  \lceil b/\lambda  \rceil + \lambda \lceil b/\lambda  \rceil= (1+\lambda)\lceil b/\lambda  \rceil  \leq (1+\lambda)t & \text{for } \lceil b/\lambda  \rceil \leq t \leq K^*, \\
        \lceil b/\lambda  \rceil - 1 + b \leq (1+\lambda)\lceil b/\lambda  \rceil \leq (1+\lambda)K^* \leq (1+\lambda)(K^*+b) & \text{for } t > K^*.
    \end{cases}
    \end{align*}

    \paragraph{Case 3.} If the output of Algorithm~\ref{alg:new lambda-robustification} is to buy at the beginning of day $K^* +1$. We know that in this case we have $\lceil \lambda b \rceil \leq K^* < \lceil b/\lambda \rceil $. 

    For the first part of the bound, we know from Theorem~\ref{thm:new theorem} that $c((A_{K^*})_p) \leq c(OPT_p) + \min \left(O(\sqrt{b} \max(EMD(\hat{p},p),1)), O(b \log b)\right)$. The cost of our algorithm is exactly $c((A_{K^*})_p)$, thus it holds trivially.

    For the second part of the bound, we want to show that $c((A_{K^*})_p) \leq (1+\frac{1}{\lambda})c(OPT_p)$. Here, $LHS=\sum _{t \leq K^*} tp_t + \sum_{t > K^*} (K^* + b)p_t$.

    If $OPT_p = A_\infty$, then $RHS=(1+\frac{1}{\lambda})\sum_t tp_t$. To show that $LHS \leq RHS$, we only need to show that $\sum_{t > K^*} (K^* + b)p_t  \leq (1+\frac{1}{\lambda})\sum_{t > K^*} tp_t$. This holds because
    \[\sum_{t > K^*} (K^* + b)p_t  \leq \sum_{t > K^*} (t + \frac{1}{\lambda} \lceil \lambda b \rceil)p_t \leq \sum_{t > K^*} (t + \frac{1}{\lambda} K^*)p_t \leq \sum_{t > K^*} (1+\frac{1}{\lambda}) tp_t.\]

    If $OPT_p = A_K$, then $RHS=(1+\frac{1}{\lambda}) \left[ \sum_{t \leq K}tp_t + \sum_{t>K}(K+b)p_t \right]$. We check that the coefficient of term $p_t$ in the $LHS$ is at most that in the $RHS$ for each $t$. 
    \begin{align*}
    \text{If } K \leq K^*, \text{ then } 
    \begin{cases}
        t \leq (1+\frac{1}{\lambda})t & \text{for } t \leq K, \\
        t \leq K^* \leq \lceil b/\lambda \rceil - 1 < b/\lambda \leq (1+\frac{1}{\lambda})(K+b) & \text{for } K<t \leq K^*, \\
        K^* +b < b/\lambda + b = (1+\frac{1}{\lambda})b \leq (1+\frac{1}{\lambda})(K+b) & \text{for } t > K^*.
    \end{cases}
    \end{align*}

    \begin{align*}
    \text{If } K^* < K, \text{ then } 
    \begin{cases}
        t \leq (1+\frac{1}{\lambda})t & \text{for } t \leq K^*, \\
        K^* + b \leq t + \frac{1}{\lambda} \lceil \lambda b \rceil \leq t + \frac{1}{\lambda} K^* \leq (1+\frac{1}{\lambda} ) t & \text{for } K^* < t \leq K, \\
        K^* +b < b/\lambda + b = (1+\frac{1}{\lambda})b \leq (1+\frac{1}{\lambda})(K+b) & \text{for } t >K.
    \end{cases}
    \end{align*}
    
    Moreover,
    \[\sum_{t > K^*} (K^* + b)p_t  \leq \sum_{t > K^*} (t + \frac{1}{\lambda} \lceil \lambda b \rceil)p_t \leq \sum_{t > K^*} (t + \frac{1}{\lambda} K^*)p_t \leq \sum_{t > K^*} (1+\frac{1}{\lambda}) tp_t.\]

    Thus, further robustness is proven as required.
\end{proof}

We get the following corollary of Theorem~\ref{thm: new robustification} by just dividing both sides by $OPT_p$; this form is directly analogous to the bounds of~\cite{kumar2024improvingonlinealgorithmsml}. 

\begin{corollary}
    \label{cor: multiplicative bounds with robust}
    There is an algorithm $ALG$ which takes as input a parameter $\lambda \in [0,1]$ and a distributional prediction $\hat p$, and has \[\frac{c(ALG_p)}{c(OPT_p)} \leq \min \left\{ (1+\lambda)\left(1+\frac{\mathcal{B}_{\hat{p},p}}{c(OPT_p)}\right) , 1+\frac{1}{\lambda}\right\}.\]
    where $\mathcal{B}_{\hat{p},p} = O\left( \min\left(   \sqrt{b} \max\left(EMD(\hat{p},p),1\right), \ b \log b \right)\right)$.
\end{corollary}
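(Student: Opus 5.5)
The corollary follows from Theorem~\ref{thm: new robustification} by dividing by $c(OPT_p)$. The plan is to take $ALG$ to be exactly the algorithm of that theorem, namely Algorithm~\ref{alg:new algorithm} when $\lambda=0$ and Algorithm~\ref{alg:new lambda-robustification} when $\lambda\in(0,1]$. That theorem gives
\[c(ALG_p)\le \min\left\{(1+\lambda)\left(c(OPT_p)+\mathcal{B}_{\hat p,p}\right),\ \left(1+\tfrac{1}{\lambda}\right)c(OPT_p)\right\}.\]

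First I will check that $c(OPT_p)>0$, so that the division is legitimate and preserves the inequality. Every policy $A_K$ with $K\ge 0$ pays at least $\min(t,b)\ge 1$ whenever the realized $T=t\ge 1$, since $b>1$. The policy $A_\infty$ pays $t\ge 1$. As $p$ is supported on $[N]$, which consists of positive integers, it follows that $c(OPT_p)\ge 1>0$.

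Next I will divide both arguments of the minimum by $c(OPT_p)$. Because this is multiplication by the positive scalar $1/c(OPT_p)$, it commutes with taking the minimum. The first term becomes $(1+\lambda)\left(1+\mathcal{B}_{\hat p,p}/c(OPT_p)\right)$ and the second becomes $1+\frac{1}{\lambda}$. This is exactly the stated bound, with the same $\mathcal{B}_{\hat p,p}$ as in Theorem~\ref{thm: new robustification}.

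The only point needing care is the edge case $\lambda=0$, where $1+\frac1\lambda$ is read as $+\infty$, just as in the theorem. There the minimum reduces to the first term, which is Theorem~\ref{thm:new theorem} divided by $c(OPT_p)$. I do not expect any real obstacle: all the substance lives in Theorem~\ref{thm: new robustification}.
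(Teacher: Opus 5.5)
Your proposal is correct and follows the paper's own argument: the paper also obtains the corollary by dividing the bound of Theorem~\ref{thm: new robustification} by $c(OPT_p)$. Your extra checks are the details the paper leaves implicit, namely that $c(OPT_p)\ge 1>0$ (so the division preserves the inequality and commutes with the minimum) and that $1+\frac{1}{\lambda}$ is read as $+\infty$ when $\lambda=0$.
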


\section{Technical proofs of lower bounds}
\label{appendix:proofs of three lower bounds}

This section gives the technical proofs of the three lower bounds (Theorems~\ref{thm:lower bound any sublinear in b}~\ref{thm:lower bound when EMD is in 1/b additive loss is at least Omega(1)}~\ref{thm:lower bound for the multiplicative sqrt{b}EMD}) mentioned in Section~\ref{sec:results}.

\subsection*{Proof of Theorem~\ref{thm:lower bound any sublinear in b}}
\label{appendix:proof_lowerbound_1}
\begin{proof}
Let $r=1-\epsilon$. We assume that $b^{\frac{r+1}{2}}$ is an integer; otherwise, we consider $\lfloor b^{\frac{r+1}{2}} \rfloor$ instead. Consider the following $b^{\frac{r+1}{2}}+2$ different probability distributions $p^{(j)}, 1 \leq j \leq b^{\frac{r+1}{2}}+2$, where \begin{align*}
    p^{(j)}_t &= 
    \begin{cases} 
      \delta & \ \text{if } t = j \\
      1-\delta & \ \text{if } t = b^{\frac{r+1}{2}} + 1 + b
    \end{cases} 
    \end{align*}
    and $\delta = b^{\frac{r+1}{2}-1}+\frac{2}{b}+\frac{1}{b^2}$. Note that the $EMD$ between any two distributions is bounded by $O(b^{\frac{r+1}{2}} \cdot b^{\frac{r+1}{2}-1}) = O(b^r)$. Consider any prediction $\hat{p}$ such that $EMD(\hat{p},p^{(j)}) \leq O(b^r)$ for every $1 \leq j \leq b^{\frac{r+1}{2}}+2$. We claim that when $\hat{p}$ is our prediction, there is no algorithm $ALG$ such that the additive loss $\text{diff}_p$ is bounded by $O(EMD(\hat{p},p))+O(b^r)$. 

    To show this, let $ALG$ be the algorithm we use. If $ALG=A_K$ for some $0 \leq K \leq b^{\frac{r+1}{2}}$, consider that $p^{(K+1)}$ is the truth. In this case, $c(ALG_p) = K+b$. Due to the choice of~$\delta$, we know that $A_{K+1}$ is the optimal policy for $p^{(K+1)}$. Thus, the optimal policy has cost $c(OPT_p) = (K+1)\delta + (K+1+b)(1-\delta)$. Then, we have $\text{diff}_p = b\delta -1 \geq \Omega(b^{\frac{r+1}{2}})$. Based on the fact that $\frac{r+1}{2}>r$, this proves the claim when~$K \leq b^{\frac{r+1}{2}}$. 

    If $ALG=A_K$ for some $K \geq b^{\frac{r+1}{2}}+1$ or $K=\infty$, consider that $p^{(1)}$ is the truth. Then by the construction of $p^{(1)}$, we see that $c((A_K)_p) \geq c((A_\infty)_p)$ for every $K \geq b^{\frac{r+1}{2}}+1$. In fact, the algorithms $A_K$ can be trivially ruled out for finite $K \geq b^{\frac{r+1}{2}}+1$ since the algorithm has to be bad even with the promise that the truth is from one of the $p^{(j)}$ inputs. Thus, it suffices to prove the claim for $ALG=A_\infty$. In this case, $c((A_\infty)_p)=1\cdot \delta + (b^{\frac{r+1}{2}} + 1 + b)(1-\delta)$ and $c(OPT_p)=1\cdot \delta+(b+1)(1-\delta)$. Then, we have $\text{diff}_p = b^{\frac{r+1}{2}}(1-\delta) \geq \Omega(b^{\frac{r+1}{2}})$. This proves the theorem for all deterministic algorithms.

    To show that this theorem holds even for randomized algorithms, we use Yao's Lemma to make our algorithm $ALG=A_K$ deterministically for some $K$ and choose the truth uniformly at random from $p^{(j)}, 1 \leq j \leq b^{\frac{r+1}{2}}+2$. When $K \leq b^{\frac{r+1}{2}}/2$, there is a constant probability that we choose the truth $p^{(j)}$ where $j$ lies in the next $\Omega(b^{\frac{r+1}{2}})$ interval (e.g. $b^{\frac{r+1}{2}}/4$) passed $K$ because of the uniformly chosen. For any truth $p^{(T)}$ where $T=K+x$ lies in the interval, we have $c((A_K)_{p^{(T)}})=K+b$ and $c(OPT_{p^{(T)}}) = (K+x)\delta+(K+x+b)(1-\delta)$. Thus, $\text{diff}_p =b\delta - x \geq \Omega(b^{\frac{r+1}{2}})$. When $K \geq b^{\frac{r+1}{2}}/2$, there is a constant probability that we choose the truth $p^{(j)}$ where $j$ lies in the $\Omega(b^{\frac{r+1}{2}})$ prefix (e.g. $[1,b^{\frac{r+1}{2}}/4]$). Again, for any truth~$p^{(T)}$ where $T=x$ lies in the above prefix, we have $c((A_K)_{p^{(T)}})=x\delta+(K+b)(1-\delta)$ and $c(OPT_{p^{(T)}}) = x\delta+(x+b)(1-\delta)$. Thus, $\text{diff}_p =(K-x)(1-\delta) \geq \Omega(b^{\frac{r+1}{2}})$. It turns out that we also have this lower bound against randomized algorithms, meaning that even for a randomized algorithm, its expected cost cannot be within the additive loss $O(EMD(\hat{p},p))+O(b^{1-\epsilon})$ of the expected cost to the optimum. 
\end{proof}

\subsection*{Proof of Theorem~\ref{thm:lower bound when EMD is in 1/b additive loss is at least Omega(1)}}
\label{appendix:proof_lowerbound_2}
\begin{proof}
Define two probability distributions
    \begin{align*}
    p^1_t &= 
    \begin{cases} 
      \frac{1}{2} & \ \text{if } t = 1 \\
      \frac{1}{4} & \ \text{if } t = 2 \\
      \epsilon = \frac{1}{2b} + \delta & \ \text{if } t = 4 \\
      \frac{1}{4} - \epsilon & \ \text{if } t = b + 3
    \end{cases} 
    &  \text{and} \qquad
    p^2_t &= 
    \begin{cases} 
      \frac{1}{2} & \text{if } t = 1 \\
      \frac{1}{4} + \epsilon & \text{if } t = 2 \\
      0 & \text{if } t = 4 \\
      \frac{1}{4} - \epsilon & \text{if } t = b + 3
    \end{cases}
    \end{align*}
where $\delta>0$. One can check that $A_\infty$ is the optimal policy for $p^1$ and $A_2$ is the optimal policy for $p^2$. Notice that $EMD(p^1,p^2) = 2(\frac{1}{2b}+\delta)=\frac{1}{b}+2\delta \leq O(\frac{1}{b})$ for sufficiently small $\delta$. Now, assume that we receive some predicted distribution $\hat{p}$ such that $EMD(\hat{p},p^1) \leq O(\frac{1}{b})$ and $EMD(\hat{p},p^2) \leq O(\frac{1}{b})$. We claim that given the information of this $\hat{p}$, there is no $ALG$ such that we have both
\[\text{diff}_1 = c(ALG_{p^1}) - c(OPT_{p^1}) \leq o(b) \cdot EMD(\hat{p},p^1)\]
and
\[\text{diff}_2 = c(ALG_{p^2}) - c(OPT_{p^2}) \leq o(b) \cdot EMD(\hat{p},p^2).\]

Note that if this claim holds, then it is impossible to find an algorithm such that the additive loss is bounded by $o(b) \cdot EMD(\hat{p},p)$. This is because, based on the information of $\hat{p}$, for any algorithm $ALG$ we choose, at least one of the two formulas above must fail to hold, and the corresponding $p^i$ in the failed formula could possibly be the unknown true distribution $p$ of the problem.

To show this claim, let us use $ALG = A_K$. When $0 \leq K \leq 3$, consider that $p^1$ is the truth. Then, one can check that for each $K=0,1,2,3$, $\text{diff}_1 \geq \Omega(1)$. By the fact that $EMD(\hat{p},p^1) \leq O(\frac{1}{b})$, we have $\text{diff}_1 \geq \Omega(b) \cdot EMD(\hat{p},p^1)$. Thus, the first formula fails. When $K \geq 4$ or $K=\infty$, consider that $p^2$ is the truth. Then, one can check that for each $K \geq 4$ or $K=\infty$, $\text{diff}_2 \geq \Omega(1)$. Hence, we have $\text{diff}_2 \geq \Omega(b) \cdot EMD(\hat{p},p^2)$. This concludes that for any $ALG$ (deterministic or randomized\footnote{Choose the truth $p^i$ such that the total probability mass of deterministic algorithms that perform poorly under $p^i$ is bounded below by a constant, thus holds for any randomized algorithm by Yao's Lemma.}) we choose, there exists some truth $p^i \in \{p^1,p^2\}$ such that using $ALG$ we have $\text{diff}_i \geq \Omega(b) \cdot EMD(\hat{p},p^i)$.
\end{proof}

\subsection*{Proof of Theorem~\ref{thm:lower bound for the multiplicative sqrt{b}EMD}}
\label{appendix:proof_lowerbound_3}
\begin{proof}
   Consider two distributions $p^1$ and $p^2$ defined as follows.

\begin{align*}
    p^1_t &= 
    \begin{cases} 
      \frac{1}{2} & \ \text{if } t = 1 \\
      \epsilon = \frac{1}{\sqrt{b}} & \ \text{if } t = \sqrt{b} \\
      \frac{1}{2} - \epsilon & \ \text{if } t = b -1 + \sqrt{b}
    \end{cases} 
    &  \text{and} \qquad
    p^2_t &= 
    \begin{cases} 
      \frac{1}{2} + \epsilon & \ \text{if } t = 1 \\
      0 & \ \text{if } t = \sqrt{b} \\
      \frac{1}{2} - \epsilon & \ \text{if } t = b -1 + \sqrt{b}.
    \end{cases}
    \end{align*}
One can check that $A_{\infty}$ is the optimal policy for $p^1$ and $A_1$ is the optimal policy for $p^2$. Notice that $EMD(p^1,p^2) = \epsilon \cdot (\sqrt{b}-1) = \Theta(1)$. Now, assume that we receive some predicted distribution $\hat{p}$ such that $EMD(\hat{p},p^1) = \Theta(1)$ and $EMD(\hat{p},p^2) = \Theta(1)$. We claim that given the information of this $\hat{p}$, there is no $ALG$ such that we have both
\[\text{diff}_1 = c(ALG_{p^1}) - c(OPT_{p^1}) \leq o(\sqrt{b})\cdot \max(EMD(\hat{p},p),1)\]
and
\[\text{diff}_2 = c(ALG_{p^2}) - c(OPT_{p^2}) \leq o(\sqrt{b})\cdot \max(EMD(\hat{p},p),1).\]

Note that if this claim holds, then it is impossible to find an algorithm such that the additive loss is bounded by $o(\sqrt{b})\cdot \max(EMD(\hat{p},p),1)$. This is because, based on the information of $\hat{p}$, for any algorithm $ALG$ we choose, at least one of the two formulas above must fail to hold, and the corresponding $p^i$ in the failed formula could possibly be the unknown true distribution $p$ of the problem.

To show this claim, let us use $ALG = A_K$. When $0 \leq K \leq \sqrt{b}-1$, consider that $p^1$ is the truth. Then, one can check that for each $K \leq \sqrt{b}-1$, $\text{diff}_1 \geq \Omega(\sqrt{b})$. By the fact that $EMD(\hat{p},p^1) = \Theta(1)$, we have $\text{diff}_1 \geq \Omega(\sqrt{b})\cdot \max(EMD(\hat{p},p),1)$. Thus, the first formula fails. When $K \geq \sqrt{b}$ or $K=\infty$, consider that $p^2$ is the truth. Then, one can check that for each $K \geq \sqrt{b}$ or $K=\infty$, $\text{diff}_2 \geq \Omega(\sqrt{b})$. Hence, we have $\text{diff}_2 \geq \Omega(\sqrt{b})\cdot \max(EMD(\hat{p},p),1)$. This concludes that for any $ALG$ (deterministic or randomized\footnote{Choose the truth $p^i$ such that the total probability mass of deterministic algorithms that perform poorly under $p^i$ is bounded below by a constant, thus holds for any randomized algorithm by Yao's Lemma.}) we choose, there exists some truth $p^i \in \{p^1,p^2\}$ such that using $ALG$ we have $\text{diff}_i \geq \Omega(\sqrt{b})\cdot \max(EMD(\hat{p},p),1)$.
\end{proof}

\section{Base algorithm}
\label{appendix:analysis of the main algorithm}

In this section we focus on our base algorithm and analysis: we design an algorithm that uses a distributional prediction in an essentially optimal way (as the lower bounds discussed earlier show).  We first give our base algorithm formally in Algorithm~\ref{alg:distributional truth algo}. This algorithm guarantees an upper bound (Theorem~\ref{thm:main}). We present the intuition of our proof (Appendix~\ref{sec:intuition of base proof}), followed by a full proof of it (Appendix~\ref{sec:full proof of base algorithm}). Without loss of generality, we assume that $\sqrt{b}$ is an integer; otherwise, we can use $\lfloor \sqrt{b} \rfloor$ instead.

\begin{algorithm}
\caption{Base algorithm, with expected additive loss bounded by $O(\sqrt{b}\max(\mathrm{EMD},1))$}
\label{alg:distributional truth algo}
\begin{algorithmic}[1]
\IF{$A_{\hat{K}}$ is the optimal policy for $\hat{p}$ with some finite $\hat{K}$}
    \STATE Buy at the beginning of day $\hat{K}+\sqrt{b}+1$, i.e., $ALG = A_{\hat{K}+\sqrt{b}}$
\ELSE
    \STATE Keep renting, i.e., $ALG = A_{\infty}$
\ENDIF
\end{algorithmic}
\end{algorithm}

In other words, Algorithm~\ref{alg:distributional truth algo} distinguishes two cases depending on the optimal policy under the predicted distribution $\hat p$.  If the optimal policy for $\hat p$ is of the form $A_{\hat K}$ for some finite $\hat K$, the algorithm delays the purchase time by an additive $\sqrt b$ and buys at time $\hat K + \sqrt b + 1$. Otherwise, when the optimal policy for $\hat p$ is $A_{\infty}$, the algorithm simply keeps renting and never buys.

Recall the statement of Theorem~\ref{thm:main}:

\maintheorem*

To prove Theorem~\ref{thm:main},
we break into cases depending on the structure of the optimal policies for $\hat p$ and $p$. We first prove Theorem~\ref{thm:main} when the optimal policy for $\hat p$ is $A_{\infty}$ (Lemma~\ref{lm:phat Ainf and p AK with EMD O1}). When the optimal policy for $\hat{p}$ is $A_K$ with finite $K$ the analysis is more complicated, and we need to break into two further cases depending on the true optimal policy for $p$. In Lemma~\ref{lm:phat is AK and p is Ainf and root b loss} we prove the theorem for the optimal policy (for $p$) being $A_{\infty}$ and in Lemma~\ref{lm:phat is AKhat and p is AK root b loss} we consider the case when both the optimal policies for $p$ and for $\hat p$ involve buying at some finite time.  Together, these lemmas directly imply Theorem~\ref{thm:main}.

\subsection{Intuition of proof}
\label{sec:intuition of base proof}

While the full proof is in Appendix~\ref{sec:full proof of base algorithm}, we now provide some intuition about it.
First, it is helpful for us to break into cases depending on the optimal policies for $p$ and $\hat p$ (as discussed above) because once we know the optimal policy we can explicitly write down the cost of the optimal policy, which allows us to write $\text{diff}_p$ explicitly.  

Now suppose that the true distribution were actually $\hat p$ rather than $p$.  Then clearly our algorithm is optimal if this policy is $A_{\infty}$, and otherwise is suboptimal.  But we can also consider what happens if we run $OPT_p$ (the optimal policy for $p$) when the truth is $\hat p$, and we can define $\text{diff}_{\hat p}$ as the expected cost of our algorithm minus the expected cost of $OPT_p$ when the true distribution is $\hat p$.  So $\text{diff}_{\hat p}$ is nonpositive when our policy is $A_{\infty}$ (since in this case we are optimal), but otherwise can be quite large.

If it \emph{were} true that $\text{diff}_{\hat p}$ was always small, then a natural strategy would be to start with that as a baseline and analyze how the $\text{diff}_p$ increases when compared with $\text{diff}_{\hat p}$.  In other words, we could try to argue that $\text{diff}_p - \text{diff}_{\hat p} \leq O(\sqrt{b}\max(EMD(\hat{p},p),1))$, which would then imply Theorem~\ref{thm:main} if $\text{diff}_{\hat p}$ were small enough.  Unfortunately this runs into two related issues.  First, $\text{diff}_{\hat p}$ could be extremely negative, making our desired claim false.  Second, $\text{diff}_{\hat p}$ could be extremely large and positive, making the claim true but insufficient to prove Theorem~\ref{thm:main}.  Fortunately, we can get around both of these issues in the same way: we can simply normalize by $\text{diff}_{\hat p}$.  More formally, we can prove that $\text{diff}_p - \text{diff}_{\hat{p}} \leq O(\sqrt{b}\max(EMD(\hat{p},p),1)) - \text{diff}_{\hat{p}}$.  Simply adding $\text{diff}_{\hat p}$ to both sides then implies the theorem.  

So our strategy is try to bound $\text{diff}_p - \text{diff}_{\hat{p}}$.  This is about the change in the $\text{diff}$ function when the distribution changes from $\hat p$ to $p$, and so a natural approach is to utilize the optimal transport plan from $\hat p$ to $p$.  We call each value $f(x,y)$ in this transport plan a \emph{map}, since it tells us how much probability mass to move from $x$ to $y$.  Some of the maps either benefit us (i.e., result in $\text{diff}_p$ decreasing compared to $\text{diff}_{\hat p}$) or result in an increase that can still be bounded in simple ways (e.g., directly bounded by $EMD(\hat p, p)$). We call such maps \emph{good maps} and all others are \emph{bad maps}, and only need concern ourselves with bad maps.  Although which maps are bad depends on which precise case we are in, we generally split the bad maps into two types: maps that move mass from $t_1$ to $t_2$ with $|t_1-t_2| \geq \sqrt{b}$ and maps that move mass from $t_1$ to $t_2$ with $|t_1-t_2| < \sqrt{b}$. If a map is in the regime where the moving distance is large (larger than $\sqrt{b}$), the amount of mass moved by this map is then bounded by $\frac{EMD(\hat p, p)}{\sqrt{b}}$ by the definition of $EMD(\hat p, p)$. This is a small enough amount of mass that we can straightforwardly argue that the increase it causes lies within our desired bound.

If a map is in the regime where the moving distance is small (smaller than $\sqrt{b}$), then we instead directly use the fact that our policy is related to (though not identical to) the optimal policy for $\hat p$.  In particular, an obvious observation is that when moving mass from $\hat{p}$ to $p$ using those bad maps in the optimal transport plan, the amount of mass one can move from point $t$ is at most $\hat{p}_t$. So if we can argue that $\hat p_t$ is small, then we get that these types of bad maps move a small amount of mass a small distance, and so the increase they cause in the $\text{diff}$ function still lives within our desired bound.  And it turns out that we can indeed show this, although the precise guarantee and strategy is different for different cases. 
 But for example, when the optimal policy for $\hat{p}$ is $A_K$, it is intuitively true (and can be made formal) that $\sum_{t=K}^{K+\delta} \hat p_t$ has to be small when $\delta$ is small (otherwise renting $K$ times then buying would not be the optimal policy; it would be better for us to wait a little longer to buy). Therefore, we leverage the mass distribution in 
$\hat{p}$ to connect the change in additive loss $\text{diff}_p - \text{diff}_{\hat{p}}$ with the required bound $O(\sqrt{b}\max(EMD(\hat{p},p),1)) - \text{diff}_{\hat{p}}$.

\subsection{Full proof}
\label{sec:full proof of base algorithm}

This section gives the technical proofs of Theorem~\ref{thm:main}.
we break into cases depending on the optimal policies for $\hat{p}$ and $p$ to get the claimed bound.

\begin{lemma}
    \label{lm:phat Ainf and p AK with EMD O1}
    Let $\hat{p}$ be the prediction and $p$ be the unknown truth. If the optimal policy for $\hat{p}$ is $A_\infty$, then taking $ALG=A_\infty$, we have $\text{diff}_p = c(ALG_p) - c(OPT_p) \leq O(\sqrt{b} \max(EMD(\hat{p},p),1))$.
\end{lemma}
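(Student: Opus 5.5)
Since $ALG=A_\infty$, we have $c(ALG_p)=\sum_t t\,p_t$. If $OPT_p=A_\infty$ the loss is $0$ and there is nothing to prove. So assume $OPT_p=A_K$ for some finite $K$, and write $Q_i:=\sum_{t>i}p_t$. By \eqref{eq:triangle} applied to $p$ (and its analogue $c((A_\infty)_p)=\sum_{i\ge 0}Q_i$),
\[
\text{diff}_p \;=\; \sum_{i\ge K} Q_i \;-\; b\,Q_K .
\]
The plan is to compare this with the same quantity for $\hat p$, which is nonpositive. Indeed, optimality of $A_\infty$ for $\hat p$ gives $\sum_{i\ge L}\hat Q_i\le b\hat Q_L$ for every finite $L$ (the footnote after \eqref{eq:S}). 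Hence
\[
\text{diff}_p \;\le\; \Big(\sum_{i\ge K}Q_i-\sum_{i\ge K}\hat Q_i\Big) \;+\; b\big(\hat Q_K-Q_K\big),
\]
and it suffices to bound the two differences.

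For the first difference I use the transport plan $\pi$ from $\hat p$ to $p$. Note that $\sum_{i\ge K}Q_i=\sum_t (t-K)^+p_t$, and the map $t\mapsto (t-K)^+$ is $1$-Lipschitz. Therefore this difference is at most $\sum_{x,y}\pi(x,y)|x-y| = EMD(\hat p,p)$.

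The second difference, $b(\hat Q_K-Q_K)$, is the real obstacle. Mass of $\hat p$ above $K$ that moves to at most $K$ under $\pi$ costs a factor $b$, while a naive EMD bound gives only $b\cdot EMD$. I split at a shift of $\sqrt b$. I apply Lemma~\ref{lem:EMD-tail} with the roles of $p$ and $\hat p$ swapped (the lemma is symmetric in the sense that EMD is symmetric): $\hat Q_{K+\sqrt b}\le Q_K+EMD/\sqrt b$. This gives
\[
b(\hat Q_K-Q_K)\;\le\; b\big(\hat Q_K-\hat Q_{K+\sqrt b}\big)+\sqrt b\,EMD .
\]
It remains to show that the predicted mass in the window $(K,K+\sqrt b]$ is at most $O(1/\sqrt b)$ times something we can afford. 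Here I use the structure of $\hat p$ more cleverly. Rather than using $\sum_{i\ge L}\hat Q_i\le b\hat Q_L$ only at $L=K$, I would use it at $L=K+\sqrt b$ as well. Subtracting rather than discarding gives a bound on $b(\hat Q_K-\hat Q_{K+\sqrt b})$ in terms of $\sum_{i=K}^{K+\sqrt b-1}\hat Q_i\le \sqrt b\,\hat Q_K$.

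To make this work, I redo the decomposition with the comparison point shifted. I write
\[
\text{diff}_p=\Big(\sum_{i\ge K}Q_i-b\,Q_K\Big),
\]
and compare termwise to the $\hat p$-inequality at $L=K+\sqrt b$. The tail sums then differ by at most $EMD+\sqrt b\,\hat Q_K\le EMD+\sqrt b$, again using the Lipschitz argument together with the window length $\sqrt b$. The $b$-terms are handled by $b(\hat Q_{K+\sqrt b}-Q_K)\le \sqrt b\,EMD$ from Lemma~\ref{lem:EMD-tail}.

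Summing these contributions gives $\text{diff}_p\le EMD+\sqrt b+\sqrt b\,EMD=O(\sqrt b\max(EMD,1))$. The step I expect to need care is exactly this choice of comparing at $L=K+\sqrt b$ rather than $L=K$. I would verify that bookkeeping first.
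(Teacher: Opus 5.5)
Your final, shifted argument is correct. Concretely, with $\text{diff}_p=\sum_{i\ge K}Q_i-bQ_K$,
\[
\text{diff}_p=\Bigl(\sum_{i\ge K}Q_i-\sum_{i\ge K+\sqrt b}\hat Q_i\Bigr)+\Bigl(\sum_{i\ge K+\sqrt b}\hat Q_i-b\,\hat Q_{K+\sqrt b}\Bigr)+b\bigl(\hat Q_{K+\sqrt b}-Q_K\bigr).
\]
The three terms are bounded as follows:
\begin{itemize}
\item The middle term is nonpositive, by optimality of $A_\infty$ for $\hat p$ at $L=K+\sqrt b$.
\item The first term is at most $EMD(\hat p,p)+\sqrt b\,\hat Q_K$. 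This uses your $1$-Lipschitz bound for $(t-K)^+$, plus monotonicity of $\hat Q$ on the window of length $\sqrt b$.
\item The last term is at most $\sqrt b\,EMD(\hat p,p)$, by Lemma~\ref{lem:EMD-tail} with the roles of $p$ and $\hat p$ exchanged.
\end{itemize}
Hence $\text{diff}_p\le \sqrt b+EMD(\hat p,p)+\sqrt b\,EMD(\hat p,p)$.

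You should delete the first decomposition and the ``subtracting rather than discarding'' remark. Two optimality inequalities pointing the same direction cannot be subtracted. Moreover, the claim that the predicted window mass on $(K,K+\sqrt b]$ is $O(1/\sqrt b)$ is false here. Take $p$ with mass $\tfrac12$ at $1$ and at $b+2$, and $\hat p$ with mass $\tfrac12$ at $2$ and at $b+1$. Then $A_\infty$ is optimal for $\hat p$, $A_1$ is optimal for $p$, $EMD(\hat p,p)=1$ and $\text{diff}_p=\tfrac12$, yet $b(\hat Q_1-\hat Q_{1+\sqrt b})=b/2$. The shifted comparison never incurs this window term, and that is precisely why it works.

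Compared with the paper, the route differs in packaging but not in its key ingredients. The paper bounds $\text{diff}_p-\text{diff}_{\hat p}$ map by map in an optimal transport plan, sorting maps by which of $[1,K]$, $(K,K+b]$, $(K+b,\infty)$ their endpoints lie in:
\begin{itemize}
\item Every map except those from $(K,K+b]$ into $[1,K]$ is charged to its own EMD contribution.
\item Maps starting beyond $K+\sqrt b$ carry total mass at most $EMD/\sqrt b$.
\item Maps starting in $(K,K+\sqrt b]$ are charged to $\hat p_t$ and absorbed by $-\text{diff}_{\hat p}$, using $c((A_\infty)_{\hat p})\le c((A_{K+\sqrt b})_{\hat p})$.
\end{itemize}
So both proofs rest on comparing $A_\infty$ with the shifted policy $A_{K+\sqrt b}$ under $\hat p$, together with a $\sqrt b$ distance threshold. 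Your tail-sum formulation replaces the region-by-region case analysis with two global facts (Lipschitzness and Lemma~\ref{lem:EMD-tail}), is shorter, and yields an explicit constant. The paper's map classification is less economical for this lemma, but it is the uniform template it reuses in the harder cases where the optimal threshold for $\hat p$ is finite.
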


\begin{proof}
    If the optimal policy for $p$ is also $A_\infty$, then $\text{diff}_p = 0$ and the required bound holds trivially. Thus, we only need to show that the bound holds when the optimal policy for $p$ is $A_K$ for some finite $K$.

    First, we have 
    $$\text{diff}_p = \sum_{t=1}^N tp_t - (\sum_{t \leq K}tp_t + \sum_{t>K}(K+b)p_t) = \sum_{t>K}(t-(K+b))p_t.$$ 
    Since $\text{diff}_p$ is a function of $p$, we similarly define $\text{diff}_{\hat{p}}$ as the quantity by using the same policies but the different distribution $\hat{p}$ rather than $p$. In other words, $\text{diff}_{\hat{p}} = c(ALG_{\hat{p}}(\hat{p})) - c((OPT_p)_{\hat{p}})$. Then, we consider the optimal transport plan from $\hat{p}$ to $p$ and examine the change of the additive loss $\text{diff}_p - \text{diff}_{\hat{p}}$. We want to show that $\text{diff}_p - \text{diff}_{\hat{p}} \leq O(\sqrt{b}\max(EMD(\hat{p},p),1)) - \text{diff}_{\hat{p}}$. That is, we want this change in additive loss to be bounded by $O(\sqrt{b}\max(EMD(\hat{p},p),1)) - \text{diff}_{\hat{p}}$.

    Note that $A_\infty$ is the optimal policy for $\hat{p}$, we have $\text{diff}_{\hat{p}} \leq 0$ and 
    \[\text{diff}_{\hat{p}} = \sum_{t>K}(t-(K+b))\hat{p}_t = \sum_{t>K+b}(t-(K+b))\hat{p}_t - \sum_{K<t<K+b}(K+b-t)\hat{p}_t \leq 0.\]
    We divide the domain into 3 regions: $[1,K], (K,K+b], (K+b,\infty)$. There are two kinds of maps in the optimal transport plan: from some point in $(K,K+b]$ to some point in $[1,K]$, or not in such a form. If we consider how a map from one region to any region (including itself) affects the change in additive loss, notice that for any map that is not from $(K,K+b]$ to $[1,K]$, the change in additive loss by the map is at most its own contribution to the Earth Mover's Distance.
    To argue this, if a map is from~$[1,K]$ to~$[1,K]$, then it does not affect the change in additive loss. If a map is from~$[1,K]$ to~$(K,K+b]$, then it only decreases the value and hence benefits us. If a map is from~$t_1 \in [1,K]$ to~$t_2 \in (K+b,\infty)$ with mass~$\epsilon$, then it increases the value by~$(t_2 - (K+b))\epsilon < (t_2-t_1)\epsilon$ which equals its contribution to the~$EMD$. If a map is from~$t_1 \in (K,K+b]$ to~$t_2 \in (K,K+b]$ with mass~$\epsilon$, then it changes the value by~$(K+b-t_1)\epsilon - (K+b-t_2)\epsilon \leq |t_1-t_2|\epsilon$ which equals its contribution to the~$EMD$. If a map is from~$t_1 \in (K,K+b]$ to~$t_2 \in (K+b,\infty)$ with mass~$\epsilon$, then it increases the value by~$(K+b-t_1)\epsilon + (t_2 - (K+b))\epsilon = (t_2-t_1)\epsilon$ which equals its contribution to the~$EMD$. If a map is from~$(K+b,\infty)$ to~$[1,K]$ or from~$(K+b,\infty)$ to~$(K,K+b]$, then again it only decreases the value and benefits us. If a map is from~$t_1 \in (K+b,\infty)$ to~$t_2 \in (K+b,\infty)$ with mass~$\epsilon$, then it changes the value by~$(t_2-(K+b))\epsilon - (t_1-(K+b))\epsilon = (t_2-t_1)\epsilon$ which equals its contribution to the~$EMD$.
    Thus, if we consider all such maps (not from~$(K,K+b]$ to~$[1,K]$) as a group, it will increase the additive loss by some amount bounded by $EMD(\hat{p},p)$ which is at most a constant. We call such maps in this easy case the good maps and those we need to take further actions the bad maps.

    For maps (elements in the optimal transport plan) from $(K,K+b]$ to $[1,K]$, we have $\text{diff}_p - \text{diff}_{\hat{p}} = \sum_{K<t<K+b}(K+b-t)(\hat{p}_t - p_t)$. Let us first consider the set of bad maps $\mathcal{B}_1$ from $(K+\sqrt{b},K+b]$ to $[1,K]$. Since the moving distance for such map is at least $\sqrt{b}$, the total mass moving by such bad maps is at most $O(\frac{EMD}{\sqrt{b}})$. Thus, 
    $$\left. \text{diff}_p - \text{diff}_{\hat{p}} \right|_{\mathcal{B}_1} \leq (K+b-(K+\sqrt{b}))\sum_{K+\sqrt{b}<t\leq K+b}\left. \epsilon_t \right|_{\mathcal{B}_1} \leq (b-\sqrt{b})\cdot O(\frac{EMD}{\sqrt{b}})=O(\sqrt{b}\cdot EMD),$$
    where $\left. \epsilon_t \right|_{\mathcal{B}_1} = \left. \hat{p}_t - p_t \right|_{\mathcal{B}_1}$ is the mass change by all such bad maps in $\mathcal{B}_1$ at $t$ for any point $K+\sqrt{b}<t \leq K+b$.

    Then, let us consider the set of bad maps $\mathcal{B}_2$ from $(K,K+\sqrt{b}]$ to $[1,K]$. We will discuss the change in additive loss by such bad maps in $\mathcal{B}_2$. That is, we want to bound 
    $$\left. \text{diff}_p - \text{diff}_{\hat{p}} \right|_{\mathcal{B}_2} = \sum_{K<t \leq K+\sqrt{b}}(K+b-t) \left. (\hat{p}_t - p_t) \right|_{\mathcal{B}_2}= \sum_{K<t \leq K+\sqrt{b}}(K+b-t)\left. \epsilon_t \right|_{\mathcal{B}_2},$$
    where $\left. \epsilon_t \right|_{\mathcal{B}_2} = \left. \hat{p}_t - p_t \right|_{\mathcal{B}_2}$ is the mass change by all such bad maps in $\mathcal{B}_2$ at $t$ for any point $K<t \leq K+\sqrt{b}$. Note that it suffices to show that $\left. \text{diff}_p - \text{diff}_{\hat{p}} \right|_{\mathcal{B}_2} \leq O(\sqrt{b})- \text{diff}_{\hat{p}}$ to get our required result.

    Recall that in $\hat{p}$, $A_\infty$ is the optimal policy. Thus, we have
     \begin{align}
    & c((A_\infty)_{\hat{p}}) \leq c((A_{K+\sqrt{b}})_{\hat{p}}) \notag \\
    \iff & \sum_t t\hat{p}_t \leq \sum_{t \leq K+\sqrt{b}} t\hat{p}_t + \sum_{t>K+\sqrt{b}}(K+\sqrt{b}+b)\hat{p}_t \notag \\
    \iff & \sum_{t > K+\sqrt{b}} t\hat{p}_t \leq \sum_{t>K+\sqrt{b}}(K+\sqrt{b}+b)\hat{p}_t \notag \\
    \iff & \sum_{t > K+b} (t- (K+\sqrt{b}+b))\hat{p}_t \leq \sum_{K+\sqrt{b}<t \leq K+b}(K+\sqrt{b}+b-t)\hat{p}_t \notag \\
    \iff & \sum_{t > K+b} (t- (K+b))\hat{p}_t \leq \sum_{t > K+b}\sqrt{b}\hat{p}_t + \sum_{K+\sqrt{b}<t \leq K+b}(K+\sqrt{b}+b-t)\hat{p}_t. \quad \label{eq:dagger} \tag{\(\dagger\)}
    \end{align}

    By the definition of $\left. \epsilon_t \right|_{\mathcal{B}_2}$, we know that $\left. \epsilon_t \right|_{\mathcal{B}_2} \leq \hat{p}_t$ for $K<t \leq K+\sqrt{b}$. Therefore, to show that $\left. \text{diff}_p - \text{diff}_{\hat{p}} \right|_{\mathcal{B}_2} \leq O(\sqrt{b})- \text{diff}_{\hat{p}} = O(\sqrt{b})- (\sum_{t>K}(t-(K+b))\hat{p}_t)$, it suffices to show that
    \[\sum_{K<t \leq K+\sqrt{b}}(K+b-t)\hat{p}_t \leq O(\sqrt{b})- \sum_{t>K}(t-(K+b))\hat{p}_t\]
    which is equivalently
    \[\sum_{t>K+b} (t-(K+b))\hat{p}_t \leq O(\sqrt{b}) + \sum_{K+\sqrt{b}<t\leq K+b} (K+b-t)\hat{p}_t.\]
    The last line holds because of~\eqref{eq:dagger}.

    By combining results from good maps and bad maps, we finally get the required result.
\end{proof}

Next, we show that when the optimal policy for $\hat{p}$ is $A_K$ for some finite $K$, taking $ALG=A_{K+\sqrt{b}}$ implies the required bound. Before proving this, let us make some observations.

\begin{observation}
    \label{obs:AK plus root b is not too bad than AK}
    $c((A_{K+\sqrt{b}})_{\hat{p}}) \leq c((A_{K})_{\hat{p}}) + \sqrt{b}$.
\end{observation}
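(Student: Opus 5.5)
The plan is to write the difference of the two costs directly as a sum over tail masses, using the identity \eqref{eq:triangle}, and then bound each term trivially. With $\hat Q_i = \sum_{t>i}\hat p_t$, \eqref{eq:triangle} gives
\[
c((A_{K+\sqrt b})_{\hat p}) - c((A_K)_{\hat p}) \;=\; \sum_{i=K}^{K+\sqrt b-1} \hat Q_i \;+\; b\bigl(\hat Q_{K+\sqrt b} - \hat Q_K\bigr).
\]

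The two pieces are then handled separately.

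First, since $\hat Q_i$ is nonincreasing in $i$, we have $\hat Q_{K+\sqrt b} \le \hat Q_K$. Hence the second term is nonpositive.

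Second, the first term is a sum of exactly $\sqrt b$ quantities, each a probability and therefore at most $1$. It is thus at most $\sqrt b\,\hat Q_K \le \sqrt b$.

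Combining the two bounds yields the observation. In fact it yields the slightly stronger $c((A_{K+\sqrt b})_{\hat p}) \le c((A_K)_{\hat p}) + \sqrt b\,\hat Q_K$, which may be useful later.

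An alternative is to compare the two policies pointwise in $t$, without using \eqref{eq:triangle}. For $t\le K$ both policies pay $t$. For $K<t\le K+\sqrt b$, $A_{K+\sqrt b}$ pays $t\le K+b$, which is no more than what $A_K$ pays. For $t>K+\sqrt b$, $A_{K+\sqrt b}$ pays exactly $\sqrt b$ more than $A_K$. Taking expectations over $\hat p$ gives an extra cost of at most $\sqrt b\,\hat Q_{K+\sqrt b}\le\sqrt b$.

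There is no real obstacle here. The only points needing care are that $\sqrt b$ is treated as an integer (as assumed), and that optimality of $A_K$ for $\hat p$ is not even needed: the bound holds for every finite $K$.
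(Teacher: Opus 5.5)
Your proposal is correct. The pointwise comparison you give as an alternative is exactly the paper's argument: it takes the difference of the two cost formulas and bounds it by $\sqrt{b}\sum_{t>K+\sqrt{b}}\hat{p}_t \le \sqrt{b}$. Your primary route via \eqref{eq:triangle} is an equivalent rewriting of the same difference, though the pointwise version gives the sharper intermediate bound $\sqrt{b}\,\hat{Q}_{K+\sqrt{b}}$ rather than your $\sqrt{b}\,\hat{Q}_K$; either suffices for the observation.
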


\begin{proof}
    We write down the definition of $c((A_{K+\sqrt{b}})_{\hat{p}})$ and $c((A_{K})_{\hat{p}})$. It is not hard to see that by taking the difference we have $c((A_{K+\sqrt{b}})_{\hat{p}}) - c((A_{K})_{\hat{p}}) \leq \sqrt{b} \sum_{t>K+\sqrt{b}}\hat{p}_t \leq \sqrt{b}$.
\end{proof}

\begin{claim}
    \label{claim:if AK is opt policy then K to K+T cannot have too much mass}
    Let $A_K$ be the optimal policy for $\hat{p}$. Then, for every $1 \leq T \leq b$, we have
    \[\sum_{K<t\leq K+T}\hat{p}_t \leq \frac{T}{b}\sum_{t>K}\hat{p}_t.\]
\end{claim}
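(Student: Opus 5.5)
The plan is to compare the cost of the optimal policy $A_K$ with the cost of waiting $T$ extra days, $A_{K+T}$, when the truth is $\hat p$. The identity \eqref{eq:triangle} turns this comparison into an inequality on tail masses. I will use the notation $\hat Q_i = \sum_{t>i}\hat p_t$ from Section~\ref{sec: new algorithm}. The quantity to be bounded is exactly
\[
\sum_{K<t\le K+T}\hat p_t \;=\; \hat Q_K-\hat Q_{K+T},
\]
and the right-hand side of the claim is $\frac{T}{b}\hat Q_K$.

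First I would write out both costs via \eqref{eq:triangle}:
\[
c((A_{K+T})_{\hat p})-c((A_K)_{\hat p}) \;=\; \sum_{i=K}^{K+T-1}\hat Q_i \;+\; b\,\hat Q_{K+T} \;-\; b\,\hat Q_K .
\]
Since $A_K$ is optimal for $\hat p$, this difference is nonnegative. Rearranging gives
\[
b\bigl(\hat Q_K-\hat Q_{K+T}\bigr)\;\le\;\sum_{i=K}^{K+T-1}\hat Q_i .
\]
This is the mirror image of inequality \eqref{eq:S}: there we compared $A_K$ against an earlier threshold $L$, whereas here we compare against a later one.

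Next I would use that $\hat Q_i$ is nonincreasing in $i$, so each of the $T$ summands is at most $\hat Q_K$. Hence the right-hand side is at most $T\hat Q_K$. Dividing by $b$ yields
\[
\sum_{K<t\le K+T}\hat p_t=\hat Q_K-\hat Q_{K+T}\le \frac{T}{b}\hat Q_K=\frac{T}{b}\sum_{t>K}\hat p_t,
\]
as claimed.

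There is no real obstacle here; the only step needing care is the bookkeeping at the support boundary. If $K+T$ exceeds $N$, the policy $A_{K+T}$ is still well defined, with $\hat Q_i=0$ beyond $N$, and formula \eqref{eq:triangle} remains valid. The hypothesis $T\le b$ is not actually used in this argument. It only ensures the bound $T/b\le 1$ is nontrivial, which is how the claim will be applied.
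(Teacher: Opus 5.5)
Your proof is correct and is essentially the paper's argument: both compare $A_K$ with $A_{K+T}$ under $\hat p$ and invoke optimality of $A_K$. You work with the tail sums $\hat Q_i$ and bound each by $\hat Q_K$, which is the same step as the paper's termwise bound $b \le T + (K+b-t)$ for $K<t\le K+T$; your remark that the hypothesis $T\le b$ is never used applies equally to the paper's proof.
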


\begin{proof}
    For $1 \leq T \leq b$, we will compare the cost of $A_K$ and $A_{K+T}$.
    \begin{align*}
    & c((A_K)_{\hat{p}}) \leq c((A_{K+T})_{\hat{p}}) \\
    \iff & \sum_{t \leq K} t\hat{p}_t + \sum_{t>K}(K+b)\hat{p}_t \leq \sum_{t \leq K+T} t\hat{p}_t + \sum_{t>K+T}(K+T+b)\hat{p}_t \\
    \iff & \sum_{K<t \leq K+T}(K+b)\hat{p}_t \leq \sum_{K<t \leq K+T} t\hat{p}_t + \sum_{t> K+T}T\hat{p}_t \\
    \iff & \sum_{K<t \leq K+T}(K+b-t)\hat{p}_t \leq  \sum_{t> K+T}T\hat{p}_t.
    \end{align*}
    
    Thus, we have 
    \begin{align*}
    b \sum_{K<t\leq K+T}\hat{p}_t \leq&  T \sum_{K<t \leq K+T} \hat{p}_t + \sum_{K<t \leq K+T}(K+b-t)\hat{p}_t\\
    \leq& T \sum_{K<t \leq K+T} \hat{p}_t + T \sum_{t> K+T} \hat{p}_t=T \sum_{t>K}\hat{p}_t.
    \end{align*}
    That is, $\sum_{K<t\leq K+T}\hat{p}_t \leq \frac{T}{b}\sum_{t>K}\hat{p}_t$.
\end{proof}

Now, we can start with the proof. In the following lemmas, we will analyze the cases separately based on the optimal policy for $p$ to be either $A_\infty$ or some $A_L$, given that the optimal policy for $\hat{p}$ is $A_K$ for some finite $K$ as well as $ALG=A_{K+\sqrt{b}}$ always.

\begin{lemma}
    \label{lm:phat is AK and p is Ainf and root b loss}
    Let $\hat{p}$ be the prediction and $p$ be the unknown truth. Assume that the optimal policy for~$\hat{p}$ is $A_K$ for some finite $K$. Let $ALG = A_{K+\sqrt{b}}$. If the optimal policy for $p$ is $A_\infty$, we have $\text{diff}_p = c(ALG_p) - c(OPT_p) \leq O(\sqrt{b} \max(EMD(\hat{p},p),1))$.
\end{lemma}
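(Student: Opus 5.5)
We will compare the actual loss $\text{diff}_p$ with its counterpart under the prediction, as in the proof of Lemma~\ref{lm:phat Ainf and p AK with EMD O1}, and control the change using an optimal transport plan from $\hat p$ to $p$. Since $OPT_p = A_\infty$ and $ALG = A_{K+\sqrt b}$, we have
\[\text{diff}_p = \sum_t g(t)\,p_t, \qquad g(t) := \begin{cases} 0 & t \le K+\sqrt b,\\ K+\sqrt b+b-t & t> K+\sqrt b.\end{cases}\]
Define $\text{diff}_{\hat p} := \sum_t g(t)\hat p_t = c((A_{K+\sqrt b})_{\hat p}) - c((A_\infty)_{\hat p})$. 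Because $A_K$ is optimal for $\hat p$, we have $c((A_\infty)_{\hat p}) \ge c((A_K)_{\hat p})$. Combining this with Observation~\ref{obs:AK plus root b is not too bad than AK} gives $\text{diff}_{\hat p} \le \sqrt b$. It therefore suffices to show
\[\text{diff}_p - \text{diff}_{\hat p} = \sum_{x,y}\pi(x,y)\bigl(g(y)-g(x)\bigr) \le O\bigl(\sqrt b\max(EMD,1)\bigr),\]
where $\pi$ is the optimal transport plan from $\hat p$ to $p$.

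We classify the maps $\pi(x,y)$ according to how $g$ behaves. On $[1,K+\sqrt b]$ the function $g$ is constant, and on $(K+\sqrt b,\infty)$ it is $1$-Lipschitz and decreasing. The only discontinuity is an upward jump of size about $b$ just past $K+\sqrt b$. This gives the following cases.
\begin{itemize}
\item Maps with both endpoints in the same region change $g$ by at most $|x-y|$.
\item Maps from $x>K+\sqrt b$ to $y\le K+\sqrt b$ change the value by $-g(x)$. This is either nonpositive or, when $x > K+\sqrt b+b$, equal to $x-(K+\sqrt b+b) \le |x-y|$.
\end{itemize}
We call all of these good maps. Together they contribute at most $EMD(\hat p,p)$.

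The bad maps are those with $x\le K+\sqrt b < y$. Each increases the value by at most $g(y)\le b$ per unit mass. We split them into two groups.
\begin{itemize}
\item $\mathcal B_1$: maps with $x\le K$. Their distance is at least $\sqrt b$, so their total mass is at most $EMD/\sqrt b$. Their contribution is therefore at most $b\cdot EMD/\sqrt b = \sqrt b\cdot EMD$.
\item $\mathcal B_2$: maps with $K<x\le K+\sqrt b$. Their total mass is at most $\sum_{K<t\le K+\sqrt b}\hat p_t$. By Claim~\ref{claim:if AK is opt policy then K to K+T cannot have too much mass} with $T=\sqrt b$, this is at most $\frac{\sqrt b}{b}\sum_{t>K}\hat p_t\le 1/\sqrt b$. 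Their contribution is therefore at most $b/\sqrt b=\sqrt b$.
\end{itemize}

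Summing all contributions gives
\[\text{diff}_p \le \sqrt b + EMD + \sqrt b\cdot EMD + \sqrt b = O\bigl(\sqrt b\max(EMD,1)\bigr).\]
The main point to get right is the $\mathcal B_2$ case. There the transport distance can be tiny, so $EMD$ gives no control, and we must instead use the optimality of $A_K$ for $\hat p$ through the Claim. This is exactly why the algorithm delays the purchase by $\sqrt b$ days. A secondary check is the sign bookkeeping for maps that cross $K+\sqrt b+b$, where $g$ becomes negative. These remain bounded by their transport distance because $g$ is $1$-Lipschitz above $K+\sqrt b$ and vanishes below it.
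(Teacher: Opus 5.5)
Your proof is correct and follows the paper's argument. You bound $\text{diff}_{\hat p}\le\sqrt b$ via the Observation, then control $\text{diff}_p-\text{diff}_{\hat p}$ through the optimal transport plan, charging long bad maps from $[1,K]$ against $EMD/\sqrt b$ and short ones from $(K,K+\sqrt b]$ against the Claim with $T=\sqrt b$. The only difference is cosmetic: you use two regions and the explicit function $g$, lumping maps into $(K+\sqrt b+b,\infty)$ with the bad maps at cost at most $b$ per unit mass, whereas the paper uses three regions and counts those maps as good.
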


\begin{proof}
    We want to show that $\text{diff}_p = c((A_{K+\sqrt{b}})_p) - c((A_\infty)_p) \leq O(\sqrt{b} \max(EMD(\hat{p},p),1))$. Consider $\text{diff}_{\hat{p}} = c((A_{K+\sqrt{b}})_{\hat{p}}) - c((A_\infty)_{\hat{p}})$. Since $A_K$ is the optimal policy for $\hat{p}$, we have $c((A_{K})_{\hat{p}}) - c((A_\infty)_{\hat{p}}) \leq 0$. By Observation~\ref{obs:AK plus root b is not too bad than AK}, we have 
    \[\text{diff}_{\hat{p}} = c((A_{K+\sqrt{b}})_{\hat{p}}) - c((A_\infty)_{\hat{p}}) \leq c((A_{K})_{\hat{p}}) + \sqrt{b}- c((A_\infty)_{\hat{p}}) \leq \sqrt{b}.\]
    It turns out that if we can show $\text{diff}_p - \text{diff}_{\hat{p}} \leq O(\sqrt{b} \max(EMD(\hat{p},p),1))$, then $\text{diff}_p - \text{diff}_{\hat{p}} \leq O(\sqrt{b} \max(EMD(\hat{p},p),1)) - \text{diff}_{\hat{p}}$ and we are done.

    To show that $\text{diff}_p - \text{diff}_{\hat{p}} \leq O(\sqrt{b} \max(EMD(\hat{p},p),1))$, again we use the optimal transport plan and analyze the bad maps. After simple calculations, we have 
    $$\text{diff}_{\hat{p}} = \sum_{K+\sqrt{b}< t \leq K+\sqrt{b}+b}(K+\sqrt{b}+b-t)\hat{p}_t - \sum_{t > K+\sqrt{b}+b}(t-(K+\sqrt{b}+b))\hat{p}_t.$$ 
    Then, by considering maps between the regions $[1,K+\sqrt{b}]$, $(K+\sqrt{b},K+\sqrt{b}+b]$ and $(K+\sqrt{b}+b,\infty)$, the bad maps are those from $[1,K+\sqrt{b}]$ to $(K+\sqrt{b},K+\sqrt{b}+b]$ (other maps are again digested by their contributions to the $EMD$, as we did in Lemma~\ref{lm:phat Ainf and p AK with EMD O1}). 

    For maps $\mathcal{B}$ from $[1,K+\sqrt{b}]$ to $(K+\sqrt{b},K+\sqrt{b}+b]$ (elements in the optimal transport plan), since $\mathcal{B}$ has nothing to do with $t>K+\sqrt{b}+b$, we have $$\left. \text{diff}_p - \text{diff}_{\hat{p}}\right|_{\mathcal{B}} = \sum_{K+\sqrt{b}<t\leq K+\sqrt{b}+b}(K+\sqrt{b}+b-t)(p_t - \hat{p}_t).$$
    
    Let us first consider the set of bad maps $\mathcal{B}_1$ from $[1,K]$ to $(K+\sqrt{b},K+\sqrt{b}+b]$. Since the moving distance for such map is at least $\sqrt{b}$, the total mass moving by such bad maps is at most $O(\frac{EMD}{\sqrt{b}})$. Thus, 
    $$\left. \text{diff}_p - \text{diff}_{\hat{p}} \right|_{\mathcal{B}_1} = \sum_{ K+\sqrt{b}<t \leq K+\sqrt{b}+b}(K+\sqrt{b}+b-t)\left. \epsilon_t \right|_{\mathcal{B}_1} \leq b \cdot \left. \sum_{ K+\sqrt{b}<t \leq K+\sqrt{b}+b} \epsilon_t \right|_{\mathcal{B}_1} \leq O(\sqrt{b} \cdot EMD),$$
    where $\left. \epsilon_t \right|_{\mathcal{B}_1} = \left. p_t - \hat{p}_t \right|_{\mathcal{B}_1}$ is the mass change by all such bad maps in $\mathcal{B}_1$ at $t$ for any point $K+\sqrt{b}<t \leq K+\sqrt{b}+b$.

    Then, let us consider the set of bad maps $\mathcal{B}_2$ from $(K,K+\sqrt{b}]$ to $(K+\sqrt{b},K+\sqrt{b}+b]$. We similarly define $\left. \epsilon_t \right|_{\mathcal{B}_2} = \left. p_t - \hat{p}_t \right|_{\mathcal{B}_2}$ as the mass change by all such bad maps in $\mathcal{B}_2$ at~$t$ for any point $K+\sqrt{b}<t \leq K+\sqrt{b}+b$.
    Since we are considering bad maps of the form $\mathcal{B}_2$, we know that $\left. \sum_{ K+\sqrt{b}<t \leq K+\sqrt{b}+b} \epsilon_t \right|_{\mathcal{B}_2}$ is at most $\sum_{K<t \leq K+\sqrt{b}}\hat{p}_t$. Therefore, 
    $$\left. \text{diff}_p - \text{diff}_{\hat{p}} \right|_{\mathcal{B}_2} = \sum_{ K+\sqrt{b}<t \leq K+\sqrt{b}+b}(K+\sqrt{b}+b-t)\left. \epsilon_t \right|_{\mathcal{B}_2} \leq b \left. \sum_{ K+\sqrt{b}<t \leq K+\sqrt{b}+b} \epsilon_t \right|_{\mathcal{B}_2} \leq b \sum_{K<t \leq K+\sqrt{b}}\hat{p}_t.$$
    Again, this is bounded by $O(\sqrt{b})$ by taking $T=\sqrt{b}$ in Claim~\ref{claim:if AK is opt policy then K to K+T cannot have too much mass}.

    Therefore, we finish the proof as required.
\end{proof}

Now, the only left case is when $A_{\hat{K}}$ is the optimal policy for $\hat{p}$ and $A_K$ is the optimal policy for $p$ for some finite $\hat{K}$ and $K$. In this case, we refer to the optimal policy for $\hat{p}$ as $A_{\hat{K}}$ to avoid any ambiguity. We need to show that by taking $ALG = A_{\hat{K}+\sqrt{b}}$, we have $\text{diff}_p \leq O(\sqrt{b} \max(EMD(\hat{p},p),1))$. This needs a long proof, as one will see that we have to use a combination of techniques in the previous lemmas.

\begin{lemma}
    \label{lm:phat is AKhat and p is AK root b loss}
    Let $\hat{p}$ be the prediction and $p$ be the truth. Assume that the optimal policy for~$\hat{p}$ is $A_{\hat{K}}$ for some finite $\hat{K}$. Let $ALG = A_{\hat{K}+\sqrt{b}}$. If the optimal policy for $p$ is $A_K$ for some finite $K$, we have $\text{diff}_p = c(ALG_p) - c(OPT_p) \leq O(\sqrt{b} \max(EMD(\hat{p},p),1))$.
\end{lemma}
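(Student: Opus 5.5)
We follow the template of Lemmas~\ref{lm:phat Ainf and p AK with EMD O1} and~\ref{lm:phat is AK and p is Ainf and root b loss}. Write $M:=\hat K+\sqrt b$, so $ALG=A_M$. Define $\text{diff}_{\hat p}=c((A_M)_{\hat p})-c((A_K)_{\hat p})$. We then aim to show
\[
\text{diff}_p-\text{diff}_{\hat p}\le O\bigl(\sqrt b\max(EMD,1)\bigr)-\text{diff}_{\hat p}.
\]
Equivalently, it suffices to bound $\text{diff}_p$ directly by moving mass along the optimal transport plan $\pi$ from $\hat p$ to $p$. A convenient form is the cost difference written pointwise. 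With $g(t):=c((A_M)_t)-c((A_K)_t)$, we have $\text{diff}_p=\sum_t g(t)p_t$ and $\text{diff}_{\hat p}=\sum_t g(t)\hat p_t$. Hence
\[
\text{diff}_p-\text{diff}_{\hat p}=\sum_{x,y}\pi(x,y)\bigl(g(y)-g(x)\bigr).
\]
We split into the two cases $K\le M$ and $K>M$.

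\textbf{Case $K\le M$.} Here $g(t)=0$ for $t\le K$. On $(K,M]$ we have $g(t)=t-K-b$, and for $t>M$ we have $g(t)=M-K$. So $g$ is $1$-Lipschitz everywhere except for an upward jump of size $b$ at $M$, and $g(t)\le M-K$. A map $(x,y)$ therefore changes the sum by at most $|x-y|$ (summing to at most $EMD$), plus an extra $b$ when $x\le M<y$. The bad maps are those crossing $M$ upward.
\begin{itemize}
\item Bad maps with $x\le \hat K$ travel distance at least $\sqrt b$. Their total mass is at most $EMD/\sqrt b$, so they contribute $O(\sqrt b\,EMD)$.
\item Bad maps with $x\in(\hat K,M]$ carry total mass at most $\sum_{\hat K<t\le \hat K+\sqrt b}\hat p_t\le \hat Q_{\hat K}/\sqrt b$, by Claim~\ref{claim:if AK is opt policy then K to K+T cannot have too much mass} with $T=\sqrt b$. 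Their contribution is thus at most $\sqrt b\,\hat Q_{\hat K}\le\sqrt b$.
\end{itemize}
It remains to check $\text{diff}_{\hat p}\le O(\sqrt b)$. This follows from Observation~\ref{obs:AK plus root b is not too bad than AK} together with optimality of $A_{\hat K}$ for $\hat p$:
\[
c((A_M)_{\hat p})\le c((A_{\hat K})_{\hat p})+\sqrt b\le c((A_K)_{\hat p})+\sqrt b .
\]
Combining gives $\text{diff}_p\le O(\sqrt b\max(EMD,1))$.

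\textbf{Case $K>M$.} Here $\text{diff}_{\hat p}$ may be very negative, so we keep it on the right-hand side, as in Lemma~\ref{lm:phat Ainf and p AK with EMD O1}. Now $g(t)=0$ for $t\le M$, $g(t)=M+b-t$ on $(M,K]$, and $g(t)=M-K$ for $t>K$. So $g$ is $1$-Lipschitz except for a downward jump of size $b$ at $M$, with $g\le b$. The bad maps now cross $M$ downward: $x\in(M,K]$, $y\le M$. Such a map increases the sum by $g(y)-g(x)=-(M+b-x)$ in $g$-difference terms, i.e.\ it removes the negative-benefit $M+b-x\ge 0$ contributed through $\hat p$.
\begin{itemize}
\item Maps from $x>M+\sqrt b$ carry mass at most $EMD/\sqrt b$ and contribute $O(\sqrt b\,EMD)$.
\item For $x\in(M,M+\sqrt b]$, the increase is at most $\sum_{M<t\le M+\sqrt b}(M+b-t)\hat p_t$.
\end{itemize}
We must show that this quantity is at most $O(\sqrt b)-\text{diff}_{\hat p}$. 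This means dominating $-\text{diff}_{\hat p}=c((A_K)_{\hat p})-c((A_M)_{\hat p})$ plus the local mass near $M$. We use optimality of $A_{\hat K}$ against $A_K$ in the manner of \eqref{eq:dagger}, namely $c((A_{\hat K})_{\hat p})\le c((A_{K})_{\hat p})$. Combining with $\eqref{eq:triangle}$, we can write
\[
c((A_K)_{\hat p})-c((A_M)_{\hat p})\ge -\bigl(c((A_M)_{\hat p})-c((A_{\hat K})_{\hat p})\bigr).
\]
The right-hand side expands as $-\sum_{i=\hat K}^{M-1}\hat Q_i+b(\hat Q_{\hat K}-\hat Q_M)$. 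Using Claim~\ref{claim:if AK is opt policy then K to K+T cannot have too much mass} with $T=2\sqrt b$, the mass $\hat p$ places on $(\hat K,\hat K+2\sqrt b]$ is at most $2\hat Q_{\hat K}/\sqrt b$. Then the bad term satisfies
\[
\sum_{M<t\le M+\sqrt b}(M+b-t)\hat p_t\le b\cdot 2/\sqrt b=O(\sqrt b),
\]
and $-\text{diff}_{\hat p}$ is at least $-\sqrt b$ by Observation~\ref{obs:AK plus root b is not too bad than AK}. This closes the case.

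\textbf{Main obstacle.} The main obstacle is the second case: choosing the right regions and tracking signs so that the bad mass near $M$ is controlled by Claim~\ref{claim:if AK is opt policy then K to K+T cannot have too much mass}, while the good maps are absorbed into $EMD$.
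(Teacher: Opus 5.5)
Your proposal has a genuine gap: the piecewise description of $g(t)=c((A_M)_t)-c((A_K)_t)$ is wrong, and the error hides the hard part of the lemma.

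Besides the jump at $M$, $g$ jumps by about $b$ at the \emph{true} threshold $K$. For $K<M$ one has $g(K)=0$ but $g(K+1)=1-b$. For $K>M$ the jump at $M$ is upward ($g(M)=0$, $g(M+1)=b-1$), and $g$ drops from $M+b-K$ to $M-K$ between $K$ and $K+1$.

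Consequently the maps you call bad when $K>M$ (from $(M,K]$ to $[1,M]$) are harmless: they never cost more than their transport distance. Meanwhile, in both cases you miss the maps carrying $\hat p$-mass from just above $K$ to $K$ or below. Each such map raises $\text{diff}_p-\text{diff}_{\hat p}$ by up to about $b$ per unit of mass while costing as little as $1$ per unit in $EMD$.

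Concretely, take $1\le K<b/2-1$ and a far point $F>K+1+b$. Let $\hat p$ put mass $\tfrac12$ on each of $K+1$ and $F$, and let $p$ put mass $\tfrac12$ on each of $K$ and $F$. Then $\hat K=K+1$, $OPT_p=A_K$, $EMD(\hat p,p)=\tfrac12$, and no mass crosses $M$. Your accounting therefore gives $\text{diff}_p-\text{diff}_{\hat p}\le\tfrac12$, but in fact $\text{diff}_p-\text{diff}_{\hat p}=\tfrac{b-1}{2}$. The lemma survives here only because $\text{diff}_{\hat p}=1+\tfrac{\sqrt b}{2}-\tfrac b2$ is hugely negative. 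In both of your cases you end up using only $\text{diff}_{\hat p}\le\sqrt b$, which throws that negativity away.

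The missing step is to charge these maps to $-\text{diff}_{\hat p}$. Their long-range part costs $O(\sqrt b\,EMD)$ as usual. Their short-range part moves mass out of $(K,K+\sqrt b]$ and can contribute up to $b\sum_{K<t\le K+\sqrt b}\hat p_t=\Theta(b)$. Claim~\ref{claim:if AK is opt policy then K to K+T cannot have too much mass} cannot help here, because it controls mass just after $\hat K$, not just after $K$.

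What works is comparing $A_{\hat K}$ with $A_{K+\sqrt b}$ under $\hat p$, which is the paper's inequality $(\ast)$. Since $K+b-t\ge b-\sqrt b$ for $t\le K+\sqrt b$,
\begin{align*}
b\sum_{K<t\le K+\sqrt b}\hat p_t
&\le c((A_K)_{\hat p})-c((A_{K+\sqrt b})_{\hat p})+2\sqrt b\\
&\le c((A_K)_{\hat p})-c((A_{\hat K})_{\hat p})+2\sqrt b
\le -\text{diff}_{\hat p}+3\sqrt b,
\end{align*}
where the last step is Observation~\ref{obs:AK plus root b is not too bad than AK} applied at $\hat K$.

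Add this to your Claim-based treatment of the maps crossing $M$ upward, which is correct but needed in both cases (you omitted it when $K>M$). Together they give $\text{diff}_p\le O(\sqrt b\max(EMD(\hat p,p),1))$. This is exactly how the paper handles its maps $\mathcal{B}^2_2$ in Case~1 and $\mathcal{B}^3_2$ in Case~2.
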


\begin{proof}
    We want to show that $\text{diff}_p = c((A_{\hat{K}+\sqrt{b}})_p) - c((A_K)_p) \leq O(\sqrt{b})$. Consider $\text{diff}_{\hat{p}} = c((A_{\hat{K}+\sqrt{b}})_{\hat{p}}) - c((A_K)_{\hat{p}})$. Since $A_{\hat{K}}$ is the optimal policy for $\hat{p}$, we have $c((A_{\hat{K}})_{\hat{p}}) - c((A_K)_{\hat{p}}) \leq 0$. By Observation~\ref{obs:AK plus root b is not too bad than AK}, we have 
    \[\text{diff}_{\hat{p}} = c((A_{\hat{K}+\sqrt{b}})_{\hat{p}}) - c((A_K)_{\hat{p}}) \leq c((A_{\hat{K}})_{\hat{p}}) + \sqrt{b}- c((A_K)_{\hat{p}}) \leq \sqrt{b}.\]
    Now, we want to find the bad maps in an optimal transport plan. We write down
    \[\text{diff}_{\hat{p}} = \left( \sum_{t \leq \hat{K}+\sqrt{b}} t \hat{p}_t + \sum_{t>\hat{K}+\sqrt{b}}(\hat{K}+\sqrt{b}+b)\hat{p}_t \right) - \left( \sum_{t \leq K} t \hat{p}_t + \sum_{t>K}(K+b)\hat{p}_t \right).\]
    We separate into two cases by comparing $\hat{K}+\sqrt{b}$ and $K$.

    \textbf{Case 1.} When $\hat{K}+\sqrt{b} < K$.

    In this case, by some simple calculations, we have 
    \begin{align*}
    \text{diff}_{\hat{p}} =& \sum_{\hat{K}+\sqrt{b}< t \leq K}(\hat{K}+\sqrt{b}+b-t)\hat{p}_t - \sum_{t > K}(K-(\hat{K}+\sqrt{b}))\hat{p}_t\\
    =& \sum_{t \leq \hat{K}+\sqrt{b}}(K-(\hat{K}+\sqrt{b}))\hat{p}_t + \sum_{\hat{K}+\sqrt{b}< t \leq K}(K+b-t)\hat{p}_t - (K-(\hat{K}+\sqrt{b})).
    \end{align*}
    By examining different kind of maps, one can check that there are two kinds of bad maps: either from $[1,\hat{K}+\sqrt{b}]$ to $(\hat{K}+\sqrt{b},K]$ (bad maps of the first kind $\mathcal{B}^1$), or from $(K,K+b]$ to $(\hat{K}+\sqrt{b},K]$ (bad maps of the second kind $\mathcal{B}^2$). We will analyze those bad maps of the first kind using the first expression of $\text{diff}_{\hat{p}}$ and analyze those bad maps of the second kind using the second expression in the above formula.

    For the set of bad maps of the first kind which are from $[1,\hat{K}]$ to $(\hat{K}+\sqrt{b},K]$, we denote it as $\mathcal{B}^1_1$. We know that maps in $\mathcal{B}^1_1$ has nothing to do with $t>K$. Moreover, since the moving distance for such map is at least $\sqrt{b}$, the total mass moving by such bad maps is at most $O(\frac{EMD}{\sqrt{b}})$. Therefore,
    $$\left. \text{diff}_p - \text{diff}_{\hat{p}}\right|_{\mathcal{B}^1_1} = \sum_{\hat{K}+\sqrt{b}< t \leq K}(\hat{K}+\sqrt{b}+b-t)\left.(p_t-\hat{p}_t) \right|_{\mathcal{B}^1_1} \leq b \cdot  \left. \sum_{ \hat{K}+\sqrt{b}<t \leq K} \epsilon_t \right|_{\mathcal{B}^1_1} \leq O(\sqrt{b}\cdot EMD),$$
    where $\left. \epsilon_t \right|_{\mathcal{B}^1_1} = \left. p_t - \hat{p}_t \right|_{\mathcal{B}^1_1}$ is the mass change by all such bad maps in $\mathcal{B}^1_1$ at $t$ for any point $\hat{K}+\sqrt{b}<t \leq K$.

    Then, let us consider the set of bad maps $\mathcal{B}^1_2$ from $(\hat{K},\hat{K}+\sqrt{b}]$ to $(\hat{K}+\sqrt{b},K]$. We similarly define $\left. \epsilon_t \right|_{\mathcal{B}^1_2} = \left. p_t - \hat{p}_t \right|_{\mathcal{B}^1_2}$ as the mass change by all such bad maps in $\mathcal{B}^1_2$ at~$t$ for any point $\hat{K}+\sqrt{b}<t \leq K$.

    Since we are considering bad maps of the form $\mathcal{B}^1_2$, we know that $\left. \sum_{ \hat{K}+\sqrt{b}<t \leq K} \epsilon_t \right|_{\mathcal{B}^1_2}$ is at most $\sum_{\hat{K}<t \leq \hat{K}+\sqrt{b}}\hat{p}_t$. Therefore, 
    $$\left. \text{diff}_p - \text{diff}_{\hat{p}}\right|_{\mathcal{B}^1_2} = \sum_{\hat{K}+\sqrt{b}< t \leq K}(\hat{K}+\sqrt{b}+b-t)\left. \epsilon_t \right|_{\mathcal{B}^1_2} \leq b \cdot  \left. \sum_{ \hat{K}+\sqrt{b}<t \leq K} \epsilon_t \right|_{\mathcal{B}^1_2} \leq b \sum_{\hat{K}<t \leq \hat{K}+\sqrt{b}}\hat{p}_t.$$
    Again, this is bounded by $O(\sqrt{b})$ by taking $T=\sqrt{b}$ in Claim~\ref{claim:if AK is opt policy then K to K+T cannot have too much mass}.

    For the set of bad maps of the second kind which are from $(K+\sqrt{b},K+b]$ to $(\hat{K}+\sqrt{b},K]$, we denote it as $\mathcal{B}^2_1$. We know that maps in $\mathcal{B}^2_1$ has nothing to do with $t<\hat{K}+\sqrt{b}$. Moreover, since the moving distance for such map is at least $\sqrt{b}$, the total mass moving by such bad maps is at most $O(\frac{EMD}{\sqrt{b}})$. Therefore,
    \begin{align*}
    \left. \text{diff}_p - \text{diff}_{\hat{p}}\right|_{\mathcal{B}^2_1} &= \sum_{\hat{K} + \sqrt{b}< t \leq K}(K+b-t)\left.(p_t-\hat{p}_t) \right|_{\mathcal{B}^2_1} \\
    &\leq \sum_{\hat{K} + \sqrt{b}< t \leq K}(K-t)\left.\epsilon_t \right|_{\mathcal{B}^2_1} + \sum_{\hat{K} + \sqrt{b}< t \leq K}b\left.\epsilon_t \right|_{\mathcal{B}^2_1} \leq O(\sqrt{b}\cdot EMD)
    \end{align*}
    where $\left. \epsilon_t \right|_{\mathcal{B}^2_1} = \left. p_t - \hat{p}_t \right|_{\mathcal{B}^2_1}$ is the mass change by all such bad maps in $\mathcal{B}^2_1$ at $t$ for any point $\hat{K} + \sqrt{b}< t \leq K$. The last inequality holds since $\sum_{\hat{K} + \sqrt{b}< t \leq K}(K-t)\left.\epsilon_t \right|_{\mathcal{B}^2_1}$ is bounded by $EMD(\hat{p},p)$ and $\sum_{\hat{K} + \sqrt{b}< t \leq K}b\left.\epsilon_t \right|_{\mathcal{B}^2_1} \leq O(b \cdot \frac{EMD}{\sqrt{b}}) = O(\sqrt{b} \cdot EMD)$.

    Then, let us consider the set of bad maps $\mathcal{B}^2_2$ from $(K,K+\sqrt{b}]$ to $(\hat{K}+\sqrt{b},K]$. This is a bit complicated. First, recall that in $\hat{p}$, $A_{\hat{K}}$ is the optimal policy, so $c((A_{\hat{K}})_{\hat{p}}) \leq c((A_{K+\sqrt{b}})_{\hat{p}})$ (note that in case 1, $\hat{K}<K+\sqrt{b}$). We have
    \begin{align}
    & c((A_{\hat{K}})_{\hat{p}}) \leq c((A_{K+\sqrt{b}})_{\hat{p}}) \notag \\
    \iff & \sum_{t \leq  K+\sqrt{b}} (K+\sqrt{b}+b-t)\hat{p}_t \leq (K+\sqrt{b}-\hat{K}) +  \sum_{t \leq \hat{K}}(\hat{K}+b -t )\hat{p}_t \notag \\
    \iff & \sum_{K<t \leq  K+\sqrt{b}} (K+\sqrt{b}+b-t)\hat{p}_t \leq (K+\sqrt{b}-\hat{K}) +  \sum_{t \leq \hat{K}}(\hat{K}+b -t )\hat{p}_t - \sum_{t \leq  K} (K+\sqrt{b}+b-t)\hat{p}_t. \quad \label{eq:ast} \tag{\(\ast\)}
    \end{align}
    We similarly define $\left. \epsilon_t \right|_{\mathcal{B}^2_2} = \left. p_t - \hat{p}_t \right|_{\mathcal{B}^2_2}$ as the mass change by all such bad maps in $\mathcal{B}^2_2$ at~$t$ for any point $\hat{K}+\sqrt{b}<t \leq K$.

    Since we are considering bad maps of the form $\mathcal{B}^2_2$, we know that $\left. \sum_{ \hat{K}+\sqrt{b}<t \leq K} \epsilon_t \right|_{\mathcal{B}^2_2}$ is at most $\sum_{K<t \leq K+\sqrt{b}}\hat{p}_t$. Therefore, 
    $$\left. \text{diff}_p - \text{diff}_{\hat{p}}\right|_{\mathcal{B}^2_2} = \sum_{\hat{K} + \sqrt{b}< t \leq K}(K+b-t)\left.\epsilon_t \right|_{\mathcal{B}^2_2} \leq \sum_{\hat{K} + \sqrt{b}< t \leq K}(K-t)\left.\epsilon_t \right|_{\mathcal{B}^2_2} + \sum_{\hat{K} + \sqrt{b}< t \leq K}b\left.\epsilon_t \right|_{\mathcal{B}^2_2}.$$
    Note that term $\sum_{\hat{K} + \sqrt{b}< t \leq K}(K-t)\left.\epsilon_t \right|_{\mathcal{B}^2_2}$ is bounded by $EMD(\hat{p},p)$ and $\sum_{\hat{K} + \sqrt{b}< t \leq K}b\left.\epsilon_t \right|_{\mathcal{B}^2_2}$ is bounded by $b\sum_{K<t \leq K+\sqrt{b}}\hat{p}_t$. Thus, to show that $\text{diff}_p - \text{diff}_{\hat{p}} \leq O(\sqrt{b} \cdot \max(EMD,1)) - \text{diff}_{\hat{p}}$, it suffices to show that $b\sum_{K<t \leq K+\sqrt{b}}\hat{p}_t \leq O(\sqrt{b}) - \text{diff}_{\hat{p}}$.

    Note that $b\sum_{K<t \leq K+\sqrt{b}}\hat{p}_t$ is less than the left hand side of~\eqref{eq:ast}, hence it suffices to show that
    \[(K+\sqrt{b}-\hat{K}) +  \sum_{t \leq \hat{K}}(\hat{K}+b -t )\hat{p}_t - \sum_{t \leq  K} (K+\sqrt{b}+b-t)\hat{p}_t \leq O(\sqrt{b}) - \text{diff}_{\hat{p}}.\]
    By writing $\text{diff}_{\hat{p}}=\sum_{t \leq \hat{K}+\sqrt{b}}(K-(\hat{K}+\sqrt{b}))\hat{p}_t + \sum_{\hat{K}+\sqrt{b}< t \leq K}(K+b-t)\hat{p}_t - (K-(\hat{K}+\sqrt{b}))$, it suffices to show that (rearranging to make both sides containing positive terms only)
    \[\sum_{t \leq \hat{K}}(\hat{K}+b -t )\hat{p}_t + \sum_{t \leq \hat{K}+\sqrt{b}}(K-(\hat{K}+\sqrt{b}))\hat{p}_t + \sum_{\hat{K}+\sqrt{b}< t \leq K}(K+b-t)\hat{p}_t \leq O(\sqrt{b}) + \sum_{t \leq  K} (K+\sqrt{b}+b-t)\hat{p}_t. \quad \label{eq:chap} \tag{\(\S\)} \]
    Now, compare both sides of~\eqref{eq:chap} for all $t$.

    For $t \leq \hat{K}$, $LHS$ of~\eqref{eq:chap} is $(\hat{K}+b-t+K-(\hat{K}+\sqrt{b}))\hat{p}_t = (b+K-t-\sqrt{b})\hat{p}_t$. $RHS$ of~\eqref{eq:chap} is $(K+\sqrt{b}+b-t)\hat{p}_t$. So, $LHS \leq RHS$.

    For $\hat{K}<t \leq \hat{K}+\sqrt{b}$, $LHS$ of~\eqref{eq:chap} is $(K-(\hat{K}+\sqrt{b}))\hat{p}_t$. $RHS$ of~\eqref{eq:chap} is $(K+\sqrt{b}+b-t)\hat{p}_t$. So, $LHS \leq RHS$ since $t \leq \hat{K}+\sqrt{b}$.

    For $\hat{K}+\sqrt{b}<t \leq K$, $LHS$ of~\eqref{eq:chap} is $(K+b-t)\hat{p}_t$. $RHS$ of~\eqref{eq:chap} is $(K+\sqrt{b}+b-t)\hat{p}_t$. So, $LHS \leq RHS$.

    Therefore,~\eqref{eq:chap} holds. We proved that $\text{diff}_p \leq O(\sqrt{b} \cdot \max(EMD,1))$ and finish the proof of Case 1.

    \textbf{Case 2.} When $K < \hat{K} + \sqrt{b}$. 

    We will briefly give the short proof here as the full procedure is similar to Case 1.

    In this case, by some simple calculations, we have 
    \begin{align*}
    \text{diff}_{\hat{p}} =& \sum_{t> \hat{K}+\sqrt{b}}(\hat{K}+\sqrt{b}-K)\hat{p}_t - \sum_{K < t \leq \hat{K}+\sqrt{b}}(K+b-t)\hat{p}_t\\
    =& (\hat{K}+\sqrt{b}-K) - \sum_{t \leq K}(\hat{K}+\sqrt{b}-K)\hat{p}_t - \sum_{K<t \leq \hat{K}+\sqrt{b}}(\hat{K}+\sqrt{b}+b-t)\hat{p}_t.
    \end{align*}
    By examining different kind of maps, one can check that there are two kinds of bad maps: either from $(\hat{K},\hat{K}+\sqrt{b}]$ to $[1,K]$ (bad maps of the third kind $\mathcal{B}^3$), or from $(\hat{K},\hat{K}+\sqrt{b}]$ to $(\hat{K}+\sqrt{b},\hat{K}+\sqrt{b}+b]$ (bad maps of the fourth kind $\mathcal{B}^4$). We will analyze those bad maps of the third kind using the first expression of $\text{diff}_{\hat{p}}$ and analyze those bad maps of the fourth kind using the second expression in the above formula.

    As always, we can define $\mathcal{B}^3_1$ and $\mathcal{B}^4_1$ respectively for those cases since the moving distance is at least $\sqrt{b}$ by these bad maps. We know that the total mass moving by such bad maps is at most $O(\frac{EMD}{\sqrt{b}})$, thus we have the $O(\sqrt{b} \cdot EMD)$ bound. 

    For those maps with a moving distance less than $\sqrt{b}$ by bad maps, we usually denote them as $\mathcal{B}^3_2$ and $\mathcal{B}^4_2$ in previous arguments. When writing down the expression $\text{diff}_p - \text{diff}_{\hat{p}}$ caused by $\mathcal{B}^3_2$ and $\mathcal{B}^4_2$, bad maps of the form $\mathcal{B}^4_2$ lie in the easier case. One can again use Claim~\ref{claim:if AK is opt policy then K to K+T cannot have too much mass} to get the required $O(\sqrt{b} \cdot EMD)$ bound, by taking $T=\sqrt{b}$. As for bad maps in $\mathcal{B}^3_2$, we consider $c((A_{\hat{K}})_{\hat{p}}) \leq c((A_{K+\sqrt{b}})_{\hat{p}})$. Then~\eqref{eq:ast} holds in exactly the same way. Then, using~\eqref{eq:ast} and $b\sum_{K<t \leq K+\sqrt{b}} \hat{p}_t$ as a bridge, we can write down the similar form as~\eqref{eq:chap}. Finally, we can conclude that $\text{diff}_p - \text{diff}_{\hat{p}} \leq O(\sqrt{b} \cdot \max(EMD,1)) -\text{diff}_{\hat{p}}$. This finishes the proof of Case 2 (hence the lemma).
\end{proof}

\begin{proof}[Proof of Theorem~\ref{thm:main}]
  Theorem~\ref{thm:main} follows directly from the combination of Lemma~\ref{lm:phat Ainf and p AK with EMD O1} to Lemma~\ref{lm:phat is AKhat and p is AK root b loss}.
\end{proof}

\section{Distributional prediction with a point truth}
\label{appendix:point_truth}

In this section we prove the following theorem:

\begin{restatable}{thm}{pointtruth} \label{thm:pointtruth}
    If $p$ is a point distribution, then there is a deterministic algorithm $ALG$ which takes a distributional prediction $\hat p$ and has $c(ALG_p) - c(OPT_p) \leq O(EMD(\hat p, p))$.
\end{restatable}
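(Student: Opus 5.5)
The plan is to collapse the distributional prediction $\hat p$ to a single well-chosen point and then run the simple non-robust point-prediction rule of \citet{kumar2024improvingonlinealgorithmsml}. Concretely, let $m$ be a median of $\hat p$, meaning $\Pr_{X\sim\hat p}[X\ge m]\ge 1/2$ and $\Pr_{X\sim\hat p}[X\le m]\ge 1/2$. The algorithm is deterministic: if $m\ge b$ it runs $A_0$ (buy immediately), and otherwise it runs $A_\infty$ (rent forever). Let $T$ be the point truth, so $p$ is the point mass at $T$ and $c(OPT_p)=\min(T,b)$.

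The first step is to show $|m-T|\le 2\,EMD(\hat p,p)$. When $p$ is a point mass, every coupling of $\hat p$ and $p$ is forced: all mass of $\hat p$ must be transported to $T$. Hence $EMD(\hat p,p)=\mathbb{E}_{X\sim\hat p}|X-T|$. Suppose $T<m$. With probability at least $1/2$ we have $X\ge m$, and on that event $|X-T|\ge m-T$. Therefore $\mathbb{E}|X-T|\ge (m-T)/2$. The case $T>m$ is symmetric, using $\Pr[X\le m]\ge 1/2$. This is the only step where the distributional nature of the prediction matters. A mean would not work here, since a tiny mass placed very far away can shift the mean arbitrarily at little EMD cost relative to the truth; the median is robust to this.

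The second step is the standard point-prediction bound $c(ALG_T)-\min(T,b)\le |m-T|$, which I would verify by a two-case check.

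If $m\ge b$, the algorithm pays $b$. The loss is $0$ when $T\ge b$, and it is $b-T\le m-T$ when $T<b$.

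If $m<b$, the algorithm pays $T$. The loss is $0$ when $T\le b$, and it is $T-b< T-m$ when $T>b$.

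Combining the two steps gives $c(ALG_p)-c(OPT_p)\le |m-T|\le 2\,EMD(\hat p,p)=O(EMD(\hat p,p))$.

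There is no real obstacle. The one subtle point is the identity $EMD(\hat p,p)=\mathbb{E}_{\hat p}|X-T|$ for a point truth, together with choosing the median rather than the mean or a sample. Choosing the median is exactly what makes the algorithm deterministic and still gives a constant factor; sampling $\hat T\sim\hat p$ would give the same bound only in expectation.
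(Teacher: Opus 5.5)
Your proof is correct, and it takes a different and much shorter route than the paper.

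\textbf{The paper's route.} The paper's point-truth algorithm starts from the optimal policy $A_K$ for $\hat p$.
\begin{itemize}
\item If that policy is $A_0$ or $A_\infty$, it follows it, with loss at most $EMD$ by the centroid bound.
\item Otherwise it branches on $K\ge b$ versus $K<b$ and on prefix masses of $\hat p$ against thresholds $1/3$ and $0.025$, and in one sub-case runs the intermediate policy $A_K$.
\end{itemize}
The analysis is a long case check driven by optimality inequalities such as $\sum_{t\le K}\hat p_t\ge K/(K+b)$, and it ends with fairly large constants.

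\textbf{Your route.} You collapse $\hat p$ to its median $m$ and use the two-case point-prediction bound $c(ALG_T)-\min(T,b)\le |m-T|$, together with $|m-T|\le 2\,\mathbb{E}_{\hat p}|X-T| = 2\,EMD(\hat p,p)$. This gives constant $2$ in a few lines. Because it is a black-box reduction to a point prediction, any point-prediction guarantee transfers immediately with $\eta$ replaced by $2\,EMD(\hat p,p)$. That includes the $\lambda$-robust deterministic algorithm of Kumar et al., whereas the paper omits the robust version. Restricting to $A_0$ and $A_\infty$ loses nothing, since for a point truth one of these two is always optimal.

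\textbf{The aside about the mean is wrong.} It is not used in your proof, but the mean does work. For a point truth, $|\mathbb{E}_{\hat p}X - T|\le \mathbb{E}_{\hat p}|X-T| = EMD(\hat p,p)$; this is exactly the centroid bound the paper proves in its appendix. A mass $\epsilon$ placed at distance $D$ shifts the mean by $\epsilon D$, but it also contributes $\epsilon D$ to $EMD$, so the shift is never cheap relative to the truth.

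Thresholding the mean at $b$ is the same as choosing the cheaper of $A_0$ and $A_\infty$ under $\hat p$. The paper's proof for the case $OPT_{\hat p}\in\{A_0,A_\infty\}$ uses only that the chosen policy is the cheaper of those two under $\hat p$. So that proof already gives loss at most $EMD(\hat p,p)$ for this simpler rule, with constant $1$. This also makes the rest of the paper's case analysis unnecessary.
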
 

In other words, we will give a new algorithm that achieves the same bound as in~\cite{kumar2024improvingonlinealgorithmsml} deterministically without sampling, i.e., we derandomize the point prediction based algorithm.

First, we give Algorithm~\ref{alg:point truth algo}, in the setting that we have a distributional prediction $\hat{p}$ and a point truth $T$.

\begin{algorithm}
\caption{Deterministic algorithm with additive loss $O(\mathrm{EMD})$ in the $\hat{p}$--$T$ (point truth) setting}
\label{alg:point truth algo}
\begin{algorithmic}[1]

\IF{the optimal policy for $\hat p$ is $A_0$}
    \STATE Buy at the beginning, i.e., $ALG = A_0$

\ELSIF{the optimal policy for $\hat{p}$ is $A_\infty$}
    \STATE Keep renting, i.e., $ALG = A_\infty$

\ELSE
    \IF{$K \geq b$}
        \IF{$\sum_{t \leq b} \hat{p}_t \geq \frac{1}{3}$}
            \STATE Keep renting, i.e., $ALG = A_\infty$
        \ELSE
            \STATE Buy at the beginning, i.e., $ALG = A_0$
        \ENDIF
    \ELSE
        \IF{$\sum_{t \leq K} \hat{p}_t \geq 0.025$}
            \STATE Keep renting, i.e., $ALG = A_\infty$
        \ELSE
            \STATE Buy at the beginning of day $K+1$, i.e., $ALG = A_K$
        \ENDIF
    \ENDIF
\ENDIF

\end{algorithmic}
\end{algorithm}

Algorithm~\ref{alg:point truth algo} distinguishes cases according to the structure of the optimal policy under the predicted distribution $\hat p$. If the optimal policy for $\hat p$ is to buy at the beginning or to keep renting, the algorithm follows this policy. Otherwise, the optimal policy is of the form $A_K$ for some finite $K$. When $K>b$, the algorithm chooses between $A_0$ and $A_\infty$ depending on the prefix mass of $\hat p$ up to time $b$. When $K \le b$, the algorithm compares the prefix mass up to $K$ against a fixed threshold and either keeps renting or uses $A_K$.

Note that in the point-truth setting, the EMD can be written as
\[EMD(\hat{p},T)=\sum_{t} |t-T|\hat{p}_t = \sum_{t<T}(T-t)\hat{p}_t + \sum_{t>T}(t-T)\hat{p}_t,\]
where $\hat{p}$ is our prediction and $T$ is the truth. Since in this appendix the truth is always a point $T$, when there is no ambiguity, we will simply denote $(A_K)_T$ as $A_K$ (also for $A_0$ and $A_\infty$).

For sake of completeness, we give an example that blindly following the optimal policy for prediction $\hat{p}$ as in~\cite{kumar2024improvingonlinealgorithmsml} does not perform well when the truth is a point.

\begin{Example}
\label{blindly follow bad example}
    Let $\hat{p}_t = \begin{cases} 1 - 2^{-n}, & \text{if } t = b-1, \\ 2^{-n}, & \text{if } t = 2b \end{cases}$ where $n$ is sufficiently large and $T=b+1$. Since $c(A_{b-1})=(b-1)(1-2^{-n})+(2b-1)\cdot 2^{-n} = b-1+b \cdot 2^{-n}<b=c(A_0)$ when $n>log(b)$, it turns out that the optimal policy for $\hat{p}$ is $A_{b-1}$. Blindly using $A_{b-1}$ when the truth $T=b+1$ costs $2b-1$, while the optimal cost is $b$ as $T \geq b$. Thus, the difference is $b-1$. On the other hand, one can compute that $EMD=2(1-2^{-n})+(b-1)\cdot 2^{-n}<2+(b-1)\cdot \frac{1}{b}<3$ is a constant. Thus, $\text{diff} \geq \Omega(b\cdot EMD)$.
\end{Example}

Algorithm~\ref{alg:point truth algo} follows the prediction when the optimal policy for $\hat{p}$ is either $A_0$ or $A_\infty$. We have the following lemma.

\begin{lemma}
    \label{phat and p choose from A_0 and Ainf}
    Assume that $\hat{p}$ is the distributional prediction and $T$ is the unknown truth. If the optimal policy for both $\hat{p}$ is restricted to be either $A_0$ or $A_\infty$, then by taking $ALG=OPT_{\hat{p}}$ we have $\text{diff}_T = c(ALG_T)-c(OPT_T) \leq EMD(\hat{p},T)$.
\end{lemma}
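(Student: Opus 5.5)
By symmetry there are two cases: $OPT_{\hat p}=A_0$ and $OPT_{\hat p}=A_\infty$. In each case we write $\text{diff}_T=c(ALG_T)-c(OPT_T)$ explicitly as a function of $T$. We then compare it with the quantity $c(ALG_{\hat p})-c((OPT_T)_{\hat p})\le 0$, which is nonpositive because $ALG=OPT_{\hat p}$. The gap between the two is controlled by $EMD(\hat p,T)=\sum_t |t-T|\hat p_t$. This mirrors the ``$\text{diff}_p-\text{diff}_{\hat p}$'' strategy of Lemma~\ref{lm:phat Ainf and p AK with EMD O1}, but it is much simpler because the truth is a point. The whole transport plan just sends every $\hat p_t$ to $T$.

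\textbf{Case $ALG=A_\infty$.} If $T\le b$, then $OPT_T$ rents and $\text{diff}_T=0$. If $T>b$, then $c(OPT_T)=b$ (buying at the start is optimal, and any $A_K$ costs at least $b$). Hence $\text{diff}_T=T-b$. Optimality of $A_\infty$ for $\hat p$ gives $c((A_\infty)_{\hat p})\le c((A_0)_{\hat p})$, i.e.
\[\sum_t t\hat p_t\le b,\qquad\text{so}\qquad \sum_t (t-b)\hat p_t\le 0.\]
Therefore
\[T-b=\sum_t (T-b)\hat p_t\le \sum_t (T-b)\hat p_t-\sum_t (t-b)\hat p_t=\sum_t (T-t)\hat p_t\le \sum_t |T-t|\hat p_t=EMD(\hat p,T).\]

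\textbf{Case $ALG=A_0$.} Now $c(ALG_T)=b$. If $T\ge b$, then $\text{diff}_T=0$. If $T<b$, then $OPT_T$ rents and $\text{diff}_T=b-T$. Optimality of $A_0$ gives $b\le c((A_\infty)_{\hat p})=\sum_t t\hat p_t$. Hence
\[b-T\le \sum_t (t-T)\hat p_t\le EMD(\hat p,T).\]

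The only care needed is to verify that, for a point truth, the optimal cost is exactly $\min(T,b)$. If the hypothesis is meant to allow $OPT_{\hat p}$ to be only the best among $\{A_0,A_\infty\}$, the same two comparison inequalities are exactly what we used, so the argument is unchanged. I do not expect a real obstacle. The main point is recognizing that the single optimality inequality $\sum_t t\hat p_t$ versus $b$ is exactly what converts $|T-b|$ into an average of $|T-t|$ under $\hat p$.
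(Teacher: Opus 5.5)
Your proof is correct and follows essentially the same route as the paper: split on whether $OPT_{\hat p}$ is $A_0$ or $A_\infty$, use the single optimality comparison between $\sum_t t\hat p_t$ and $b$, and bound the resulting centroid gap by the EMD. The only difference is cosmetic. The paper states the argument for a general truth $p$ whose optimal policy is $A_0$ or $A_\infty$ and invokes the centroid lower bound $\bigl|\sum_t t p_t-\sum_t t\hat p_t\bigr|\le EMD(\hat p,p)$, whereas you inline that bound for a point mass via $\sum_t (T-t)\hat p_t\le\sum_t |T-t|\hat p_t$.
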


\begin{proof}
    For simplicity, in the proof we generalize the point truth $T$ to a distributional truth $p$ whose optimal policy is also either $A_0$ or $A_\infty$. Thus, our statement is a special case, since the optimal policy for the point truth is either $A_0$ or $A_\infty$.
    
    It is trivial when the optimal policies for both $\hat{p}$ and $p$ are $A_0$ or both are $A_\infty$, as we have $\text{diff}_p = 0$. Thus, let us consider the case where the optimal policy for $\hat{p}$ is $A_0$ while that for $p$ is $A_\infty$, and vice versa.  

    When the optimal policy for $\hat{p}$ is $A_0$ while that for $p$ is $A_\infty$, we have $\text{diff}_p = b - \sum_{t} tp_t$. Note that under $\hat{p}$, $b \leq \sum_{t} t \hat{p}_t$. Thus, $\text{diff}_p \leq \sum_{t} t \hat{p}_t - \sum_{t} tp_t \leq EMD(\hat{p},p)$, where the last inequality holds by the centroid lower bound of the $EMD$. For readers who do not familiar with the centroid lower bound, a proof for centroid lower bound of the $EMD$ is deferred to Appendix~\ref{appendix:centroid_proof}.

    When the optimal policy for $\hat{p}$ is $A_\infty$ while that for $p$ is $A_0$, we have $\text{diff}_p = \sum_{t} tp_t - b$. Note that under $\hat{p}$, $\sum_{t} t \hat{p}_t \leq b$. Thus, $\text{diff}_p \leq \sum_{t} tp_t - \sum_{t} t \hat{p}_t \leq EMD(\hat{p},p)$, where the last inequality again holds by the centroid lower bound.
\end{proof}

Lemma~\ref{phat and p choose from A_0 and Ainf} implies that the additive loss can be bounded by $EMD(\hat{p},T)$ when the optimal policy for $\hat{p}$ is either $A_0$ or $A_\infty$ and we blindly follow it. Thus, the only thing left is to show that Theorem~\ref{thm:pointtruth} holds when~$A_K$ is the optimal policy for $\hat{p}$ for some $K \in \mathbb{N}_+$. First, we consider the case where $K \geq b$. We have the following lemma.

\begin{lemma}
    \label{when K>=b}
    Assume that $A_K$ is the optimal policy for $\hat{p}$ where $K \geq b$. Consider $\sum_{t \leq b} \hat{p}_t$, i.e. the total mass in the $b$-prefix of $\hat{p}_t$. If $\sum_{t \leq b} \hat{p}_t \geq \frac{1}{3}$, we set $ALG=A_\infty$. Otherwise, we set $ALG=A_0$. Then, $\text{diff}_T = c(ALG_T)-c(OPT_T) \leq O(EMD(\hat{p},T))$.
\end{lemma}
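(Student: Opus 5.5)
We prove the bound by splitting on the position of the point truth $T$ relative to $b$ and on which policy $ALG$ selects. Since $p$ is a point mass at $T$, $c(OPT_T)=\min(T,b)$ and $EMD(\hat p,T)=\sum_t |t-T|\hat p_t$. The two choices give
\[
c((A_\infty)_T)-c(OPT_T)=\max(0,T-b), \qquad c((A_0)_T)-c(OPT_T)=\max(0,b-T).
\]
So in each case it suffices to show that the relevant one-sided quantity is $O(EMD(\hat p,T))$.

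\textbf{Case $ALG=A_\infty$ (so $\sum_{t\le b}\hat p_t\ge 1/3$).} The loss is $0$ if $T\le b$. If $T>b$, every $t\le b$ satisfies $T-t\ge T-b$. Hence
\[
EMD(\hat p,T)\ge \sum_{t\le b}(T-t)\hat p_t\ge (T-b)\sum_{t\le b}\hat p_t\ge \tfrac13 (T-b),
\]
and the loss $T-b$ is at most $3\,EMD(\hat p,T)$. This case does not use optimality of $A_K$.

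\textbf{Case $ALG=A_0$ (so $\sum_{t> b}\hat p_t>2/3$).} The loss is $0$ if $T\ge b$, so assume $T<b$. Here the loss $b-T$ is compared with
\[
EMD(\hat p,T)\ge \sum_{t>b}(t-T)\hat p_t\ge (b-T)\cdot\tfrac23 .
\]
This gives $b-T\le \tfrac32 EMD(\hat p,T)$.

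Both cases are elementary tail-mass arguments. The threshold $1/3$ only needs to be a constant, so the hypothesis $K\ge b$ is not actually needed for this lemma. Its role is presumably to keep the $K<b$ branch separate in the final theorem.

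The only potential obstacle is boundary handling at $t=b$ versus $t>b$ in the $A_0$ case. When $T<b$, points with $t=b$ also contribute $b-T$ to the EMD, so the strictness of the inequality at $t=b$ is harmless. Combining the two cases yields $\text{diff}_T\le 3\,EMD(\hat p,T)$.
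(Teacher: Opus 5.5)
Your proof is correct, and in the $A_0$ branch it takes a more elementary route than the paper; the $A_\infty$ branch is the same as the paper's.

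\textbf{What the paper does.} It first compares $A_K$ against $A_0$ using optimality of $A_K$. It then uses $K\ge b$ to replace $K+b$ by $2K$, which gives $1\le 2\sum_{t\le b}\hat p_t+\sum_{b<t\le K}\hat p_t$. From this it concludes that when the prefix mass is below $1/3$, the window $(b,K]$ carries mass at least $1/3$. It then lower-bounds $EMD(\hat p,T)$ using only that window.

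\textbf{What you do.} You note that the complement $\{t>b\}$ trivially carries mass more than $2/3$. Every such point is at distance more than $b-T$ from $T<b$, so localizing the mass to $(b,K]$ is unnecessary.

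\textbf{What each route buys.} Your version uses neither optimality of $A_K$ nor $K\ge b$, and the constant in that branch improves from $3$ to $3/2$. The paper's inequality carries extra structural information, and its first step is reused in the next observation for the $K<b$ case. None of that is needed for this lemma.

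Your two cases never use anything about $\hat p$ beyond the threshold test. So the single rule ``rent forever if $\sum_{t\le b}\hat p_t\ge 1/3$, otherwise buy immediately'' already gives $\text{diff}_T\le 3\,EMD(\hat p,T)$ for every prediction. That alone would prove the paper's point-truth $O(EMD)$ theorem without its separate case analysis.
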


\begin{proof}
    Since $A_K$ is the optimal policy for $\hat{p}$, we have $c(A_K) \leq c(A_0)$. Thus,
    \begin{align*}
    c(A_K) \leq b &\iff \sum_{t \leq K} t\hat{p}_t + \sum_{t>K} (K+b)\hat{p}_t \leq b \\
    &\iff \sum_{t \leq K} t\hat{p}_t + (K+b) - (K+b)\sum_{t \leq K} \hat{p}_t \leq b \\
    &\iff K \leq \sum_{t \leq K} (K+b-t) \hat{p}_t = \sum_{t \leq b} (K+b-t) \hat{p}_t + \sum_{b < t \leq K} (K+b-t) \hat{p}_t
    \end{align*}

    Thus, $K \leq (K+b)\sum_{t \leq b}\hat{p}_t + K \sum_{b < t \leq K} \hat{p}_t \leq 2K \sum_{t \leq b}\hat{p}_t + K \sum_{b < t \leq K} \hat{p}_t$. The first inequality holds since $K+b-t<K$ when $b<t$, and the second holds since $K \geq b$. Dividing $K$ on both sides, we get $1 \leq 2\sum_{t \leq b}\hat{p}_t + \sum_{b < t \leq K} \hat{p}_t$.

    The above inequality guarantees that $\sum_{t \leq b}\hat{p}_t \geq \frac{1}{3}$ or $\sum_{b < t \leq K} \hat{p}_t \geq \frac{1}{3}$, or both. Now, if~$\sum_{t \leq b}\hat{p}_t \geq \frac{1}{3}$, we have $ALG = A_\infty$. Otherwise, it must be $\sum_{b < t \leq K} \hat{p}_t \geq \frac{1}{3}$, we have $ALG=A_0$.

    For $\sum_{t \leq b}\hat{p}_t \geq \frac{1}{3}$, if $T \leq b$, then our $ALG$ is optimal and $\text{diff}_T$ is zero. If $T>b$, $c(ALG_T)=T$ and $c(OPT_T)=b$, then $\text{diff}_T = T-b$. Thus, we have $\text{diff}_T = T-b \leq O(EMD(\hat{p},T))$ because $EMD(\hat{p},T) \geq (\sum_{t \leq b}\hat{p}_t)\cdot(T-b)\geq \frac{1}{3}(T-b)$.

    For $\sum_{b < t \leq K} \hat{p}_t \geq \frac{1}{3}$, if $T \geq b$, then our $ALG$ is optimal and $\text{diff}_T$ is zero. If $T<b$, $c(ALG_T)=b$ and $c(OPT_T)=T$, then $\text{diff}_T = b-T$. Thus, since $T<b \leq K$, we have $\text{diff}_T = b-T \leq O(EMD(\hat{p},T))$ because $EMD(\hat{p},T) \geq (\sum_{b < t \leq K} \hat{p}_t)\cdot(b-T)\geq \frac{1}{3}(b-T)$.
\end{proof}

Next, we consider the case where $K<b$. Intuitively, a significant amount of mass must be in the $K$-prefix to make the optimal policy $A_K$ choose to rent for a while and then buy on day~$K+1$. The following observation illustrates the minimum amount of mass that needs to be in the~$K$-prefix.

\begin{observation}
    \label{K prefix needs significant mass}
    Assume that $A_K$ is optimal. In the proof of Lemma~\ref{when K>=b}, by comparing the cost between $A_K$ and $A_0$, we have $K \leq \sum_{t \leq K} (K+b-t) \hat{p}_t \leq (K+b)\sum_{t \leq K} \hat{p}_t$. This implies $\sum_{t \leq K} \hat{p}_t \geq \frac{K}{K+b}$.
\end{observation}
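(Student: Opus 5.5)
The plan is to compare the cost of $A_K$ with the cost of $A_0$ under $\hat p$, exactly as in the proof of Lemma~\ref{when K>=b}. Since $A_K$ is optimal for $\hat p$, we have $c((A_K)_{\hat p}) \le c((A_0)_{\hat p}) = b$. We will write $c((A_K)_{\hat p})$ using the identity $\sum_{t>K}\hat p_t = 1 - \sum_{t\le K}\hat p_t$, so that
\[\sum_{t\le K} t\hat p_t + (K+b)\Bigl(1-\sum_{t\le K}\hat p_t\Bigr) \le b .\]
Rearranging gives the intermediate inequality
\[K \le \sum_{t\le K}(K+b-t)\hat p_t ,\]
which is the first inequality asserted in the observation. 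This inequality was already derived in Lemma~\ref{when K>=b}, and that derivation does not use the assumption $K\ge b$.

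Next, every $t\ge 1$ satisfies $K+b-t \le K+b$. Since each $\hat p_t\ge 0$, we can bound termwise:
\[\sum_{t\le K}(K+b-t)\hat p_t \le (K+b)\sum_{t\le K}\hat p_t .\]
Dividing by $K+b>0$ then yields $\sum_{t\le K}\hat p_t \ge \frac{K}{K+b}$.

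There is no real obstacle here. The only point requiring care is that the rearrangement uses the fact that $\hat p$ is a probability distribution, so its total mass is $1$. The termwise bound uses that the support is in $[N]$, so every $t\ge 1$. When $K=0$ the statement is trivial.
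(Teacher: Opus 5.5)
Your proposal is correct and is essentially the paper's own argument. Like the paper, you compare $c(A_K)$ with $c(A_0)=b$ and use $\sum_{t>K}\hat p_t = 1-\sum_{t\le K}\hat p_t$ to get $K \le \sum_{t\le K}(K+b-t)\hat p_t$ (which, as you note, does not need $K\ge b$), then bound each coefficient by $K+b$ and divide by $K+b$.
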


The next lemma considers the case where $K<b$ and there is enough mass in the $K$-prefix.

\begin{lemma}
    \label{K<b and K prefix constant mass}
    Assume that $A_K$ is the optimal policy for $\hat{p}$ where $K<b$ and $\sum_{t \leq K} \hat{p}_t \geq 0.025$. Taking $ALG=A_\infty$, we have $\text{diff}_T = c(ALG_T)-c(OPT_T) \leq O(EMD(\hat{p},T))$.
\end{lemma}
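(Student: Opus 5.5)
With $ALG=A_\infty$ and a point truth $T$, we have $c(ALG_T)=T$ and $c(OPT_T)=\min(T,b)$. So $\text{diff}_T=0$ when $T\le b$, and $\text{diff}_T=T-b$ when $T>b$. The plan is therefore to show that for $T>b$ we have $T-b\le O(EMD(\hat p,T))$, using only the hypotheses that $A_K$ is optimal for $\hat p$, that $K<b$, and that $\sum_{t\le K}\hat p_t\ge 0.025$.

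The key observation is that the mass at times $t\le K<b$ is far to the left of $T$. Each unit of mass at $t\le K$ must travel distance $T-t\ge T-K> T-b$. Using the point-truth form $EMD(\hat p,T)=\sum_t|t-T|\hat p_t$, this gives
\[EMD(\hat p,T)\;\ge\;\sum_{t\le K}(T-t)\hat p_t\;\ge\;(T-K)\sum_{t\le K}\hat p_t\;\ge\;0.025\,(T-K)\;\ge\;0.025\,(T-b).\]
Hence $\text{diff}_T=T-b\le 40\,EMD(\hat p,T)$, which is $O(EMD(\hat p,T))$.

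This is essentially the same argument as the $\sum_{t\le b}\hat p_t\ge\frac13$ branch of Lemma~\ref{when K>=b}. The optimality of $A_K$ is not even needed here; it matters only for the complementary case of small prefix mass. I therefore do not expect a real obstacle. The only points to check are the boundary case $T=b$, which contributes zero loss, and the fact that the constant $0.025$ translates directly into the constant $40$.
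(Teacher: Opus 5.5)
Your proposal is correct and matches the paper's proof essentially line for line: $\text{diff}_T=0$ for $T\le b$, and for $T>b$ the bound $EMD(\hat p,T)\ge 0.025\,(T-K)\ge 0.025\,(T-b)$ gives $\text{diff}_T=T-b\le 40\,EMD(\hat p,T)$. You are also right that the optimality of $A_K$ is not used here, only $K<b$ and the prefix mass.
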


\begin{proof}
    Let $ALG=A_\infty$. If $T \leq b$, then our $ALG$ is optimal and $\text{diff}_T$ is zero. If $T>b$, $c(ALG_T)=T$ and $c(OPT_T)=b$, then $\text{diff}_T = T-b$. Thus, we have $\text{diff}_T = T-b \leq O(EMD(\hat{p},T))$ because $EMD(\hat{p},T) \geq 0.025\cdot(T-K)\geq 0.025 \cdot (T-b)$.
\end{proof}

Finally, we prove the case in our algorithm when there is not enough mass in the $K$-prefix.

\begin{lemma}
    \label{K prefix subconstant mass}
    Assume that $A_K$ is the optimal policy for $\hat{p}$ where $\sum_{t \leq K} \hat{p}_t < 0.025$. Taking $ALG=A_K$, we have $\text{diff}_T = c(ALG_T)-c(OPT_T) \leq O(EMD(\hat{p},T))$.
\end{lemma}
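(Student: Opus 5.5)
Since $A_K$ is optimal for $\hat p$, Observation~\ref{K prefix needs significant mass} gives $\frac{K}{K+b}\le\sum_{t\le K}\hat p_t<0.025$. Hence $K<b/39$, so $K$ is small compared with $b$, and at least a $0.975$ fraction of $\hat p$ lies beyond $K$. Write $Q:=\sum_{t>K}\hat p_t\ge 0.975$ and $\text{EMD}:=EMD(\hat p,T)=\sum_t|t-T|\hat p_t$. I would split into three cases according to where the point truth $T$ falls.

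\emph{Case $T\le K$.} $ALG=A_K$ rents throughout and pays $T$, which is also $OPT_T$. So $\text{diff}_T=0$.

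\emph{Case $T\ge b$.} Here $OPT_T=b$ and $ALG$ pays $K+b$, so $\text{diff}_T=K$. Every unit of mass at $t\le K$ is at distance at least $T-K\ge b-K\ge \tfrac{38}{39}b$ from $T$. Therefore
\[\text{EMD}\ \ge\ \frac{38}{39}\,b\cdot\frac{K}{K+b}\ \ge\ \Omega(K),\]
using $K+b\le 2b$. (If $K=0$ the policy is $A_0$, which is optimal here, and there is nothing to prove.)

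\emph{Case $K<T<b$.} This is the main case. Here $OPT_T=T$ and $ALG$ pays $K+b$, so $\text{diff}_T=D:=K+b-T\in(K,b]$. I would use optimality of $A_K$ against $A_{T}$, as in the derivation inside Claim~\ref{claim:if AK is opt policy then K to K+T cannot have too much mass} with window $s=T-K=b-D\le b$:
\[\sum_{K<t\le T}(K+b-t)\hat p_t\ \le\ (T-K)\sum_{t>T}\hat p_t .\]
Every coefficient on the left is at least $K+b-T=D$. So this gives $D\cdot \hat P(K,T]\le (b-D)\,\hat P(T,\infty)$, where $\hat P$ denotes $\hat p$-mass.

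I would then bound the EMD by splitting the mass beyond $K$ into the part within distance $D/2$ of $T$ and the rest. The part far from $T$ contributes at least $(D/2)$ times its mass to the EMD. The goal is to show that the far part has constant mass. The hard sub-case is when $D$ is small, i.e., $T$ close to $K+b$, where the displayed inequality alone only gives mass of order $D/b$ beyond $T+D/2$ and hence a bound quadratic in $D$. For that regime I would instead compare $A_K$ with $A_{K+b}$, or with $A_0$ as in Observation~\ref{K prefix needs significant mass}, which forces $\sum_{t\le K+b}(K+b-t)\hat p_t\ge K+b\cdot\sum_{t>K+b}\hat p_t\cdot 0$-type inequalities. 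The idea is that a constant fraction of the mass lies either well below $T$ or well above $T$: if too much mass sat within $D/2$ of $T\approx K+b$, then buying at $T+D/2$ would beat $A_K$. Making this linear bound $\text{EMD}\ge\Omega(D)$ rigorous, by choosing the right competing policy ($A_{T+D/2}$ is my first attempt) and combining it with Claim~\ref{claim:if AK is opt policy then K to K+T cannot have too much mass} at window $b-D/2$, is the step I expect to be the main obstacle.

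Combining the three cases gives $\text{diff}_T\le O(EMD(\hat p,T))$.
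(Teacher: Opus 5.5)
\textbf{Verdict: genuine gap.} Your cases $T\le K$ and $T\ge b$ are correct and match the paper's argument. The problem is the main case $K<T<b$, where $\text{diff}_T=D=K+b-T$. You leave it open, and the route you sketch cannot close it.

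\emph{Your intermediate goal is false.} You want a constant fraction of $\hat p$ at distance at least $D/2$ from $T$. This fails when $D=o(b)$. Take $K=0$, $T=b-D$, and let $\hat p$ put mass $1-2D/b$ at $T$ and mass $2D/b$ at $T+b$. Every $A_m$ with $1\le m\le\infty$ costs at least $b+D>b=c(A_0)$, so $A_0$ is optimal and the lemma's hypotheses hold. Yet all but $2D/b$ of the mass sits exactly at $T$. The bound $EMD(\hat p,T)=2D$ comes entirely from a tiny mass that is very far away, and weighting far mass by only $D/2$ cannot detect it.

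\emph{Your candidate comparisons cannot help either.} Every competitor you propose buys by day $K+b$: $A_0$, $A_T$, $A_{T+D/2}=A_{K+b-D/2}$, $A_{K+b}$, and the Claim at any window $\le b$. Now take $K=0$, $T=b-D$, with mass $1-D/b$ at $T$ and $D/b$ at $b+1$. This distribution satisfies $c(A_0)\le c(A_m)$ for every $m\le b$, yet $EMD(\hat p,T)=D(D+1)/b=o(D)$. So no combination of those comparisons gives a linear bound; the quadratic loss you ran into is intrinsic to them. Even for large $D$, your displayed inequality only lower-bounds $\sum_{t>T}\hat p_t$, not the mass far from $T$.

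\emph{The missing idea} is to compare $A_K$ with $A_\infty$. This first-moment inequality is what certifies that the far mass is far enough. Optimality of $A_K$ gives $c((A_K)_{\hat p})\le c((A_\infty)_{\hat p})$, which is equivalent to $\sum_{t>K}(t-K-b)\hat p_t\ge 0$. Hence
\[
EMD(\hat p,T)\;\ge\;\sum_{t>K}(t-T)\,\hat p_t\;=\;\sum_{t>K}(t-K-b)\,\hat p_t\;+\;D\sum_{t>K}\hat p_t\;\ge\;0.975\,D .
\]
This settles the whole case $K<T<b$ in one line.

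The paper uses the same comparison, written as $b\le\sum_t t\hat p_t+(b-c(A_K))$, but only in its subcase $b/2\le T<b$. There it separately bounds $K$ and $b-c(A_K)$ via the prefix mass. For $K<T<b/2$ it instead uses a mass-threshold argument against $A_{3b/4}$.
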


\begin{proof}
    Since $\sum_{t \leq K} \hat{p}_t < 0.025$, by Observation~\ref{K prefix needs significant mass}, we know that $K$ cannot be too large. In particular, $K<b/2$.
    
    If $T \leq K$, our algorithm $ALG=A_K$ is optimal.

    If $K<T<b/2$, $c(ALG_T)=K+b$ and $c(OPT_T)=T$, then $\text{diff}_T = K+b-T$. If $\sum_{t \geq b}\hat{p}_t \geq 0.025$, then we have $\text{diff}_T = K+b-T \leq b \leq O(EMD(\hat{p},T))$ because $EMD(\hat{p},T) \geq 0.025\cdot(b-T)\geq 0.025 \cdot \frac{b}{2}$. Hence, we can assume that both $\sum_{t \leq K}\hat{p}_t < 0.025$ and $\sum_{t \geq b}\hat{p}_t < 0.025$. If $\sum_{\frac{3}{4}b<t<b}\hat{p}_t \geq 0.2$, then $\text{diff}_T \leq b \leq O(EMD(\hat{p},T))$ because $EMD(\hat{p},T) \geq 0.02\cdot(\frac{3}{4}b-T)\geq 0.02 \cdot (\frac{3}{4}b - \frac{1}{2}b)$, and we are done. So, suppose that $\sum_{\frac{3}{4}b<t<b}\hat{p}_t < 0.2$. Then $\sum_{K<t \leq \frac{3}{4}b}\hat{p}_t > 1- 0.2-2\cdot 0.025=0.75$. We will show that the mass in the $\frac{3}{4}b$-prefix is enough to make $A_{\frac{3}{4}b}$ a better policy than $A_K$. In fact,
    \begin{align*}
    c(A_{\frac{3}{4}b}) - c(A_K) =& \left( \sum_{t \leq \frac{3}{4}b} t\hat{p}_t + \sum_{t>\frac{3}{4}b} (\frac{3}{4}b +b)\hat{p}_t \right) -  
    \left( \sum_{t \leq K} t\hat{p}_t + \sum_{t>K} (K +b)\hat{p}_t \right)\\
    =& \sum_{t>\frac{3}{4}b}\left( \frac{3}{4}b - K \right)\hat{p}_t - \sum_{K< t \leq \frac{3}{4}b} \left( K+b-t \right)\hat{p}_t\\
    <& \left( \frac{3}{4}b - K \right) \cdot \sum_{t>\frac{3}{4}b}\hat{p}_t - \left( \frac{1}{4}b + K \right)\cdot \sum_{K< t \leq \frac{3}{4}b}\hat{p}_t \\
    <& \left( \frac{3}{4}b - K \right) \cdot 0.25 - \left( \frac{1}{4}b + K \right)\cdot 0.75 <0,
    \end{align*}
    which contradicts with the fact that $A_K$ is the optimal policy.

    If $b/2 \leq T <b$, $\text{diff}_T = K+b-T$. Note that by Observation~\ref{K prefix needs significant mass} and the fact that the mass in the $K$-prefix is less than 0.025, we have in particular $K < \frac{b}{8}$. First, $K \leq O(EMD(\hat{p},T))$, since
    \[3EMD(\hat{p},T) \geq 3(T-K)\sum_{t \leq K}\hat{p}_t \geq \frac{3(T-K)}{b+K}\cdot K \geq K\]
    where the last inequality holds because $b+4K<\frac{3}{2}b \leq 3T$. Hence, we only need to show that $b-T \leq O(EMD(\hat{p},T))$. Since $c(A_K) \leq c(A_\infty)$, we have 
    \begin{align*}
    b \leq \sum_{t} t\hat{p}_t + \left(b-c(A_K)\right) &\iff 0 \leq \sum_{t<T} (t-b)\hat{p}_t + \sum_{t\geq T} (t-b)\hat{p}_t + (b-c(A_K))\\
    &\iff \sum_{t<T} (b-t)\hat{p}_t \leq \sum_{t\geq T} (t-b)\hat{p}_t + (b-c(A_K)).
    \end{align*}
    Since $b-T<b-t$ for $t<T$ , it implies that 
    \begin{align*}
    & (b-T)\sum_{t<T} \hat{p}_t \leq \sum_{t\geq T} (t-b)\hat{p}_t + (b-c(A_K)) \\
    \iff & b-T \leq \sum_{t\geq T}(t-T)\hat{p}_t+ (b-c(A_K)).
    \end{align*}

    If we can show that $b-c(A_K) \leq 3 \sum_{t\leq K}(T-t)\hat{p}_t$, then $b-T \leq O(EMD(\hat{p},T))$ since
    $b-T \leq 3\sum_{t\geq T}(t-T)\hat{p}_t + 3 \sum_{t\leq K}(T-t)\hat{p}_t \leq 3EMD(\hat{p},T)$, we are done. This holds because 
    \[b-c(A_K) = \sum_{t \leq K}(K+b-t)\hat{p}_t-K \leq \sum_{t \leq K}(K+b)\hat{p}_t \leq \sum_{t \leq K}(3T-3K)\hat{p}_t = 3\sum_{t \leq K}(T-K)\hat{p}_t \leq 3\sum_{t \leq K}(T-t)\hat{p}_t,\]
    where the second inequality holds since $4K+b \leq 3T$ when $T \geq \frac{b}{2}$.

    If $T\geq b$, $c(ALG_T)=K+b$ and $c(OPT_T)=b$, then $\text{diff}_T = K$. Again by Observation~\ref{K prefix needs significant mass}, $EMD(\hat{p},T) \geq \frac{K}{b+K}(T-K)$, which implies that $2EMD(\hat{p},T) \geq \frac{2(T-K)}{b+K}K>K=\text{diff}_T$, where the last inequality holds since $2T>b+3K$. Thus, $\text{diff}_T\leq O(EMD(\hat{p},T))$.

    This proves Lemma~\ref{K prefix subconstant mass}.
\end{proof}

We can now prove Theorem~\ref{thm:pointtruth}.

\begin{proof}[Proof of Theorem~\ref{thm:pointtruth}]
    Theorem~\ref{thm:pointtruth} follows directly from the combination of Lemma~\ref{phat and p choose from A_0 and Ainf} and Lemmas~\ref{when K>=b}, \ref{K<b and K prefix constant mass}, and \ref{K prefix subconstant mass}.
\end{proof}

One can similarly use the same idea to develop a $\lambda$-robust version for Algorithm~\ref{alg:point truth algo} as we did in Algorithm~\ref{alg:new lambda-robustification}. Since this is not the main part of the paper, we omit the proof here.

\section{Proof of the centroid lower bound of the EMD} \label{appendix:centroid_proof}

Here is a proof for the centroid lower bound of the EMD for sake of completeness. One can also find a similar proof in~\cite{centroidlowerbound}.

\begin{thm}[Centroid lower bound for Earth Mover's Distance]
\label{thm:centroid-lower-bound}
Let $p$ and $\hat p$ be two probability distributions supported on $\{1,2,\dots,N\}$, i.e., $\sum_{t=1}^N p_t = \sum_{t=1}^N \hat p_t = 1$. Then we have
\[\left| \sum_{t=1}^N t \, p_t \;-\; \sum_{t=1}^N t \, \hat p_t \right|
\;\le\;
EMD(p,\hat p).\]
\end{thm}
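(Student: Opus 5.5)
We will argue directly from the transport-plan formulation of $EMD$ given in the problem setup. Let $\pi(x,y)\ge 0$ be an optimal transport plan from $p$ to $\hat p$, so that $\sum_y \pi(x,y)=p_x$, $\sum_x \pi(x,y)=\hat p_y$, and $EMD(p,\hat p)=\sum_{x,y}\pi(x,y)\,|x-y|$. Such a plan exists because the support $\{1,\dots,N\}$ is finite: the feasible set of plans is a nonempty compact polytope (it contains the product plan $\pi(x,y)=p_x\hat p_y$), and the objective is linear. This is the same existence fact already used in the proof of Lemma~\ref{lem:EMD-tail}.

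The key step is to rewrite both first moments through the marginals of $\pi$:
\[
\sum_{t} t\,p_t=\sum_{x,y} x\,\pi(x,y), \qquad \sum_{t} t\,\hat p_t=\sum_{x,y} y\,\pi(x,y).
\]
Each identity follows by exchanging the order of summation and applying the corresponding marginal constraint. Subtracting the two gives
\[
\sum_t t\,p_t-\sum_t t\,\hat p_t=\sum_{x,y}(x-y)\,\pi(x,y).
\]

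It then remains to take absolute values and apply the triangle inequality, using $\pi\ge 0$:
\[
\Bigl|\sum_{x,y}(x-y)\pi(x,y)\Bigr|\le \sum_{x,y}|x-y|\,\pi(x,y)=EMD(p,\hat p).
\]
This is exactly the claimed bound.

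No step here is a genuine obstacle. The only point requiring care is the existence of an optimal plan, and even that can be sidestepped. The inequality $\bigl|\sum_t t\,p_t-\sum_t t\,\hat p_t\bigr|\le \sum_{x,y}|x-y|\,\pi(x,y)$ holds for every feasible plan $\pi$, so taking the infimum over all plans yields the statement directly, whether or not the infimum is attained. The argument is also symmetric in $p$ and $\hat p$, so it does not matter which distribution is treated as the source and which as the target.
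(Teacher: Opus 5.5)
Your proposal is correct and follows the paper's own argument: both rewrite the difference of first moments as $\sum_{x,y}(x-y)\,\pi(x,y)$ using the marginal constraints of an arbitrary feasible plan, apply the triangle inequality, and then specialize to the optimal plan. Your remark that taking the infimum over plans avoids needing attainment is a fine touch, though the paper likewise notes the bound holds for every feasible plan.
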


\begin{proof}
Consider any feasible transportation plan $F = (f_{ij})_{i,j \in [N]}$ from $p$ to $\hat p$, i.e., $f_{ij} \ge 0$ for all $i,j$, and
\[\sum_{j=1}^N f_{ij} = p_i \quad\text{and}\quad \sum_{i=1}^N f_{ij} = \hat p_j.\]
We can rewrite the difference of centroids as
\[\sum_{i=1}^N i\,p_i - \sum_{j=1}^N j\,\hat p_j= \sum_{i=1}^N \sum_{j=1}^N f_{ij}\,i - \sum_{j=1}^N \sum_{i=1}^N f_{ij}\,j= \sum_{i=1}^N \sum_{j=1}^N f_{ij}(i-j).\]
Taking absolute values and applying the triangle inequality,
\[ \left| \sum_{i=1}^N \sum_{j=1}^N f_{ij} (i - j) \right| \;\le\; \sum_{i=1}^N \sum_{j=1}^N f_{ij} \, |i - j|.\]
Since this holds for any feasible transportation plan $F$, it in particular holds for the optimal plan achieving $EMD(p,\hat p)$. Thus,
\[ \left| \sum_{i=1}^N i \, p_i - \sum_{j=1}^N j \, \hat p_j \right| \;\le\; RHS =  EMD(p,\hat p), \]
which completes the proof.
\end{proof}

\end{document}